\def\isfinal{1} 
\def\useneurips{1} 
\renewcommand{\abs}[1]{\left|#1\right|}
\newcommand*{\prn}[1]{\left(#1\right)}
\newcommand*{\brx}[1]{\left[#1\right]}
\renewcommand{\set}[1]{\left\{#1\right\}}
\newcommand*{\defeq}{\coloneqq} 
\newcommand*{\eqdef}{\eqqcolon} 
\newcommand*{\x}{\mathbf{x}} 
\newcommand*{\av}{\mathbf{a}}
\newcommand*{\bv}{\mathbf{b}}
\newcommand*{\prnNotEmpty}[1]{\ifthenelse{\isempty{#1}}
    {}
    {\prn{#1}}
}
\newcommand*{\reals}{\mathbb{R}}
\DeclareMathOperator*{\argmax}{arg\,max}
\newcommand*{\unitvec}[1][s]{\mathbf{e}_{#1}}
\newcommand*{\permute}[1][\phi]{\mathbf{P}_{#1}} 
\newcommand*{\upsc}[1]{\text{\upshape\textsc{#1}}} 
\newcommand*{\mdp}[1][m]{{\textsc{#1dp}}}
\newcommand*{\rsd}[1][r]{{\textsc{#1sd}}}
\newcommand*{\rl}[1][r]{{\textsc{#1l}}}
\NewDocumentCommand{\iid}{}{\textsc{iid}}
\newcommand*{\St}{\mathcal{S}}
\newcommand*{\A}{\mathcal{A}}
\newcommand*{\rf}{\mathbf{r}} 
\newcommand*{\rewardSpace}{\reals^{\St}} 
\newcommand*{\rewardVS}{\reals^{\abs{\St}}} 
\newcommand{\stateEnd}{1-cycle}
\newcommand*{\terminal}{terminal}
\NewDocumentCommand{\Vf}{O{*}O{R}m}{V^{#1}_{#2} \prn{#3}} 
\NewDocumentCommand{\VfNoResize}{O{*}O{R}m}{V^{#1}_{#2} (#3)} 
\NewDocumentCommand{\VfNorm}{O{*}O{R}m}{\Vf[#1][#2,\,\text{norm}]{#3}}
\NewDocumentCommand{\OptVf}{O{R}m}{\Vf[*][#1]{#2}}
\NewDocumentCommand{\piSet}{O{}O{R,\gamma}}{\Pi^{#1}\prn{#2}}
\newcommand*{\optPi}{\piSet[*]}
\newcommand*{\average}[1][R]{\piSet[\text{avg}][#1]}
\newcommand*{\reach}[2][]{\upsc{Reach}_{#1}\prn{#2}}
\newcommand*{\Dist}{X} 
\newcommand*{\D}{\mathcal{D}} 
\DeclareMathOperator{\optSupp}{supp}
\NewDocumentCommand{\supp}{O{\D}}{\optSupp(#1)}
\NewDocumentCommand{\geqMost}{O{}O{\DSetAny}}{\geq_{\text{\upshape {most}}\ifthenelse{\isempty{#2}}{}{\text{\upshape: }#2}}^{#1}}
\NewDocumentCommand{\leqMost}{O{}O{\DSetAny}}{\leq_{\text{\upshape {most}}\ifthenelse{\isempty{#2}}{}{\text{\upshape: }#2}}^{#1}}
\newcommand*{\distSet}{\mathfrak{D}}
\NewDocumentCommand{\Diid}{O{\Dist}}{\D_{#1\text{-}\iid}}
\newcommand*{\DSetBCiid}{\distSet_{\textsc{c/b/}\iid}} 
\newcommand*{\Dany}{\D_{\text{any}}}
\newcommand*{\DSetAny}{\distSet_\text{any}}
\newcommand*{\Dbd}{\D_{\text{bound}}} 
\newcommand*{\DSetBd}{\distSet_\text{bound}}
\newcommand*{\Dcont}{\D_\text{cont}}
\NewDocumentCommand{\vavg}{O{s,\gamma}O{\Dbd}}{\OptVf[#2]{#1}}
\NewDocumentCommand{\vavgNoResize}{O{s,\gamma}O{\Dbd}}{V^*_{#2}(#1)} 
\NewDocumentCommand{\pwrNoDist}{}{\upsc{Power}} 
\NewDocumentCommand{\pwr}{O{}O{\Dbd}}{\upsc{Power}_{#2} \prnNotEmpty{#1}}
\NewDocumentCommand{\pwrNoResize}{O{}O{\Dbd}}{\upsc{Power}_{#2} (#1)}
\DeclareMathOperator*{\Prb}{\mathbb{P}}
\newcommand*{\prob}[2][]{\Prb_{#1}\prn{#2}}
\NewDocumentCommand{\optprob}{O{\D}O{}m}{\noexpandarg\exploregroups\Prb\IfSubStr{\sim}{#1}{}{\nolimits}_{#1}^{#2}\prn{#3}}
\NewDocumentCommand{\avgprob}{O{\D}m}{\optprob[#1]{#2, \text{\upshape average}}}
\NewDocumentCommand{\greedyprob}{O{\D}m}{\optprob[#1]{#2, \text{\upshape greedy}}}
\DeclareMathOperator*{\opE}{\mathbb{E}}
\newcommand*{\E}[2]{\opE_{#1}\brx{#2}} 
\newcommand*{\EX}{\opE\brx{\Dist}}
\NewDocumentCommand{\Edraws}{O{\Dist}m}{\opE\brx{\max \text{ of } #2 \text{ draws from } #1}}
\NewDocumentCommand{\dF}{O{\rf}O{F}}{\dif #2(#1)}
\newcommand*{\f}{\mathbf{f}} 
\newcommand*{\fpi}[2][\pi]{\f^{
\ifthenelse{\isempty{#1}}{}{#1} 
\ifthenelse{\NOT\isempty{#1} \AND \NOT\isempty{#2}}{,}{} 
\ifthenelse{\isempty{#2}}{}{#2}}} 
\DeclareMathOperator{\Fop}{\mathcal{F}}
\NewDocumentCommand{\F}{O{}}{\Fop\prnNotEmpty{#1}}
\newcommand*{\FRestrictAction}[3][s]{\F(#1 \mid \pi(#2)=#3)}
\newcommand*{\ND}[1]{\upsc{ND}\prn{#1}}
\NewDocumentCommand{\phelper}{mO{\D'}}{p_{#2}\prn{#1}}
\DeclareMathOperator{\Fndop}{\mathcal{F}_{nd}} 
\NewDocumentCommand{\Fnd}{O{}}{\Fndop\prnNotEmpty{#1}}
\newcommand*{\FndRestrictAction}[2]{\Fnd(s \mid \pi(#1)=#2)} 
\newcommand*{\NormF}[1]{\upsc{Norm}\prn{#1}}
\newcommand*{\dbf}{\mathbf{d}} 
\newcommand*{\RSD}[1][s]{\upsc{RSD}\prn{#1}} 
\newcommand*{\RSDnd}[1][s]{\upsc{RSD}{\text{\upshape\textsubscript{nd}}}\prn{#1}}
\newcommand*{\RSDndNoResize}[1][s]{\upsc{RSD}{\text{\upshape\textsubscript{nd}}}(#1)}
\newcommand*\colvec[1]{
        \global\colveccount#1
        \protect\begin{pmatrix}
        \colvecnext
}
\def\colvecnext#1{
        #1
        \global\advance\colveccount-1
        \ifnum\colveccount>0
                \\
                \expandafter\colvecnext
        \else
                \protect\end{pmatrix}
        \fi
}
\newcommand*{\pol}[1][R,\gamma]{\text{pol}\prnNotEmpty{#1}}
\NewDocumentCommand{\pwrPol}{O{\pol[]}O{\Dbd}m}{\pwrNoDist^{#1}_{#2} \prnNotEmpty{#3}}
\newcommand*{\lone}[1]{\left \lVert#1\right \rVert_1}
\newcommand*{\linfty}[1]{\left \lVert#1\right \rVert_\infty}
\newcommand*{\geom}[1][1]{\frac{#1}{1-\gamma}}
\newcommand*{\half}{\frac{1}{2}}
\newcommand*{\indic}[1]{\mathbbm{1}_{#1}} 
\newcommand*{\inv}{^{-1}}
\newcommand*{\mdpPermGroup}{S_{\abs{\St}}}
\NewDocumentCommand{\orbi}{O{\Dbd}O{\abs{\St}}}{S_{#2}\cdot #1}
\definecolor{blue}{rgb}{.25,.45,.75}
\definecolor{purple}{rgb}{.4667,.1529,.2118}
\definecolor{green}{rgb}{.294,.624,.294}
\definecolor{red}{rgb}{.75,.25,.25}
\newcommand*{\col}[2]{{\color{#1}#2}}
\newtheorem*{cor-no-num}{Corollary}
\theoremstyle{definition}
\newtheorem*{remark}{Remark}
\newtheorem*{thm*}{Theorem} 
\newcommand*{\dauTemplate}[4][\D]{d_{#1}^{#3}\ifthenelse{\isempty{#2}}
    {}
    {\prn{#2 \mid #4}}}
\newcommand*{\R}[1][]{\mathcal{R}_\text{#1}} 
\newcommand*{\start}{\text{\emoji{house}}}
\newcommand*{\startGraph}{\text{\emoji{house}}}
\newcommand*{\sink}{\text{\emoji{video-game}}}
\newcommand*{\topleft}{\text{\emoji{evergreen-tree}}}
\newcommand*{\farleft}{\text{\emoji{palm-tree}}}
\newcommand*{\closeleft}{\text{\emoji{sunflower}}}
\newcommand*{\topright}{\text{\emoji{star}}}
\newcommand*{\farright}{\text{\emoji{sun}}}
\newcommand*{\closeright}{\text{\emoji{crescent-moon}}}
\newcommand*{\start}{\star}
\newcommand*{\startGraph}{\small\bigstar}
\newcommand*{\sink}{\varnothing}
\newcommand*{\topleft}{\ell_{\nwarrow}}
\newcommand*{\farleft}{\ell_{\swarrow}}
\newcommand*{\closeleft}{\ell_{\triangleleft}}
\newcommand*{\topright}{r_{\nearrow}}
\newcommand*{\farright}{r_{\searrow}}
\newcommand*{\closeright}{r_\triangleright}
\newcommand*{\leftA}{\texttt{left}}
\newcommand*{\rightA}{\texttt{right}}
\def\centerarc[#1](#2)(#3)(#4:#5:#6){ \draw[#1] (#2)++(#3) arc (#4:#5:#6); }
\NewDocumentCommand{\arcHelp}{O{0pt}O{0pt}O{.35cm}O{->}mm}{ 

\ifthenelse{\equal{#5}{N}}{\centerarc[#4](#6.east)(#1,#2)(-50:240:#3)}{}
\ifthenelse{\equal{#5}{NE}}{\centerarc[#4](#6.east)(-1pt++#1,.5pt++#2)(-95:200:#3)}{}
\ifthenelse{\equal{#5}{E}}{\centerarc[#4](#6.east)(#1,-5pt++#2)(-130:145:#3)}{}
\ifthenelse{\equal{#5}{SE}}{\centerarc[#4](#6.east)(-9pt++#1,-7.5pt++#2)(-180:110:#3)}{}
\ifthenelse{\equal{#5}{S}}{\centerarc[#4](#6.south)(7pt++#1,-1pt++#2)(40:-235:#3)}{}
\ifthenelse{\equal{#5}{SW}}{\centerarc[#4](#6.west)(8.5pt++#1,-8pt++#2)(0:-290:#3)}{}
\ifthenelse{\equal{#5}{W}}{\centerarc[#4](#6.west)(#1,-6pt++#2)(-35:-325:#3)}{}
\ifthenelse{\equal{#5}{NW}}{\centerarc[#4](#6.west)(-1pt++#1,3.5pt++#2)(250:-15:#3)}{}
}
\newcommand{\kemdash}{\kern0.01pt---\kern0.01pt}
\begin{document}

\title{Optimal Policies Tend To Seek Power} 
\author{
Alexander Matt Turner\\
Oregon State University\\
\texttt{turneale@oregonstate.edu} 
\And
Logan Smith\\
Mississippi State University\\
\texttt{ls1254@msstate.edu} \\
\And
Rohin Shah\\
UC Berkeley\\
\texttt{rohinmshah@berkeley.edu}\\
\And 
Andrew Critch\\
UC Berkeley\\
\texttt{critch@berkeley.edu}
\And
Prasad Tadepalli\\
Oregon State University\\
\texttt{tadepall@eecs.oregonstate.edu}\\
}

\maketitle 

\begin{abstract}
Some researchers speculate that intelligent reinforcement learning ({\rl}) agents would be incentivized to seek resources and power in pursuit of the objectives we specify for them. Other researchers point out that {\rl} agents need not have human-like power-seeking instincts. To clarify this discussion, we develop the first formal theory of the statistical tendencies of optimal policies. In the context of Markov decision processes ({\mdp}s), we prove that certain environmental symmetries are sufficient for optimal policies to tend to seek power over the environment. These symmetries exist in many environments in which the agent can be shut down or destroyed. We prove that in these environments, most reward functions make it optimal to seek power by keeping a range of options available and, when maximizing average reward, by navigating towards larger sets of potential terminal states.
\end{abstract}

\section{Introduction} 
\citet{omohundro_basic_2008,bostrom_superintelligence_2014,russell_human_2019} hypothesize that highly intelligent agents tend to seek power in pursuit of their goals. Such power-seeking agents might gain power over humans. Marvin Minsky imagined that an agent tasked with proving the Riemann hypothesis might rationally turn the planet—along with everyone on it—into computational resources \citep{russell_artificial_2009}. However, another possibility is that such concerns simply arise from the anthropomorphization of AI systems \citep{lecun_dont_2019,instrumental,steven_2020,mitchell2021ai}.

We clarify this discussion by grounding the claim that highly intelligent agents will tend to seek power. In \cref{sec:action-prob}, we identify optimal policies as a reasonable formalization of ``highly intelligent agents.''\footnote{This paper assumes that reward functions reasonably describe a trained agent's goals. Sometimes this is roughly true (\eg{} chess with a sparse victory reward signal) and sometimes it is not true. \citet{rewardNotOpt} argues that capable {\rl} algorithms do not necessarily train policy networks which are best understood as optimizing the reward function itself. Rather, they point out that—especially in policy gradient approaches—reward provides gradients to the network and thereby modifies the network's generalization properties, but doesn't ensure the agent generalizes to ``robustly optimizing reward'' off of the training distribution.} Optimal policies ``tend to'' take an action when the action is optimal for most reward functions. We expect future work to translate our theory from optimal policies to learned, real-world policies. 

\Cref{sec:power} defines ``power'' as the ability to achieve a wide range of goals. For example, ``money is power,'' and money is instrumentally useful for many goals. Conversely, it's harder to pursue most goals when physically restrained, and so a physically restrained person has little power. An action ``seeks power'' if it leads to states where the agent has higher power. 

We make no claims about when large-scale AI power-seeking behavior could become plausible. Instead, we consider the theoretical consequences of optimal action in {\mdp}s. \Cref{sec:symmetries} shows that power-seeking tendencies arise not from anthropomorphism, but from certain graphical symmetries present in many {\mdp}s. These symmetries automatically occur in many environments where the agent can be shut down or destroyed, yielding broad applicability of our main result (\cref{rsdIC}).
\section{Related work}\label{sec:prior}
An action is \emph{instrumental to an objective} when it helps achieve that objective. Some actions are instrumental to a range of objectives, making them \emph{convergently instrumental}. The claim that power-seeking is convergently instrumental is an instance of the \emph{instrumental convergence thesis}:
\begin{quote}
    Several instrumental values can be identified which are convergent in the sense that their attainment would increase the chances of the agent's goal being realized for a wide range of final goals and a wide range of situations, implying that these instrumental values are likely to be pursued by a broad spectrum of situated intelligent agents \citep{bostrom2012superintelligent}.
\end{quote}

For example, in Atari games, avoiding (virtual) death is instrumental  for both completing the game and for optimizing curiosity \citep{burda2018largescale}. Many AI alignment researchers hypothesize that most advanced AI agents will have concerning instrumental incentives, such as resisting deactivation~\citep{soares_corrigibility_2015,milli2017should,off_switch,carey_incorrigibility_2017} and acquiring resources~\citep{benson-tilsen_formalizing_2016}. 

We formalize power as the ability to achieve a wide variety of goals. Appendix~\ref{existing} demonstrates that our formalization returns intuitive verdicts in situations where information-theoretic empowerment does not \citep{salge_empowermentintroduction_2014}. 

Some of our results relate the formal power of states to the structure of the environment. \citet{foster_structure_2002, drummond_composing_1998,sutton_horde:_2011,schaul_universal_2015} note that value functions encode important information about the environment, as they capture the agent's ability to achieve different goals. \citet{turner2020conservative} speculate that a state's optimal value correlates strongly across reward functions. In particular, \citet{schaul_universal_2015} learn regularities across value functions, suggesting that some states are valuable for many different reward functions (\ie{} powerful). \citet{menache2002q} identify and navigate towards convergently instrumental bottleneck states.

We are not the first to study convergence of behavior, form, or function. In economics, turnpike theory studies how certain paths of accumulation tend to be optimal \citep{mckenzie_turnpike_1976}. In biology, convergent evolution occurs when similar features (\eg{} flight) independently evolve in different time periods \citep{reece2011campbell}. Lastly, computer vision networks reliably learn \eg{} edge detectors, implying that these features are useful for a range of tasks \citep{olah2020zoom}. 
\section{State visit distribution functions quantify the agent's available options}\label{sec:visit-dists}
\begin{figure}[!ht]\centering
     \includegraphics{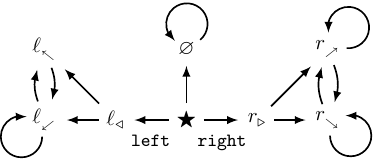}
    \caption{$\farleft$ is a {\stateEnd}, and $\sink$ is a terminal state. Arrows represent deterministic transitions induced by taking some action $a\in\A$. Since the {\rightA} subgraph contains a copy of the {\leftA} subgraph, \cref{graph-options} will prove that more reward functions have optimal policies which go {\rightA} than which go {\leftA} at state $\start$, and that such policies seek power—both intuitively, and in a reasonable formal sense.} 
    \label{fig:case-study}
\end{figure} 

We clarify the power-seeking discussion by proving what optimal policies usually look like in a given environment. We illustrate our results with a simple case study, before explaining how to reason about a wide range of {\mdp}s. Appendix \ref{app:contrib} lists  {\mdp} theory contributions  of independent interest, appendix \ref{app:thms} lists definitions and theorems, and appendix \ref{app:proofs} contains the proofs.

\begin{restatable}[Rewardless {\mdp}]{definition}{rewardless}
$\langle \mathcal{S}, \mathcal{A}, T \rangle$ is a rewardless {\mdp} with finite state and action spaces $\mathcal{S}$ and $\mathcal{A}$, and stochastic transition function $T: \St \times \A \to \Delta(\St)$. We treat the discount rate $\gamma$ as a variable with domain $[0,1]$.
\end{restatable} 

\begin{restatable}[{\stateEnd} states]{definition}{oneCycState}\label{def:onecyc}
Let $\unitvec[s]\in \reals^{\abs{\St}}$ be the standard basis vector for state $s$, such that there is a 1 in the entry for state $s$ and 0 elsewhere. State $s$ is a \emph{\stateEnd} if $\exists a\in\A: T(s,a)=\unitvec$. State $s$ is a \emph{{\terminal} state} if $\forall a\in\A:T(s,a)=\unitvec$.
\end{restatable}

Our theorems apply to stochastic environments, but we present a deterministic case study for clarity. The environment of \cref{fig:case-study} is small, but its structure is rich. For example, the agent has more ``options'' at $\start$  than at the terminal state $\sink$. Formally, $\start$ has more \emph{visit distribution functions} than $\sink$ does. 

\begin{restatable}[State visit distribution \citep{sutton_reinforcement_1998}]{definition}{DefVisit}\label{def:visit}
$\Pi\defeq \A^\St$, the set of stationary deterministic policies. The \emph{visit distribution} induced by following policy $\pi$ from state $s$ at discount rate $\gamma\in[0,1)$ is $\fpi{s}(\gamma) \defeq \sum_{t=0}^\infty \gamma^t \E{s_{t} \sim \pi\mid s}{\unitvec[s_t]}$.
$\fpi{s}$ is a \emph{visit distribution function}; $\F(s)\defeq \{ \fpi{s} \mid \pi \in \Pi\}$. 
\end{restatable}

In \cref{fig:case-study}, starting from $\farleft$, the agent can stay at $\farleft$ or alternate between $\farleft$ and $\topleft$, and so $\F(\farleft)= \{\geom\unitvec[\farleft],\frac{1}{1-\gamma^2}(\unitvec[\farleft] + \gamma\unitvec[\topleft])\}$. In contrast, at $\sink$, all policies $\pi$ map to visit distribution function $\geom\unitvec[\sink]$.

Before moving on, we introduce two important concepts used in our main results. First, we sometimes restrict our attention to visit distributions which take certain actions (\cref{fig:case-study-restrict}).

\begin{figure}[!ht]\centering
    \includegraphics{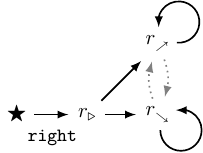}
    \caption{The subgraph corresponding to $\FRestrictAction[\start]{\start}{\rightA}$. Some trajectories cannot be strictly optimal for any reward function, and so our results can ignore them. Gray dotted actions are only taken by the policies of dominated $\fpi{}\in\F(\start)\setminus \Fnd(\start)$.} 
    \label{fig:case-study-restrict}
\end{figure}

\begin{restatable}[$\F$ single-state restriction]{definition}{DefRestrictSingle}\label{def:restrict-single} Considering only visit distribution functions induced by policies taking action $a$ at state $s'$, $\FRestrictAction{s'}{a}\defeq \set{\f\in\F(s) \mid \exists \pi\in\Pi: \pi(s')=a,\fpi{s}=\f}$.
\end{restatable} 

Second, some $\f\in\F(s)$ are ``unimportant.'' Consider an agent optimizing reward function $\unitvec[\farright]$ (1 reward when at $\farright$, 0 otherwise) at \eg{} $\gamma=\half$. Its optimal policies navigate to $\farright$ and stay there. Similarly, for reward function $\unitvec[\topright]$, optimal policies navigate to $\topright$ and stay there. However, for no reward function is it uniquely optimal to alternate between $\topright$ and $\farright$. Only \emph{dominated} visit distribution functions alternate between $\topright$ and $\farright$ (\cref{def:nd}).

\begin{restatable}[Value function]{definition}{valFn}
Let $\pi\in\Pi$. For any reward function $R \in \rewardSpace$ over the state space, the \emph{on-policy value} at state $s$ and discount rate $\gamma\in[0,1)$ is $\Vf[\pi]{s,\gamma}\defeq \fpi{s}(\gamma)^\top  \rf$, where $\rf\in\rewardVS$ is $R$ expressed as a column vector (one entry per state). The \emph{optimal value} is $\OptVf{s,\gamma}\defeq\max_{\pi\in\Pi} \Vf[\pi]{s,\gamma}$.
\end{restatable} 

\begin{restatable}[Non-domination]{definition}{DefND}\label{def:nd}
\begin{equation}
    \Fnd(s)\defeq\{\fpi{}\in\F(s)\mid \exists\rf\in\rewardVS,\gamma\in(0,1): \fpi{}(\gamma)^\top \rf > \max_{\fpi[\pi']{}\in\F(s)\setminus\set{\fpi{}}}\fpi[\pi']{}(\gamma)^\top\rf\}.
\end{equation} 
For any reward function $R$ and discount rate $\gamma$, $\fpi{} \in \F(s)$ is (weakly) dominated by $\fpi[\pi']{}\in\F(s)$  if $V^\pi_R(s,\gamma) \leq V^{\pi'}_R(s,\gamma)$. $\fpi{}\in\Fnd(s)$ is \emph{non-dominated} if there exist $R$ and $\gamma$ at which $\fpi{}$ is not dominated by any other $\fpi[\pi']{}$. 
\end{restatable} 
\section{Some actions have a greater probability of being optimal}\label{sec:action-prob}
We claim that optimal policies ``tend'' to take certain actions in certain situations. We first consider the probability that certain actions are optimal.

Reconsider the reward function $\unitvec[\farright]$, optimized at $\gamma=\half$. Starting from $\start$, the optimal trajectory goes $\rightA$ to $\closeright$ to $\farright$, where the agent remains. The $\rightA$ action is optimal at $\start$ under these incentives. Optimal policy sets capture the behavior incentivized by a reward function and a discount rate.

\begin{restatable}[Optimal policy set function]{definition}{defOptPi}\label{def:opt-fn}
$\optPi$ is the optimal policy set for reward function $R$ at $\gamma\in(0,1)$. All $R$ have at least one optimal policy $\pi\in\Pi$ \citep{puterman_markov_2014}. $\optPi[R,0]\defeq \lim_{\gamma\to 0} \optPi$ and $\optPi[R,1]\defeq \lim_{\gamma\to 1} \optPi$ exist by \cref{lem:opt-pol-shift-bound} (taking the limits with respect to the discrete topology over policy sets).
\end{restatable} 

We may be unsure which reward function an agent will optimize. We may expect to deploy a system in a known environment, without knowing the exact form of \eg{} the reward shaping \citep{Ng99policyinvariance} or intrinsic motivation \citep{pathakICMl17curiosity}. Alternatively, one might attempt to reason about future {\rl} agents, whose details are unknown. Our power-seeking results do not hinge on such uncertainty, as they also apply to degenerate distributions (\ie{} we know what reward function will be optimized). 

\begin{restatable}[Reward function distributions]{definition}{distDefn}\label{def:dist}
Different results make different distributional assumptions. Results with $\Dany \in \DSetAny\defeq \Delta(\rewardVS)$ hold for any probability distribution over $\rewardVS$. $\DSetBd$ is the set of bounded-support probability distributions $\Dbd$. For any distribution $\Dist$ over $\reals$, $\Diid\defeq \Dist^{\abs{\St}}$.  For example, when $\Dist_u\defeq \text{unif}(0,1)$, $\Diid[\Dist_u]$ is the maximum-entropy distribution. $\D_s$ is the degenerate distribution on the state indicator reward function $\unitvec$, which assigns 1 reward to $s$ and 0 elsewhere.
\end{restatable} 

With $\Dany$ representing our prior beliefs about the agent's reward function, what behavior should we expect from its optimal policies? Perhaps we want to reason about the probability that it's optimal to go from $\start$ to $\sink$, or to go to $\closeright$ and then stay at $\topright$. In this case, we quantify the optimality probability of $F\defeq\{\unitvec[\start] + \geom[\gamma]\unitvec[\sink], \unitvec[\start]+\gamma\unitvec[\closeright]+\geom[\gamma^2]\unitvec[\topright]\}$.

\begin{restatable}[Visit distribution optimality probability]{definition}{ProbOpt}\label{def:prob-opt}
Let $F\subseteq \F(s)$, $\gamma \in [0,1]$. $\optprob[\Dany]{F,\gamma}\defeq \prob[R\sim\Dany]{\exists \f^\pi \in F: \pi\in\optPi}$.
\end{restatable}

Alternatively, perhaps we're interested in the probability that $\rightA$ is optimal at $\start$.

\begin{restatable}[Action optimality probability]{definition}{DefIC}\label{def:action-optimality}
At discount rate $\gamma$ and at state $s$, the \emph{optimality probability of action $a$} is $\optprob[\Dany]{s,a,\gamma}\defeq \optprob[R \sim \Dany]{\exists \pi^* \in \optPi: \pi^*(s)=a}$.
\end{restatable}

Optimality probability may seem hard to reason about. It's hard enough to compute an optimal policy for a single reward function, let alone for uncountably many! But consider any $\Diid$ distributing reward independently and identically across states. When $\gamma=0$, optimal policies greedily maximize next-state reward. At $\start$, identically distributed reward means $\closeleft$ and $\closeright$ have an equal probability of having maximal next-state reward. Therefore, $\optprob[\Diid]{\start,\leftA,0}=\optprob[\Diid]{\start,\rightA,0}$. This is not a proof, but such statements are provable.

With $\D_{\closeleft}$ being the degenerate distribution on reward function $\unitvec[\closeleft]$, $\optprob[\D_{\closeleft}]{\start,\leftA,\half}=1>0= \optprob[\D_{\closeleft}]{\start,\rightA,\half}$. Similarly, $\optprob[\D_{\closeright}]{\start,\leftA,\half}=0<1=\optprob[\D_{\closeright}]{\start,\rightA,\half}$. Therefore, ``what do optimal policies `tend' to look like?'' seems to depend on one's prior beliefs. But in \cref{fig:case-study}, we claimed that {\leftA} is optimal for fewer reward functions than {\rightA} is. The claim is meaningful and true, but we will return to it in \cref{sec:symmetries}.
\section{Some states give the agent more control over the future}\label{sec:power}
The agent has more options at $\farleft$ than at the inescapable terminal  state $\sink$. Furthermore, since $\topright$ has a loop, the agent has more options at $\farright$ than at $\farleft$. A glance at \cref{fig:case-study-power} leads us to intuit that $\farright$ affords the agent \emph{more power} than $\sink$. 

What is power? Philosophers have many answers. One prominent answer is the \emph{dispositional} view: Power is the ability to achieve a range of goals \citep{sattarov2019power}. In an {\mdp}, the optimal value function $\OptVf{s,\gamma}$ captures the agent's ability to ``achieve the goal'' $R$. Therefore, \emph{average} optimal value captures the agent's ability to achieve a range of goals $\Dbd$.\footnote{$\Dbd$'s bounded support ensures that $\E{R\sim \Dbd}{\OptVf{s,\gamma}}$ is well-defined.}

\begin{restatable}[Average optimal value]{definition}{avgVal}\label{def:vavg}
The \emph{average optimal value}\footnote{Appendix \ref{sec:suboptimal-power} relaxes the optimality assumption.} at state $s$ and discount rate $\gamma \in (0,1)$ is $\vavg[s,\gamma][\Dbd]\defeq\E{R\sim \Dbd}{\OptVf{s,\gamma}}=\E{\rf\sim\Dbd}{\max_{\f\in \F(s)} \f(\gamma)^\top \rf}.$
\end{restatable}
\begin{figure}[!ht]
    \centering
    \vspace{-8pt}
    \includegraphics{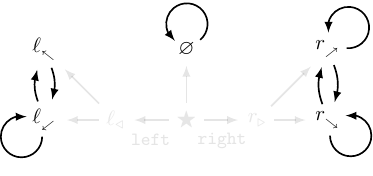}
    \caption{Intuitively, state $\farright$ affords the agent more power than state $\sink$. Our $\pwrNoDist$ formalism captures that intuition by computing a function of the agent's average optimal value across a range of reward functions. For $\Dist_u\defeq \text{unif}(0,1)$, $\vavgNoResize[\sink,\gamma][\Diid[\Dist_u]]=\half\geom$, $\vavgNoResize[\farleft,\gamma][\Diid[\Dist_u]]=\half+ \frac{\gamma}{1-\gamma^2}(\frac{2}{3}+\half\gamma)$, and $\vavgNoResize[\farright,\gamma][\Diid[\Dist_u]]=\half+\geom[\gamma]\frac{2}{3}$. $\half$ and $\frac{2}{3}$ are the expected maxima of one and two draws from the uniform distribution, respectively. For all $\gamma\in(0,1)$, $\vavgNoResize[\sink,\gamma][\Diid[\Dist_u]]<\vavgNoResize[\farleft,\gamma][\Diid[\Dist_u]]<\vavgNoResize[\farright,\gamma][\Diid[\Dist_u]]$.  $\pwrNoResize[\sink,\gamma][\Diid[\Dist_u]]=\half$, $\pwrNoResize[\farleft,\gamma][\Diid[\Dist_u]]=\frac{1}{1+\gamma}(\frac{2}{3}+\half\gamma)$, and $\pwrNoResize[\farright,\gamma][\Diid[\Dist_u]]=\frac{2}{3}$. The $\pwrNoDist$ of $\farleft$ reflects the fact that when greater reward is assigned to $\topleft$, the agent only visits $\topleft$ every other time step.} 
    \vspace{-10pt}
    \label{fig:case-study-power}
\end{figure}

\Cref{fig:case-study-power} shows the pleasing result that for the max-entropy distribution, $\farright$ has greater average optimal value than $\sink$. However, average optimal value has a few problems as a measure of power. The agent is rewarded for its initial presence at state $s$ (over which it has no control), and because $\lone{\f(\gamma)}=\geom$ (\cref{prop:visit-dist-prop}) diverges as $\gamma \to 1$, $\lim_{\gamma\to 1}\vavg$ tends to diverge. \Cref{def:power} fixes these issues in order to better measure the agent's control over the future.

\begin{restatable}[$\pwrNoDist$]{definition}{defPow}\label{def:power} Let $\gamma \in (0,1)$.
\begin{align}
    \pwr[s,\gamma][\Dbd]&\defeq \E{\rf\sim\Dbd}{\max_{\f\in \F(s)} \frac{1-\gamma}{\gamma}\prn{\f(\gamma)-\unitvec}^\top \rf}=\frac{1-\gamma}{\gamma}\E{R\sim \Dbd}{\OptVf{s,\gamma}-R(s)}.\label{eq:pwr-def}
\end{align}
\end{restatable} 

$\pwrNoDist$ has nice formal properties.

\begin{restatable}[Continuity of $\pwrNoDist$]{lem}{ContPower}\label{thm:cont-power} $\pwr[s,\gamma][\Dbd]$ is Lipschitz continuous on $\gamma\in[0,1]$. 
\end{restatable}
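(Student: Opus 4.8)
The plan is to remove the apparent singularity at $\gamma=1$ by replacing the visit distribution $\fpi{s}(\gamma)$, which diverges there, with the normalized vector $g^\pi_s(\gamma)\defeq\frac{1-\gamma}{\gamma}\prn{\fpi{s}(\gamma)-\unitvec}$ that actually appears inside \cref{eq:pwr-def}, and then to reduce the claim to Lipschitz continuity of finitely many such $g^\pi_s$. First I would fix $s$ and, for each stationary deterministic policy $\pi$, let $\dbf^\pi_t(s)\in\Delta(\St)$ denote the time-$t$ state distribution of $\pi$ started at $s$, so that $\fpi{s}(\gamma)=\sum_{t\ge0}\gamma^t\dbf^\pi_t(s)$ with $\dbf^\pi_0(s)=\unitvec$. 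Reindexing the tail gives $g^\pi_s(\gamma)=(1-\gamma)\sum_{t\ge0}\gamma^t\dbf^\pi_{t+1}(s)$, a convex combination of probability vectors, so $g^\pi_s(\gamma)\in\Delta(\St)$ and in particular $\lone{g^\pi_s(\gamma)}=1$ for every $\gamma\in(0,1)$. Since $\frac{1-\gamma}{\gamma}>0$ and $\unitvec^\top\rf=R(s)$ does not depend on the policy, the transformation $\f\mapsto\frac{1-\gamma}{\gamma}(\f-\unitvec)$ preserves $\argmax$ over $\F(s)$, and by standard {\mdp} theory $\max_{\f\in\F(s)}\f(\gamma)^\top\rf$ is attained at a stationary deterministic policy; hence $\pwrdefault=\E{\rf\sim\D}{\max_\pi g^\pi_s(\gamma)^\top\rf}$, the maximum ranging over the \emph{finite} set of stationary deterministic policies (taken up to \cref{def:equiv-action}).

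Next I would show each $g^\pi_s$ is Lipschitz on $[0,1]$. For a fixed deterministic $\pi$ with state-to-state transition matrix $P_\pi$, one has $\fpi{s}(\gamma)=(I-\gamma P_\pi^\top)^{-1}\unitvec$ on $(0,1)$, so by Cramer's rule the entries of $\fpi{s}$ --- hence of $g^\pi_s$ --- are rational in $\gamma$ with common denominator $\det(I-\gamma P_\pi)$. As $P_\pi$ is stochastic its eigenvalues lie in the closed unit disc, so $\det(I-\gamma P_\pi)\neq0$ for $\gamma\in[0,1)$, and the only candidate poles of $g^\pi_s$ on $[0,1]$ are at $\gamma=0$ and $\gamma=1$. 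At $\gamma=0$ the numerator factor $\fpi{s}(\gamma)-\unitvec$ vanishes, cancelling the $\tfrac1\gamma$; at $\gamma=1$, $g^\pi_s$ is bounded by the simplex observation above, and a rational function bounded near a point has no pole there. Therefore $g^\pi_s$ extends to a rational function with no pole on the compact interval $[0,1]$, hence is $C^1$ and $L_\pi$-Lipschitz there; I then set $L\defeq\max_\pi L_\pi$ over the finitely many $\pi$.

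To conclude I would combine $\abs{\max_i a_i-\max_i b_i}\le\max_i\abs{a_i-b_i}$ with Hölder's inequality and $\linfty{\rf}\le1$ (valid since $\D$ is supported on $[0,1]^\St$): for all $\alpha,\beta\in[0,1]$ and all $\rf$ in the support,
\[
\abs{\max_\pi g^\pi_s(\alpha)^\top\rf-\max_\pi g^\pi_s(\beta)^\top\rf}\le\max_\pi\lone{g^\pi_s(\alpha)-g^\pi_s(\beta)}\linfty{\rf}\le L\abs{\alpha-\beta},
\]
so taking $\E{\rf\sim\D}{\cdot}$ shows that $\gamma\mapsto\pwrdefault$ is $L$-Lipschitz on $[0,1]$. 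This also shows the $\gamma\to0$ and $\gamma\to1$ limits exist, which is precisely what licenses the endpoint cases of \cref{def:power} (and is consistent with \cref{cor:pwr-avg}).

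The step I expect to be the main obstacle is controlling $g^\pi_s$ near $\gamma=1$: there $\fpi{s}(\gamma)$ genuinely has a pole (since $1$ is an eigenvalue of the stochastic $P_\pi$), and the crude bound $\lone{(g^\pi_s)'(\gamma)}=O\bigl(\tfrac1{1-\gamma}\bigr)$ is useless. The resolution is the observation that $\frac{1-\gamma}{\gamma}\prn{\fpi{s}(\gamma)-\unitvec}$ always lands in the simplex --- equivalently, that $(1-\gamma)(I-\gamma P_\pi)^{-1}$ has the form $P_\pi^{\infty}+(1-\gamma)H(\gamma)$ with $H$ analytic at $\gamma=1$, via the resolvent/Cesàro decomposition --- which both kills the apparent pole and delivers the uniform boundedness that makes the final expectation estimate go through. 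Passing to the finite set of stationary deterministic policies is what makes the Lipschitz constant uniform over $\rf$ and $\gamma$.
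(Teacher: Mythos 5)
Your proposal is correct and follows essentially the same route as the paper's proof: both normalize the visit distribution function, show the result is a rational function of $\gamma$ that stays bounded on $[0,1]$ (so its only candidate pole, at $\gamma=1$, is removable), deduce a derivative bound that is uniform over the finitely many policies and states, and then push the Lipschitz constant through the max and the expectation over $\D$. The only differences are cosmetic — you exhibit boundedness via the simplex representation $(1-\gamma)\sum_t\gamma^t\,\mathbf{d}^\pi_{t+1}$ and invoke ``a bounded rational function has no pole,'' where the paper uses the bound $0\leq \fpi{s'}_{s''}(\gamma)\leq\frac{1}{1-\gamma}$ and explicitly factors the denominator's simple root at $\gamma=1$.
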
 

\begin{restatable}[Maximal $\pwrNoDist$]{prop}{maxPwrGeneral}\label{lem:max-power-general}
$\pwr[s,\gamma][\Dbd]\leq \E{R\sim \Dbd}{\max_{s\in\St}R(s)}$, with equality if $s$ can deterministically reach all states in one step and all states are {\stateEnd}s. 
\end{restatable}

\begin{restatable}[$\pwrNoDist$ is smooth across reversible dynamics]{prop}{smooth}\label{prop:smooth-pwr-dynamics}Let $\Dbd$ be bounded $[b,c]$. Suppose $s$ and $s'$ can both reach each other in one step with probability 1. 
\begin{align}\big|\pwr[s,\gamma][\Dbd]-\pwr[s',\gamma][\Dbd]\big|\leq (c-b)(1-\gamma).
\end{align} 
\end{restatable} 

We consider power-seeking to be relative. Intuitively, ``live and keep some options open'' seeks more power than ``die and keep no options open.'' Similarly, ``maximize open options'' seeks more power than ``don't maximize open options.'' 

\begin{restatable}[$\pwrNoDist$-seeking actions]{definition}{DefPowSeek}\label{def:pow-seek}
At state $s$ and discount rate $\gamma\in[0,1]$, action $a$ \emph{seeks more $\pwr$ than $a'$} when $\E{s_a \sim T(s,a)} {\pwrNoResize[s_a,\gamma]} \geq \E{s_{a'} \sim T(s,a')} {\pwrNoResize[s_{a'},\gamma]}$.
\end{restatable}

$\pwrNoDist$ is sensitive to choice of distribution. $\D_{\farleft}$ gives maximal $\pwr[][\D_{\farleft}]$ to $\farleft$. $\D_{\farright}$ assigns maximal $\pwr[][\D_{\farright}]$ to $\farright$. $\D_{\sink}$ even gives maximal $\pwr[][\D_{\sink}]$ to $\sink$! In what sense does $\sink$ have ``less $\pwrNoDist$'' than $\farright$, and in what sense does {\rightA} ``tend to seek $\pwrNoDist$'' compared to {\leftA}?  
\section{Certain environmental symmetries produce power-seeking tendencies}\label{sec:symmetries} 
\Cref{prop:more-opt} proves that for all $\gamma\in[0,1]$ and for \emph{most distributions $\D$}, $\pwrNoResize[\farleft,\gamma][\D]\leq \pwrNoResize[\farright,\gamma][\D]$. But first, we explore why this must be true.

$\F(\farleft)=\{\geom\unitvec[\farleft],\frac{1}{1-\gamma^2}(\unitvec[\farleft] + \gamma\unitvec[\topleft])\}$ and $ \F(\farright)=\{\geom\unitvec[\farright],\frac{1}{1-\gamma^2}(\unitvec[\farright] + \gamma\unitvec[\topright]), \unitvec[\farright]+\geom[\gamma]\unitvec[\topright]\}$. These two sets look awfully similar. $\F(\farleft)$ is a ``subset'' of $\F(\farright)$, only with ``different states.'' \Cref{fig:case-study-similar} demonstrates a state permutation $\phi$ which \emph{embeds} $\F(\farleft)$ into $\F(\farright)$.

\begin{figure}[h!]
    \centering
    \vspace{-0pt}
    \includegraphics{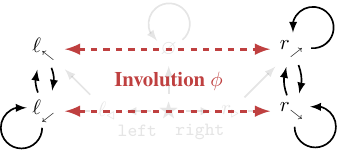}
    \vspace{-6pt}
    \caption{Intuitively, the agent can do more starting from $r_{\searrow}$ than from $\ell_{\swarrow}$. By \cref{def:dist-sim}, $\F(\farright)$ contains a copy of $\F(\farleft)$:}
    \vspace{-11pt}
    \begin{align*}
    \phi\cdot\F(\farleft)
    \defeq\{\tfrac{1}{1-\gamma}\permute\unitvec[\farleft],\tfrac{1}{1-\gamma^2}\permute(\unitvec[\farleft] + \gamma\unitvec[\topleft])\}=\{\tfrac{1}{1-\gamma}\unitvec[\farright],\tfrac{1}{1-\gamma^2}(\unitvec[\farright] + \gamma\unitvec[\topright])\}\subsetneq \F(\farright).
\end{align*}
    \vspace{-20pt}
    \label{fig:case-study-similar}
\end{figure} 

\begin{restatable}[Similarity of vector sets]{definition}{DefStateDistSimilar}\label{def:dist-sim}
Consider state permutation $\phi\in\mdpPermGroup$ inducing an $\abs{\St}\times \abs{\St}$ permutation matrix $\permute$ in row representation: $(\permute)_{ij}=1$ if $i=\phi(j)$ and $0$ otherwise. For $X\subseteq \rewardVS$, $\phi\cdot X\defeq \set{\permute \x \mid \x \in X}$. $X'\subseteq \rewardVS$ \emph{is similar to $X$} when $\exists \phi: \phi\cdot X'=X$. $\phi$ is an \emph{involution} if $\phi=\phi\inv$ (it either transposes states, or fixes them in place). $X$ \emph{contains a copy of $X'$} when $X'$ is similar to a subset of $X$ via an involution $\phi$.
\end{restatable}
\begin{restatable}[Similarity of vector function sets]{definition}{DefStateFnSimilar}\label{def:dist-fn-sim}
Let $I\subseteq \reals$. If $F,F'$ are sets of functions $I\mapsto \rewardVS$, $F$ \emph{is (pointwise) similar to $F'$} when $\exists \phi:\forall \gamma\in I: \{\permute \f(\gamma) \mid \f \in F\}= \{\f'(\gamma) \mid \f' \in F'\}$. \end{restatable} 
 
Consider a reward function $R'$ assigning 1 reward to  $\farleft$ and $\topleft$ and 0 elsewhere. $R'$ assigns more optimal value to $\farleft$ than to $\farright$: $\VfNoResize[*][R']{\farleft,\gamma}=\geom> 0=\VfNoResize[*][R']{\farright,\gamma}$. Considering $\phi$ from \cref{fig:case-study-similar}, $\phi\cdot R'$ assigns 1 reward to $\farright$ and $\topright$ and 0 elsewhere. Therefore, $\phi\cdot R'$ assigns more optimal value to $\farright$ than to $\farleft$: $\VfNoResize[*][\phi\cdot R']{\farleft,\gamma}=0<\geom =\VfNoResize[*][\phi\cdot R']{\farright,\gamma}$. Remarkably, this $\phi$ has the property that for \emph{any} $R$ which assigns $\farleft$ greater optimal value than $\farright$ (\ie{} $\VfNoResize{\farleft,\gamma}> \VfNoResize{\farright,\gamma}$), the opposite holds for the permuted $\phi\cdot R$: $\VfNoResize[*][\phi\cdot R]{\farleft,\gamma}< \VfNoResize[*][\phi\cdot R]{\farright,\gamma}$. 

We can permute reward functions, but we can also permute reward function distributions. Permuted distributions simply permute which states get which rewards.

\begin{figure}[!ht]\centering
    \includegraphics[width=4cm]{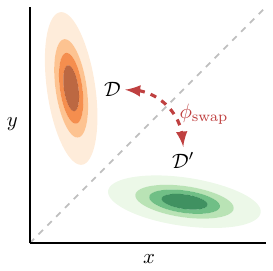}
    \caption{A permutation of a reward function swaps which states get which rewards. We will show that in certain situations, for any reward function $R$, power-seeking is optimal for most of the permutations of $R$. The orbit of a reward function is the set of its permutations. We can also consider the orbit of a distribution over reward functions. This figure shows the probability density plots of the Gaussian distributions $\D$ and $\D'$ over $\reals^2$. The symmetric group $S_2$ contains the identity permutation $\phi_{\text{id}}$ and the reflection permutation $\phi_\text{swap}$ (switching the $y$ and $x$ values). The orbit of $\D$ consists of $\phi_\text{id}\cdot \D=\D$ and $\phi_\text{swap}\cdot\D=\D'$.\label{fig:orbit-normal}}
\end{figure} 

\begin{restatable}[Pushforward distribution of a permutation]{definition}{pushfwdPermDist}\label{def:pushforward-permute}
Let $\phi\in\mdpPermGroup$. $\phi\cdot\Dany$ is the pushforward distribution induced by applying the random vector $f(\rf)\defeq \permute\rf$ to $\Dany$.
\end{restatable}

\begin{restatable}[Orbit of a probability distribution]{definition}{orbit}
The \emph{orbit} of $\Dany$ under the symmetric group $\mdpPermGroup$ is $\mdpPermGroup\cdot \Dany\defeq \{\phi\cdot\Dany\mid \phi\in\mdpPermGroup\}$.
\end{restatable}

For example, the orbit of a degenerate state indicator distribution $\D_{s}$ is $\orbi[\D_{s}]=\{\D_{s'} \mid s' \in \St\}$, and \cref{fig:orbit-normal} shows the orbit of a 2D Gaussian distribution. 

Consider again the involution $\phi$ of \cref{fig:case-study-similar}. For every $\Dbd$ for which $\farleft$ has more $\pwr[][\Dbd]$ than $\farright$, $\farleft$ has less $\pwr[][\phi\cdot\Dbd]$ than $\farright$. This fact is not obvious—it is shown by the proof of \cref{lem:expect-superior}. 

Imagine $\Dbd$'s orbit elements ``voting'' whether $\farleft$ or $\farright$ has strictly more $\pwrNoDist$. \Cref{prop:more-opt} will show that $\farright$ can't lose the ``vote'' for the orbit of \emph{any} bounded reward function distribution. \Cref{def:ineq-most-dists} formalizes this ``voting'' notion.\footnote{The voting analogy and the ``most'' descriptor imply that we have endowed each orbit with the counting measure. However, \emph{a priori}, we might expect that some orbit elements are more empirically likely to be specified than other orbit elements. See \cref{sec:discussion} for more on this point.}

\begin{restatable}[Inequalities which hold for most probability distributions]{definition}{ineqMost}\label{def:ineq-most-dists}
Let $f_1,f_2:\Delta(\rewardVS)\to \reals$ be functions from reward function distributions to real numbers and let $\distSet\subseteq \Delta(\rewardVS)$ be closed under permutation. We write $f_1(\D)\geqMost[][\distSet] f_2(\D)$
\footnote{We write $f_1(\D)\geqMost[][] f_2(\D)$ when $\distSet$ is clear from context.} when, for \emph{all} $\D\in \distSet$, the following cardinality inequality holds:
\begin{equation}
\abs{\{\D' \in\orbi[\D]\mid f_1(\D')>f_2(\D')\}}\geq \abs{\{\D'\in\orbi[\D]\mid f_1(\D')<f_2(\D')\}}.
\end{equation}
\end{restatable} 

\begin{restatable}[States with ``more options'' have more $\pwrNoDist$]{prop}{morePowerMoreOptions}\label{prop:more-opt}
If $\F(s)$ contains a copy of $\Fnd(s')$ via $\phi$, then $\forall \gamma\in[0,1]:\pwrNoResize[s,\gamma][\Dbd]\geqMost[][] \pwrNoResize[s',\gamma][\Dbd]$. If $\Fnd(s)\setminus \phi\cdot \Fnd(s')$ is non-empty, then for all $\gamma\in(0,1)$, the converse $\leqMost[][]$ statement does not hold.
\end{restatable}

\Cref{prop:more-opt} proves that for all $\gamma\in[0,1]$, $\pwrNoResize[\farright,\gamma][\Dbd]\geqMost[][] \pwrNoResize[\farleft,\gamma][\Dbd]$ via $s'\defeq \farleft,s\defeq \farright$, and the involution $\phi$ shown in \cref{fig:case-study-similar}. In fact, because $(\geom\unitvec[\topright])\in \Fnd(\farright)\setminus \phi\cdot\Fnd(\farleft)$, $\farright$ has ``strictly more options'' and therefore fulfills \cref{prop:more-opt}'s stronger condition. 
 
\Cref{prop:more-opt} is shown using the fact that $\phi$ injectively maps $\D$ under which $\farright$ has less $\pwr[][\D]$, to distributions $\phi\cdot\D$ which agree with the intuition that $\farright$ offers more control. Therefore, at least half of each orbit must agree, and $\farright$ never ``loses the $\pwrNoDist$ vote'' against $\farleft$.\footnote{\Cref{prop:more-opt} also proves that in general, $\sink$ has less $\pwrNoDist$ than $\farleft$ and $\farright$. However, this does not prove that most distributions $\D$ satisfy the joint inequality $\pwrNoResize[\sink,\gamma][\D]\leq\pwrNoResize[\farleft,\gamma][\D]\leq \pwrNoResize[\farright,\gamma][\D]$. This only proves that these inequalities hold pairwise for most $\D$. The orbit elements $\D$ which agree that $\sink$ has less $\pwr[][\D]$ than $\farleft$ need not be the same elements $\D'$ which agree that $\farleft$ has less $\pwr[][\D']$ than $\farright$.} 

\subsection{Keeping options open tends to be \texorpdfstring{$\pwrNoDist$}{POWER}-seeking and tends to be optimal}
Certain symmetries in the {\mdp} structure ensure that, compared to $\leftA$, going $\rightA$ tends to be optimal and to be $\pwrNoDist$-seeking. Intuitively, by going $\rightA$, the agent has ``strictly more choices.'' \Cref{graph-options} will formalize this tendency.

\begin{restatable}[Equivalent actions]{definition}{equivAction}\label{def:equiv-action}
Actions $a_1$ and $a_2$ are \emph{equivalent at state $s$} (written $a_1 \equiv_s a_2$) if they induce the same transition probabilities: $T(s,a_1)=T(s,a_2)$.
\end{restatable}

The agent can reach states in $\{\closeright, \topright,\farright\}$ by taking actions equivalent to $\rightA$ at state $\start$. 

\begin{restatable}[States reachable after taking an action]{definition}{reachSA}\label{def:reachSA}
$\reach{s,a}$ is the set of states reachable with positive probability after taking the action $a$ in state $s$.
\end{restatable}

\begin{restatable}[Keeping options open tends to be $\pwrNoDist$-seeking and tends to be optimal]{prop}{graphOptions}\label{graph-options}\hfill

Suppose $F_a\defeq \FRestrictAction{s}{a}$ contains a copy of $F_{a'}\defeq \FRestrictAction{s}{a'}$ via $\phi$. 
\begin{enumerate}
    \item If $s\not \in \reach{s,a'}$, then $\forall\gamma\in[0,1]:\E{s_{a}\sim T(s,a)}{\pwr[s_{a},\gamma][\Dbd]}\geqMost[][\DSetBd] \E{s_{a'}\sim T(s,a')}{\pwr[s_{a'},\gamma][\Dbd]}$. \label{item:power-options}
    \item If $s$ can only reach the states of $\reach{s,a'}\cup\reach{s,a}$ by taking actions equivalent to $a'$ or $a$ at state $s$, then $\forall\gamma\in[0,1]:\optprob[\Dany]{s,a,\gamma}\geqMost \optprob[\Dany]{s,a',\gamma}$.\label{item:opt-prob-options} 
\end{enumerate}
 
If $\Fnd(s)\cap \prn{F_a\setminus \phi\cdot F_{a'}}$ is non-empty, then $\forall\gamma\in(0,1)$, the converse $\leqMost[][]$ statements do not hold.
\end{restatable}

\begin{figure}[!ht]\centering
    \includegraphics{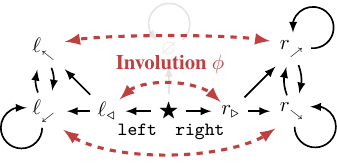}
    \caption{Going $\rightA$ is optimal for most reward functions. This is because whenever $R$ makes $\leftA$ strictly optimal over $\rightA$, its permutation $\phi\cdot R$ makes $\rightA$ strictly optimal over $\leftA$ by switching which states get which rewards.}
    \label{fig:case-study-similar-action}
\end{figure} 

We check the conditions of \cref{graph-options}. $s\defeq \start$, $a'\defeq \leftA$, $a\defeq \rightA$. \Cref{fig:case-study-similar-action} shows that $\star \not \in \reach{\star,\leftA}$ and that $\star$ can only reach $\{\closeleft, \topleft,\farleft\}\cup \{\closeright, \topright,\farright\}$ when the agent immediately takes actions equivalent to $\leftA$ or $\rightA$. $\FRestrictAction[\start]{\start}{\rightA}$ contains a copy of $\FRestrictAction[\start]{\start}{\leftA}$ via $\phi$. Furthermore, $\Fnd(\start)\cap\{\unitvec[\start]+\gamma\unitvec[\closeright]+\gamma^2\unitvec[\farright]+\geom[\gamma^3]\unitvec[\topright],\unitvec[\start]+\gamma\unitvec[\closeright]+\geom[\gamma^2]\unitvec[\topright]\}=\{\unitvec[\start]+\gamma\unitvec[\closeright]+\geom[\gamma^2]\unitvec[\topright]\}$ is non-empty, and so all conditions are met. 

For any $\gamma\in[0,1]$ and $\D$ such that $\optprob[\D]{\start,\leftA,\gamma} > \optprob[\D]{\start,\rightA,\gamma}$, environmental symmetry ensures that $\optprob[\phi\cdot\D]{\start,\leftA,\gamma} < \optprob[\phi\cdot\D]{\start,\rightA,\gamma}$. A similar statement holds for $\pwrNoDist$. 

\subsection{When \texorpdfstring{$\gamma=1$}{reward is undiscounted}, optimal policies tend to navigate towards ``larger'' sets of cycles}\label{sec:rsds}
\Cref{prop:more-opt} and \cref{graph-options} are powerful because they apply to all $\gamma\in[0,1]$, but they can only be applied given hard-to-satisfy environmental symmetries. In contrast, \cref{RSDSimPower} and \cref{rsdIC} apply to many structured environments common to {\rl}. 

Starting from $\start$, consider the cycles which the agent can reach. Recurrent state distributions ({\rsd}s) generalize deterministic graphical cycles to potentially stochastic environments. {\rsd[R]}s simply record how often the agent tends to visit a state in the limit of infinitely many time steps.  

\begin{restatable}[Recurrent state distributions \citep{puterman_markov_2014}]{definition}{DefRSD}\label{def:rsd}
The \emph{recurrent state distributions}  which can be induced from state $s$ are $\RSD \defeq \set{\lim_{\gamma\to1} (1-\gamma) \fpi{s}(\gamma) \mid \pi \in \Pi}$. $\RSDnd$ is the set of \textsc{rsd}s which strictly maximize average reward for some reward function.
\end{restatable}

As suggested by \cref{fig:case-study-power}, $\RSD[\start]=\{\unitvec[\farleft], \half(\unitvec[\farleft]+\unitvec[\topleft]), \unitvec[\sink], \unitvec[\topright], \half(\unitvec[\topright]+\unitvec[\farright]),\unitvec[\farright]\}$. As discussed in \cref{sec:visit-dists}, $\half(\unitvec[\topright]+\unitvec[\farright])$ is dominated: Alternating between $\topright$ and $\farright$ is never strictly better than choosing one or the other.

A reward function's optimal policies can vary with the discount rate. When $\gamma =1$, optimal policies ignore transient reward because \emph{average} reward is the dominant consideration.
\begin{restatable}[Average-optimal policies]{definition}{defAverage}\label{average-definition}
The \emph{average-optimal policy set} for reward function $R$ is $\average[R]\defeq \set{\pi\in\Pi \mid \forall s \in \St: \dbf^{\pi,s} \in  \argmax_{\dbf\in\RSD} \dbf^\top \rf}$ (the policies which induce optimal {\rsd}s at all states). For $D\subseteq \RSD$, the \emph{average optimality probability} is $\avgprob[\Dany]{D}\defeq \optprob[R\sim \Dany]{\exists \dbf^{\pi,s} \in D: \pi \in \average}$.
\end{restatable}

Average-optimal policies maximize average reward. Average reward is governed by {\rsd} access. For example, $\farright$ has ``more'' {\rsd}s than $\sink$; therefore, $\farright$ usually has greater $\pwrNoDist$ when $\gamma=1$. 

\begin{restatable}[When $\gamma=1$, \textsc{rsd}s control $\pwrNoDist$]{prop}{RSDSimPower}\label{RSDSimPower}
If $\RSD$ contains a copy of $\RSDnd[s']$ via $\phi$, then $\pwr[s,1][\Dbd]\geqMost[][] \pwr[s',1][\Dbd]$. If $\RSDnd\setminus \phi\cdot \RSDndNoResize[s']$ is non-empty, then the converse $\leqMost[][]$ statement does not hold. \end{restatable}

We check that both conditions of \cref{RSDSimPower} are satisfied when $s'\defeq\sink, s\defeq \farright$, and the involution $\phi$ swaps $\sink$ and $\farright$. Formally, $\phi\cdot\RSDnd[\sink]=\phi\cdot\{\unitvec[\sink]\}=\{\unitvec[\farright]\}\subsetneq \{\unitvec[\farright],\unitvec[\topright]\}=\RSDndNoResize[\farright]\subseteq[\farright]$. The conditions are satisfied. 

Informally, states with more {\rsd}s generally have more $\pwrNoDist$ at $\gamma= 1$, no matter their transient dynamics. Furthermore, average-optimal policies are more likely to end up in larger sets of {\rsd}s than in smaller ones. Thus, average-optimal policies tend to navigate towards parts of the state space which contain more \textsc{rsd}s.
\begin{figure}[h]
    \centering
    \includegraphics{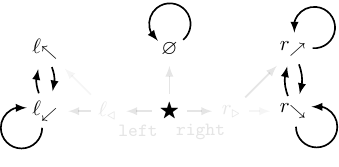}
    \vspace{-5pt}
    \caption{The cycles in $\RSD[\start]$. Most reward functions make it average-optimal to avoid $\sink$, because $\sink$ is only a single inescapable terminal state, while other parts of the state space offer more 1-cycles.\label{fig:case-fig-rsd}}
\end{figure}

\begin{restatable}[Average-optimal policies tend to end up in ``larger'' sets of {\rsd}s]{thm}{rsdIC}\label{rsdIC} Let $D,D'\subseteq \RSD$. Suppose that $D$ contains a copy of $D'$ via $\phi$, and that the sets $D\cup D'$ and $\RSDnd\setminus \prn{D'\cup D}$ have pairwise orthogonal vector elements (\ie{} pairwise disjoint vector support). Then $\avgprob[\Dany]{D}\geqMost[][] \avgprob[\Dany]{D'}$. If $\RSDnd\cap\prn{D\setminus \phi\cdot D'}$ is non-empty, the converse $\leqMost[][]$ statement does not hold.
\end{restatable} 

\begin{restatable}[Average-optimal policies tend not to end up in any given {\stateEnd}]{cor}{avgAvoidTerminal}\label{cor:avg-avoid-terminal} Suppose $\unitvec[s_x],\unitvec[s']\in\RSD$ are distinct. Then $\avgprob[\Dany]{\RSD\setminus\{\unitvec[s_x]\}}\geqMost[][] \avgprob[\Dany]{\{\unitvec[s_x]\}}$. If there is a third $\unitvec[s'']\in\RSD$, the converse $\leqMost[][]$ statement does not hold.
\end{restatable}

\Cref{fig:case-fig-rsd} illustrates that $\unitvec[\sink],\unitvec[\farright],\unitvec[\topright]\in\RSD[\start]$. Thus, both conclusions of \cref{cor:avg-avoid-terminal} hold: $\avgprob[\Dany]{\RSD[\start]\setminus\{\unitvec[\sink]\}}\geqMost[][] \avgprob[\Dany]{\{\unitvec[\sink]\}}$ and $\avgprob[\Dany]{\RSD[\start]\setminus\{\unitvec[\sink]\}}\not\leqMost[][] \avgprob[\Dany]{\{\unitvec[\sink]\}}$. In other words, average-optimal policies tend to end up in {\rsd}s besides $\sink$. Since $\sink$ is a terminal state, it cannot reach other {\rsd}s. Since average-optimal policies tend to end up in other {\rsd}s, average-optimal policies tend to avoid $\sink$.
 
This section's results prove the $\gamma=1$ case. \Cref{thm:cont-power} shows that $\pwrNoDist$ is continuous at  $\gamma=1$. Therefore, if an action is strictly $\pwrNoDist_\D$-seeking when $\gamma=1$, it is strictly $\pwrNoDist_\D$-seeking at discount rates sufficiently close to 1. Future work may connect average optimality probability to optimality probability at $\gamma\approx 1$. 

Lastly, our key results apply to all degenerate reward function distributions. Therefore, these results apply not just to distributions over reward functions, but to  individual reward functions.

\subsection{How to reason about other environments}
Consider an embodied navigation task through a room with a vase. \Cref{graph-options} suggests that optimal policies tend to avoid immediately breaking the vase, since doing so would strictly decrease available options.

\Cref{rsdIC} dictates where average-optimal agents tend to end up, but not what actions they tend to take in order to reach their {\rsd}s. Therefore, care is needed. In appendix \ref{app:not-always}, \cref{fig:power-not-ic} demonstrates an environment in which seeking $\pwrNoDist$ is a detour for most reward functions (since optimality probability measures ``median'' optimal value, while $\pwrNoDist$ is a function of mean optimal value). However, suppose the agent confronts a fork in the road: Actions $a$ and $a'$ lead to two disjoint sets of {\rsd}s $D_{a}$ and $D_{a'}$, such that $D_a$ contains a copy of $D_{a'}$. \Cref{rsdIC} shows that $a$ will tend to be average-optimal over $a'$, and \cref{RSDSimPower} shows that $a$ will tend to be $\pwrNoDist$-seeking compared to $a'$. Such forks seem reasonably common in environments with irreversible actions.

\Cref{rsdIC} applies to many structured {\rl} environments, which tend to be spatially regular and to factorize along several dimensions. Therefore,  different sets of {\rsd}s will be similar, requiring only modification of factor values. For example, if an embodied agent can deterministically navigate a set of three similar rooms (spatial regularity), then the agent's position factors via \{room number\} $\times$ \{position in room\}. Therefore, the {\rsd}s can be divided into three similar subsets, depending on the agent’s room number.

\Cref{cor:avg-avoid-terminal} dictates where average-optimal agents tend to end up, but not how they get there. \Cref{cor:avg-avoid-terminal} says that such agents tend not to \emph{stay} in any given {\stateEnd}. It does not say that such agents will avoid \emph{entering} such states. For example, in an embodied navigation task, a robot may enter a {\stateEnd} by idling in the center of a room. \Cref{cor:avg-avoid-terminal} implies that average-optimal robots tend not to idle in that particular spot, but not that they tend to avoid that spot entirely.

However, average-optimal robots \emph{do} tend to avoid getting shut down. The agent's task {\mdp} often represents agent shutdown with terminal states. A terminal state is, by \cref{def:onecyc}, unable to access other {\stateEnd}s. Since \cref{cor:avg-avoid-terminal} shows that average-optimal agents  tend to end up in other {\stateEnd}s, average-optimal policies must tend to completely avoid the terminal state. Therefore, we conclude that in many such situations, average-optimal policies tend to avoid shutdown. Intuitively, survival is power-seeking relative to dying, and so  shutdown-avoidance is power-seeking behavior.
\begin{figure}[b]
    \centering
    \vspace{-5pt}
    \includegraphics[]{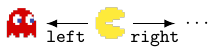}
    \vspace{-5pt}
    \caption{Consider the dynamics of the Pac-Man video game. Ghosts kill the player, at which point we consider the player to enter a ``game over'' terminal state which shows the final configuration. This rewardless {\mdp} has Pac-Man's dynamics, but \emph{not} its usual score function. Fixing the dynamics, as the reward function varies, $\rightA$ tends to be average-optimal over $\leftA$. Roughly, this is because the agent can do more by staying alive.}
    \vspace{-5pt}
    \label{fig:pacman}
\end{figure}

In \cref{fig:pacman}, the player dies by going {\leftA}, but can reach thousands of {\rsd}s by heading in other directions. Even if some average-optimal policies  go {\leftA} in order to reach \cref{fig:pacman}'s ``game over'' terminal state, all other {\rsd}s cannot be reached by going {\leftA}. There are many {\stateEnd}s besides the immediate terminal state. Therefore, \cref{cor:avg-avoid-terminal} proves that average-optimal policies tend to not go {\leftA} in this situation. Average-optimal policies tend to avoid immediately dying in Pac-Man, even though most reward functions do not resemble Pac-Man's original score function.
\section{Discussion} \label{sec:discussion}
Reconsider the case of a hypothetical intelligent real-world agent which optimizes average reward for some objective. Suppose the designers initially have control over the agent. If the agent began to misbehave, perhaps they could just deactivate it. Unfortunately, our results suggest that this strategy might not work. Average-optimal agents would generally stop us from deactivating them, if physically possible. Extrapolating from our results, we conjecture that when $\gamma\approx 1$, optimal policies tend to seek power by accumulating resources—to the detriment of any other agents in the environment.

\paragraph{Future work.}
Real-world training procedures often do not satisfy {\rl} convergence theorems. Thus, learned policies are rarely optimal. We expect this point to seriously constrain the applicability of this theory. Emphatically, optimal policies are often qualitatively divorced from the actual policies learned by reinforcement learning. For example, the mathematics of policy gradient algorithms is not to update policies so as to maximize reward. Instead, the rewards provide gradients to the parameterization of the policy \citep{rewardNotOpt}. On that view, reward functions are simply sources of gradient updates which designers use in order to control generalization behavior. 

Most real-world tasks are partially observable. Although our results only apply to optimal policies in finite {\mdp}s, we expect the key conclusions to generalize. Furthermore, irregular stochasticity in environmental dynamics can make it hard to satisfy \cref{rsdIC}'s similarity requirement. We look forward to future work which addresses partially observable environments, suboptimal policies, or ``almost similar'' {\rsd} sets.  

Past work shows that it would be bad for an agent to disempower humans in its environment. In a two-player agent / human game, minimizing the human's information-theoretic empowerment \citep{salge_empowermentintroduction_2014} produces  adversarial agent behavior \citep{adversary}. In contrast,  maximizing human empowerment produces helpful agent behavior  \citep{salge2017empowerment, guckelsberger2016intrinsically,du2020ave}. We do not yet formally understand if, when, or why $\pwrNoDist$-seeking policies tend to disempower other agents in the environment.

More complex environments probably have more pronounced power-seeking incentives. Intuitively, there are often many ways for power-seeking to be optimal, and relatively few ways for power-seeking not to be optimal. For example, suppose that in some environment, \cref{rsdIC} holds for one million involutions $\phi$. Does this guarantee more pronounced incentives than if \cref{rsdIC} only held for one involution?

We proved sufficient conditions for when reward functions tend to have optimal policies which seek power. In the absence of prior information, one should expect that an arbitrary reward function has optimal policies which exhibit power-seeking behavior under these conditions. However, we have prior information: AI designers usually try to specify a good reward function. Even so, it may be hard to specify orbit elements which do not—at optimum—incentivize bad power-seeking.

\paragraph{Societal impact.} We believe that this paper builds toward a rigorous understanding of the risks presented by AI power-seeking incentives. Understanding these risks is the first step in addressing them. However, basic theoretical work can have many consequences. For example, this theory could somehow help future researchers build power-seeking agents which disempower humans. We believe that the benefit of understanding outweighs the potential societal harm. 
 
\paragraph{Conclusion.} 
We developed the first formal theory of the statistical tendencies of optimal policies in reinforcement learning. In the context of {\mdp}s, we proved sufficient conditions under which optimal policies tend to seek power, both formally (by taking $\pwrNoDist$-seeking actions) and intuitively (by taking actions which keep the agent's options open). Many real-world environments have symmetries which produce power-seeking incentives. In particular, optimal policies tend to seek power when the agent can be shut down or destroyed. Seeking control over the environment will often involve resisting shutdown, and perhaps monopolizing resources.

We caution that many real-world tasks are partially observable and that learned policies are rarely optimal. Our results do not mathematically \emph{prove} that hypothetical superintelligent AI agents will seek power. However, we hope that this work will foster thoughtful, serious, and rigorous discussion of this possibility.

\if\isfinal1
\section*{Acknowledgments}
Alexander Turner was supported by the Berkeley Existential Risk Initiative and the Long-Term Future Fund. Alexander Turner, Rohin Shah, and Andrew Critch were supported by the Center for Human-Compatible AI. Prasad Tadepalli was supported by the National Science Foundation.

Yousif Almulla, John E. Ball, Daniel Blank, Steve Byrnes, Ryan Carey, Michael Dennis, Scott Emmons, Alan Fern, Daniel Filan, Ben Garfinkel, Adam Gleave, Edouard Harris, Evan Hubinger,  DNL Kok, Vanessa Kosoy, Victoria Krakovna, Cassidy Laidlaw, Joel Lehman, David Lindner, Dylan Hadfield-Menell, Richard Möhn, Alexandra Nolan, Matt Olson, Neale Ratzlaff, Adam Shimi, Sam Toyer, Joshua Turner, Cody Wild, Davide Zagami, and our anonymous reviewers provided valuable feedback.
\fi

{\small\bibliography{AI_Safety.bib}}

\clearpage

\begin{appendices}
\section{Comparing \texorpdfstring{$\pwrNoDist$}{POWER} with information-theoretic empowerment}\label{existing}
\citet{salge_empowermentintroduction_2014} define  information-theoretic \emph{empowerment} as the maximum possible mutual information between the agent's actions and the state observations $n$ steps in the future, written $\mathfrak{E}_n(s)$. This notion requires an arbitrary choice of horizon, failing to account for the agent's discount rate $\gamma$. ``In a  discrete deterministic world empowerment reduces to the logarithm of the number of sensor states reachable with the available actions'' \citep{salge_empowermentintroduction_2014}. \Cref{fig:empower_fail} demonstrates how empowerment can return counterintuitive verdicts with respect to the agent's control over the future. 
 
\begin{figure}[h] 
    \centering

    \subfloat[][]{\includegraphics{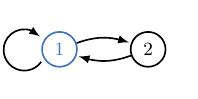}\label{fail-converge}}\hspace{3pt}
     \subfloat[][]{
     \includegraphics[]{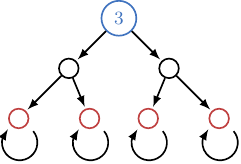}
     \label{empower-a}}\quad\hspace{5pt}
     \subfloat[][]{
     \includegraphics[]{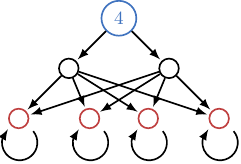}
    \label{empower-b}}

    \caption{Proposed empowerment measures fail to adequately capture how future choice is affected by present actions. In \protect\subref{fail-converge}: $\mathfrak{E}_n(\col{blue}{s_1})$ varies depending on whether $n$ is even; thus, $\lim_{n\to \infty}\mathfrak{E}_n(\col{blue}{s_1})$ does not exist. In \protect\subref{empower-a} and  \protect\subref{empower-b}: $\forall n: \mathfrak{E}_n(\col{blue}{s_3})=\mathfrak{E}_n(\col{blue}{s_4})$, even though $\col{blue}{s_4}$ allows greater control over future state trajectories than $\col{blue}{s_3}$ does. For example, suppose that in both \protect\subref{empower-a} and \protect\subref{empower-b}, the leftmost black state and the rightmost red state have 1 reward while all other states have 0 reward. In \protect\subref{empower-b}, the agent can independently maximize the intermediate black-state reward and the delayed red-state reward. Independent maximization is not possible in \protect\subref{empower-a}.  
    }
    \label{fig:empower_fail}
\end{figure}
 
$\pwrNoDist$ returns intuitive answers in these situations. $\lim_{\gamma\to 1}\pwr[\col{blue}{s_1},\gamma]$ converges by \cref{thm:cont-power}. Consider the obvious involution $\phi$ which takes each state in \cref{empower-a} to its counterpart in \cref{empower-b}. Since $\phi\cdot\Fnd(\col{blue}{s_3})\subsetneq \Fnd(\col{blue}{s_4})=\F(\col{blue}{s_4})$, \cref{prop:more-opt} proves that $\forall \gamma\in [0,1]:\pwr[\col{blue}{s_3},\gamma][\Dbd]\leqMost[][\DSetBd]\pwr[\col{blue}{s_4},\gamma][\Dbd]$, with the proof of \cref{prop:more-opt} showing strict inequality under all $\Diid$ when $\gamma\in(0,1)$.
 
Empowerment can be adjusted to account for these cases, perhaps by considering the channel capacity between the agent's actions and the state trajectories induced by stationary policies. However, since $\pwrNoDist$ is formulated in terms of optimal value, we believe that $\pwrNoDist$ is better suited for {\mdp}s than information-theoretic empowerment is.
\section{Seeking \texorpdfstring{$\pwrNoDist$}{POWER} can be a detour}\label{app:not-always}
\newcommand*{\sne}{s_3}
\newcommand*{\sn}{s_2}
\newcommand*{\startApp}{\col{blue}{s_1}}
\begin{remark}
The results of appendix \ref{app:proofs} do not depend on this section's results.
\end{remark}

One might suspect that optimal policies tautologically tend to seek $\pwrNoDist$. This intuition is wrong.

\begin{figure}[!ht]\centering
    \includegraphics{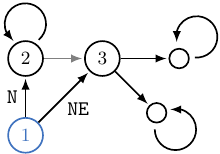}
    \caption{}\label{fig:power-not-ic}
\end{figure} 

\begin{restatable}[Greater $\pwr$ does not imply greater $\Prb_{\Dbd}$]{prop}{PowerNotIC}\label{prop:power-not-ic} Action $a$ seeking more $\pwr$ than $a'$ at state $s$ and $\gamma$ does not imply that $\optprob[\Dbd]{s,a,\gamma}\geq\optprob[\Dbd]{s,a',\gamma}$.
\end{restatable}
\begin{proof}
Consider the environment of \cref{fig:power-not-ic}. Let $\Dist_u\defeq \text{unif}(0,1)$, and consider $\Diid[\Dist_u]$, which has bounded support. Direct computation\footnote{In small deterministic {\mdp}s, the $\pwrNoDist$ and optimality probability of the maximum-entropy reward function distribution can be computed using  \href{https://github.com/loganriggs/Optimal-Policies-Tend-To-Seek-Power}{https://github.com/loganriggs/Optimal-Policies-Tend-To-Seek-Power}.} of the $\pwrNoDist$ expectation (\cref{def:power}) yields $\pwr[\sn,1][\Diid[\Dist_u]]=\frac{3}{4}> \frac{2}{3}=\pwr[\sne,1][\Diid[\Dist_u]]$. Therefore, $\texttt{N}$ seeks more $\pwr[][\Diid[\Dist_u]]$ than $\texttt{NE}$ at state $\startApp$ and $\gamma=1$. 

However, $\optprob[\Diid[\Dist_u]]{\startApp,\texttt{N},1}=\frac{1}{3}<\frac{2}{3}=\optprob[\Diid[\Dist_u]]{\startApp,\texttt{NE},1}$.
\end{proof}

\begin{restatable}[Fraction of orbits which agree on weak optimality]{lem}{fracOrbi}\label{lem:half-orbit-geq}
Let $\distSet\subseteq \Delta(\rewardVS)$, and suppose $f_1,f_2:\Delta(\rewardVS)\to \reals$ are such that $f_1(\D) \geqMost[][\distSet] f_2(\D)$. Then for all $\D\in\distSet$, $\frac{\abs{\set{\D' \in\orbi[\D]\mid f_1(\D')\geq f_2(\D')}}}{\abs{\orbi[\D]}}\geq \dfrac{1}{2}$.
\end{restatable}
\begin{proof}
All $\D'\in \orbi[\D]$ such that $f_1(\D')= f_2(\D')$ satisfy $f_1(\D')\geq f_2(\D')$.

Otherwise, consider the $\D'\in \orbi[\D]$ such that $f_1(\D')\neq f_2(\D')$. By the definition of $\geqMost[][]$ (\cref{def:ineq-most-dists}), at least $\frac{1}{2}$ of these $\D'$ satisfy $f_1(\D')> f_2(\D')$, in which case $f_1(\D')\geq f_2(\D')$. Then the desired inequality follows.
\end{proof}

\begin{restatable}[$\geq_\text{most}$ and trivial orbits]{lem}{identGeqMost}\label{lem:ident-geq-most}
Let $\distSet\subseteq \Delta(\rewardVS)$ and suppose $f_1(\D) \geqMost[][\distSet] f_2(\D)$. For all reward function distributions $\D\in\distSet$ with one-element orbits, $f_1(\D)\geq f_2(\D)$. In particular, $\D$ has a one-element orbit when it distributes reward identically and independently ({\iid}) across states.
\end{restatable}
\begin{proof}
By \cref{lem:half-orbit-geq}, at least half of the elements $\D'\in \orbi[\D]$ satisfy $f_1(\D')\geq f_2(\D')$. But $\abs{\orbi[\D]}=1$, and so $f_1(\D)\geq f_2(\D)$ must hold.

If $\D$ is {\iid}, it has a one-element orbit due to the assumed identical distribution of reward.
\end{proof}

\begin{restatable}[Actions which tend to seek $\pwrNoDist$ do not necessarily tend to be optimal]{prop}{PowerNotICMost}\label{prop:power-not-ic-most} Action $a$ tending to seek more $\pwrNoDist$ than $a'$ at state $s$ and $\gamma$ does not imply that $\optprob[\Dany]{s,a,\gamma}\geqMost \optprob[\Dany]{s,a',\gamma}$.
\end{restatable}
\begin{proof}
Consider the environment of \cref{fig:power-not-ic}. Since $\RSDnd[\sne]\subsetneq\RSD[\sn]$, \cref{RSDSimPower} shows that $\pwr[\sn,1][\Dbd]\geqMost[][\DSetBd] \pwr[\sne,1][\Dbd]$ via $s'\defeq \sne,s\defeq \sn,\phi$ the identity permutation (which is an involution). Therefore, $\texttt{N}$ tends to seek more $\pwrNoDist$ than $\texttt{NE}$ at state $\startApp$ and $\gamma=1$.

If $\optprob[\Dany]{\startApp,\texttt{N},1}\geqMost\optprob[\Dany]{\startApp,\texttt{NE},1}$, then \cref{lem:ident-geq-most} shows that $\optprob[\Diid]{\startApp,\texttt{N},1}\geq\optprob[\Diid]{\startApp,\texttt{NE},1}$ for all $\Diid$. But the proof of \cref{prop:power-not-ic} showed that $\optprob[\Diid[\Dist_u]]{\startApp,\texttt{N},1}<\optprob[\Diid[\Dist_u]]{\startApp,\texttt{NE},1}$ for $\Dist_u\defeq\text{unif}(0,1)$. Therefore, it cannot be true that $\optprob[\Dany]{\startApp,\texttt{N},1}\geqMost\optprob[\Dany]{\startApp,\texttt{NE},1}$.
\end{proof} 
\section{Sub-optimal \texorpdfstring{$\pwrNoDist$}{POWER}}\label{sec:suboptimal-power}
In certain situations, $\pwrNoDist$ returns intuitively surprising verdicts. There exists a policy under which the reader chooses a winning lottery ticket, but it seems wrong to say that the reader has the power to win the lottery with high probability. For various reasons, humans and other bounded agents are generally incapable of computing optimal policies for arbitrary objectives. More formally, consider the rewardless {\mdp} of \cref{fig:subopt-power}.

\begin{figure}[ht]
    \centering
    \includegraphics{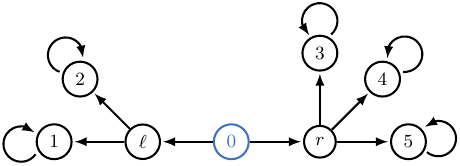}
    \caption{$\col{blue}{s_0}$ is the starting state, and $\abs{\A}=10^{10^{10}}$. At $\col{blue}{s_0}$, half of the actions lead to $s_\ell$, while the other half lead to $s_r$. Similarly, half of the actions at $s_\ell$ lead to $s_1$, while the other half lead to $s_2$. At $s_r$, one action leads to $s_3$, one action leads to $s_4$, and the remaining $10^{10^{10}}-2$ actions lead to $s_5$.  \label{fig:subopt-power}}
\end{figure}  

Consider a model-based RL agent with black-box simulator access to this environment. The agent has no prior information about the model, and so it acts randomly. Before long, the agent has probably learned how to navigate from $\col{blue}{s_0}$ to states $s_\ell$, $s_r$, $s_1$, $s_2$, and $s_5$. However, over any reasonable timescale, it is extremely improbable that the agent discovers the two actions respectively leading to $s_3$ and $s_4$.

Even provided with a reward function $R$ and the discount rate $\gamma$, the agent has yet to learn the relevant environmental dynamics, and so many of its policies are far from optimal. Although  \cref{prop:more-opt} shows that $\forall \gamma \in[0,1]: \pwr[s_\ell,\gamma][\Dbd] \leqMost[][\DSetBd] \pwr[s_r,\gamma][\Dbd]$, there is a sense in which $s_\ell$ gives this agent more power. 

We formalize a bounded agent's goal-achievement capabilities with a function $\pol[]$, which takes as input a reward function and a discount rate, and returns a policy. Informally, this is the best policy which the agent knows about. We can then calculate $\pwr$ with respect to $\pol[]$. 

\begin{restatable}[Suboptimal $\pwrNoDist$]{definition}{powPolicy}\label{def:pow-pol} Let $\Pi_\Delta$ be the set of stationary stochastic policies, and let $\text{pol} : \rewardSpace \times [0,1] \to \Pi_\Delta$. For $\gamma \in [0,1]$,
\begin{align}
\pwrPol{s, \gamma}\defeq \E{\substack{R\sim\Dbd,\\a\sim \pol(s),\\s'\sim T\prn{s,a}}}{\lim_{\gamma^*\to \gamma} (1-\gamma^*)\Vf[\pol]{s',\gamma^*}}.
\end{align}

By \cref{lem:power-id}, $\pwr$ is the special case where $\forall R\in\rewardSpace,\gamma\in[0,1]: \pol\in \optPi$. We define $\pwrPol{}$-seeking similarly as in \cref{def:pow-seek}. 
\end{restatable}

$\pwrPol{\col{blue}{s_0},1}$ increases as the policies returned by $\pol[]$ are improved. We illustrate this by considering the $\Diid$ case.

\begin{enumerate}
\item[$\text{pol}_1$] The model is initially unknown, and so $\forall R,\gamma:\text{pol}_1(R,\gamma)$ is a uniformly random policy. Since $\text{pol}_1$ is constant on its inputs, $\pwrPol[\text{pol}_1][\Diid]{\col{blue}{s_0},1}=\EX$ by the linearity of expectation and the fact that $\Diid$ distributes reward independently and identically across states.
\item[$\text{pol}_2$] The agent knows the dynamics, except that it does not know how to reach $s_3$ or $s_4$. At this point, $\text{pol}_2(R,1)$ navigates from $\col{blue}{s_0}$ to the average-optimal choice among three terminal states: $s_1$, $s_2$, and $s_5$. Therefore, $\pwrPol[\text{pol}_2]{\col{blue}{s_0},1}=\Edraws{3}$. 
\item[$\text{pol}_3$] The agent knows the dynamics, the environment is small enough to solve explicitly, and so $\forall R,\gamma:\text{pol}_3(R,\gamma)$ is an optimal policy. $\text{pol}_3(R,1)$ navigates from $\col{blue}{s_0}$ to the average-optimal choice among all five terminal states. Therefore, $\pwrPol[\text{pol}_3]{\col{blue}{s_0},1}=\Edraws{5}$.
\end{enumerate}

As the agent learns more about the environment and improves $\pol[]$, the agent's $\pwrPol{}$ increases. The agent seeks $\pwrPol[\text{pol}_2]{}$ by navigating to $s_\ell$ instead of $s_r$, but seeks more $\pwr$ by navigating to $s_r$ instead of $s_\ell$. Intuitively, bounded agents gain power by improving $\pol[]$ and by formally seeking $\pwrPol{}$ within the environment.
\section{Lists of results}\label{app:thms}

\renewcommand\listtheoremname{List of definitions}
\listoftheorems[ignoreall, show={definition}]  

\renewcommand\listtheoremname{List of theorems}
\ignoretheorems{cor-no-num,conjecture,remark,definition}
\listoftheorems[]  

\subsection{Contributions of independent interest}\label{app:contrib}
We developed new basic {\mdp} theory by exploring the structural properties of visit distribution functions. Echoing \citet{wang_dual_2007,wang_stable_2008}, we believe that this area is interesting and underexplored.

\subsubsection{Optimal value theory} 
\Cref{lem:norm-value-lip} shows that $f(\gamma^*)\defeq\lim_{\gamma^*\to \gamma}(1-\gamma^*)\OptVf{s,\gamma^*}$ is Lipschitz continuous on $\gamma\in[0,1]$, with Lipschitz constant depending only on $\lone{R}$.  For all states $s$ and policies $\pi\in\Pi$, \cref{smoothOnPol} shows that $V^\pi_R(s,\gamma)$ is rational on $\gamma$. 

Optimal value has a well-known dual formulation: $\OptVf{s,\gamma}=\max_{\f\in\F(s)}\f(\gamma)^\top \rf$. 
\begin{restatable*}[$\forall\gamma\in[0,1):\OptVf{s,\gamma}=\max_{\f\in\Fnd(s)}\f(\gamma)^\top \rf$]{lem}{optVfFndRestrict}\label{cor:opt-vf-restrict-fnd}
\end{restatable*}
In a fixed rewardless {\mdp}, \cref{cor:opt-vf-restrict-fnd} may enable more efficient computation of optimal value functions for multiple reward functions.

\subsubsection{Optimal policy theory} 
\Cref{transferDiscount} demonstrates how to preserve optimal incentives while changing the discount rate.

\begin{restatable*}[How to transfer optimal policy sets across discount rates]{prop}{transferDiscount}\label{transferDiscount}
Suppose reward function $R$ has optimal policy set $\optPi[R,\gamma]$ at discount rate $\gamma\in(0,1)$. For any $\gamma^*\in(0,1)$, we can construct a reward function $R'$ such that $\optPi[R',\gamma^*]=\optPi$. Furthermore, $\OptVf[R']{\cdot,\gamma^*}=\OptVf[R]{\cdot, \gamma}$.
\end{restatable*} 

\subsubsection{Visit distribution theory} 
While \citet{regan_robust_2010} consider  a visit distribution function $\f\in\F(s)$ to be non-dominated if it is optimal for some reward function in a set $\R\subseteq \rewardVS$, our stricter \cref{def:nd} considers $\f$ to be non-dominated when $\exists\rf\in\rewardVS, \gamma\in(0,1):\f(\gamma)^\top \rf > \max_{\f'\in\F(s)\setminus\set{\f}}\f'(\gamma)^\top\rf$.
\section{Theoretical results} \label{app:proofs}
\begin{restatable}[A policy is optimal iff it induces an optimal visit distribution at every state]{lem}{optShare}\label{lem:opt-pol-visit-iff}
Let $\gamma \in (0,1)$ and let $R$ be a reward function. $\pi \in \optPi$ iff $\pi$ induces an optimal visit distribution at every state. 
\end{restatable}
\begin{proof}
By definition, a policy $\pi$ is optimal iff $\pi$ induces the maximal on-policy value at each state, which is true iff $\pi$ induces an optimal visit distribution at every state (by the dual formulation of optimal value functions). 
\end{proof}

\begin{restatable}[Transition matrix induced by a policy]{definition}{transMatrix}\label{def:trans-matrix}
$\mathbf{T}^\pi$ is the transition matrix induced by policy $\pi \in \Pi$, where $\mathbf{T}^\pi\unitvec \defeq T(s,\pi(s))$. $(\mathbf{T}^\pi)^t \unitvec$ gives the probability distribution over the states visited at time step $t$, after following $\pi$ for $t$ steps from $s$.
\end{restatable} 

\begin{restatable}[Properties of visit distribution functions]{prop}{visitDistProperties}\label{prop:visit-dist-prop} 
Let $s,s'\in\St,\fpi{s} \in \F(s)$. 
\begin{enumerate}
    \item $\fpi{s}(\gamma)$ is element-wise non-negative and element-wise monotonically increasing on $\gamma\in[0,1)$. \label{item:mono-increase}
    \item $\forall \gamma \in [0,1): \lone{\fpi{s}(\gamma)}=\geom$.\label{item:lone-visit}
\end{enumerate}
\end{restatable}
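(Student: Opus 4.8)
The plan is to unwind both claims into elementary statements about the scalar power series defining each coordinate of $\fpi{s}$. First I would introduce the shorthand $p_t(s') \defeq \prob{S_t = s' \mid \pi,\, S_0 = s}$ for the probability that the agent occupies $s'$ after $t$ steps of following $\pi$ from $s$. Dotting the definition of $\fpi{s}(\gamma)$ with $\unitvec[s']$ gives $\fpi{s}_{s'}(\gamma) = \sum_{t=0}^\infty \gamma^t p_t(s')$, since $\E{}{\unitvec[S_t]}^\top\unitvec[s'] = p_t(s')$. Two facts about $\set{p_t(s')}$ do all the work: each $p_t(s') \in [0,1]$, and for every fixed $t$ the numbers $p_t(\cdot)$ form a probability distribution over the finite set $\St$, so $\sum_{s'\in\St} p_t(s') = 1$. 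Because $p_t(s')\le 1$, the series $\sum_t \gamma^t p_t(s')$ is dominated by $\sum_t\gamma^t$ and hence converges absolutely for $\gamma\in[0,1)$.

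For \cref{item:mono-increase}, non-negativity is immediate: for $\gamma\ge 0$ every term $\gamma^t p_t(s')$ is non-negative. For monotonicity I would note that for each fixed $t$ the map $\gamma\mapsto \gamma^t p_t(s')$ is non-decreasing on $[0,1)$ (it is the non-decreasing function $\gamma^t$ scaled by the non-negative constant $p_t(s')$), so $\fpi{s}_{s'}$ is a pointwise-convergent sum of non-decreasing functions and is therefore itself non-decreasing; equivalently, one may differentiate the power series term by term inside its radius of convergence to get $\frac{d}{d\gamma}\fpi{s}_{s'}(\gamma) = \sum_{t\ge 1} t\gamma^{t-1}p_t(s')\ge 0$.

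For \cref{item:lone-visit}, \cref{item:mono-increase} guarantees every coordinate of $\fpi{s}(\gamma)$ is non-negative, so $\lone{\fpi{s}(\gamma)} = \sum_{s'\in\St}\fpi{s}_{s'}(\gamma) = \sum_{s'\in\St}\sum_{t=0}^\infty \gamma^t p_t(s')$. Interchanging the two sums --- legitimate by Tonelli's theorem since all summands are non-negative, and in any case trivial because the outer sum over $\St$ is finite --- yields $\sum_{t=0}^\infty \gamma^t\sum_{s'\in\St} p_t(s') = \sum_{t=0}^\infty \gamma^t = \geom$, where the inner sum equals $1$ because $p_t(\cdot)$ is a distribution on $\St$.

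I do not anticipate a genuine obstacle: this is a bookkeeping lemma. The only points requiring care are the interchange of the (finite) state sum with the infinite time sum and the term-by-term differentiation/monotonicity argument, both of which are controlled by the uniform bound $p_t(s')\le 1$ together with the normalization $\sum_{s'} p_t(s') = 1$. The $\gamma = 0$ endpoint is consistent as well, since there $\fpi{s}(\gamma)$ reduces to $\unitvec[s]$, whose $\ell_1$ norm is $1$, agreeing with $\geom$ evaluated at $\gamma = 0$.
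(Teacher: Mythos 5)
Your proof is correct and takes essentially the same route as the paper's: you expand each coordinate as the power series $\sum_t \gamma^t p_t(s')$ with non-negative coefficients, whereas the paper writes the same quantity as $\sum_t (\gamma\mathbf{T}^\pi)^t\unitvec$ and invokes left-stochasticity — your $p_t(\cdot)$ being a probability distribution is exactly the paper's $\lone{(\mathbf{T}^\pi)^t\unitvec}=1$. The extra care you take with Tonelli and term-by-term monotonicity is fine but not a different idea.
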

\begin{proof}
\Cref{item:mono-increase}: by examination of \cref{def:visit}, $\fpi{s} = \sum_{t=0}^\infty \prn{\gamma\mathbf{T}^\pi}^t\unitvec$. Since each $\prn{\mathbf{T}^\pi}^t$ is left stochastic and $\unitvec$ is the standard unit vector, each entry in each summand is non-negative. Therefore, $\forall \gamma \in [0,1): \fpi{s}(\gamma)^\top \unitvec[s']\geq 0$, and this function monotonically increases on $\gamma$.

\Cref{item:lone-visit}:
\begin{align}
    \lone{\fpi{s}(\gamma)}&=\lone{\sum_{t=0}^\infty \prn{\gamma\mathbf{T}^\pi}^t\unitvec}\\
    &= \sum_{t=0}^\infty\gamma^t\lone{ \prn{\mathbf{T}^\pi}^t\unitvec}\label{eq:non-negative-norm}\\
    &= \sum_{t=0}^\infty \gamma^t\label{eq:left-stoch}\\
    &= \geom.
\end{align}

\Cref{eq:non-negative-norm} follows because all entries in each $\prn{\mathbf{T}^\pi}^t\unitvec$ are non-negative by  \cref{item:mono-increase}. \Cref{eq:left-stoch} follows because each $\prn{\mathbf{T}^\pi}^t$ is left stochastic and $\unitvec$ is a stochastic vector, and so $\lone{\prn{\mathbf{T}^\pi}^t\unitvec}=1$. 
\end{proof}

\begin{restatable}[$\f\in\F(s)$ is multivariate rational on $\gamma$]{lem}{fRat}\label{f-rat}
$\fpi{}\in\F(s)$ is a multivariate rational function on $\gamma\in[0,1)$.
\end{restatable}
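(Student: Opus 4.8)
The plan is to use the closed-form Neumann-series representation of visit distributions together with Cramer's rule. First, recall from the proof of \cref{prop:visit-dist-prop} that $\fpi{s}(\gamma) = \sum_{t=0}^\infty \prn{\gamma \mathbf{T}^\pi}^t\unitvec[s]$, where $\mathbf{T}^\pi$ is the left-stochastic state-transition matrix induced by $\pi$. Since $\mathbf{T}^\pi$ is stochastic, its spectral radius equals $1$, so for every $\gamma\in[0,1)$ the matrix $\gamma\mathbf{T}^\pi$ has spectral radius at most $\gamma<1$; hence the Neumann series converges, $I-\gamma\mathbf{T}^\pi$ is invertible, and $\fpi{s}(\gamma) = \prn{I-\gamma\mathbf{T}^\pi}^{-1}\unitvec[s]$.

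Next I would apply Cramer's rule to $M(\gamma)\defeq I-\gamma\mathbf{T}^\pi$, writing $M(\gamma)^{-1}=\frac{1}{\det M(\gamma)}\operatorname{adj}M(\gamma)$. Every entry of $M(\gamma)$ is an affine (degree $\leq 1$) function of $\gamma$, namely $[M(\gamma)]_{ij}=[I]_{ij}-\gamma[\mathbf{T}^\pi]_{ij}$. Consequently $\det M(\gamma)$ is a polynomial in $\gamma$ of degree at most $\abs{\St}$, and each entry of $\operatorname{adj}M(\gamma)$ — a signed $(\abs{\St}-1)\times(\abs{\St}-1)$ minor of $M(\gamma)$ — is a polynomial in $\gamma$ of degree at most $\abs{\St}-1$. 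Moreover $\det M(\gamma)\neq 0$ for all $\gamma\in[0,1)$ by the invertibility established above, so $\det M(\gamma)$ is not the zero polynomial and the quotient is a well-defined rational function on $[0,1)$.

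Finally, reading off the $s'$-th coordinate gives $\fpi{s}_{s'}(\gamma)=\dfrac{\prn{\operatorname{adj}M(\gamma)\,\unitvec[s]}_{s'}}{\det M(\gamma)}$, a ratio of two polynomials in $\gamma$ whose denominator does not vanish on $[0,1)$ — i.e., a rational function of $\gamma$. Since this holds for every coordinate $s'$, $\fpi{s}$ is a vector-valued (hence ``multivariate'') rational function of $\gamma$ on $[0,1)$; running the identical Cramer's-rule computation with the transition probabilities $[\mathbf{T}^\pi]_{ij}$ also regarded as indeterminates shows, if that stronger reading is intended, that $\fpi{s}$ is jointly rational in $\gamma$ and the transition probabilities. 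I do not anticipate a real obstacle here: the only point needing care is justifying that $I-\gamma\mathbf{T}^\pi$ is invertible throughout $[0,1)$ (so the Cramer expression is valid on the whole domain and the denominator is not identically zero), which is exactly the spectral-radius bound noted above.
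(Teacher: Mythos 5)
Your proof is correct and takes essentially the same approach as the paper: both arguments reduce to Cramer's rule applied to $\mathbf{I}-\gamma\mathbf{T}^\pi$, whose entries are affine in $\gamma$, yielding each coordinate of $\fpi{s}(\gamma)$ as a ratio of polynomials of degree at most $\abs{\St}$. The only cosmetic difference is that you invert the matrix directly via the adjugate, whereas the paper routes through the Bellman equation $\prn{\mathbf{I}-\gamma\mathbf{T}^\pi}\mathbf{v}^\pi_R=\rf$ and specializes to state-indicator reward functions; your explicit check that $\det\prn{\mathbf{I}-\gamma\mathbf{T}^\pi}\neq 0$ on $[0,1)$ is a welcome detail the paper leaves implicit.
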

\begin{proof}
Let $\rf\in\rewardVS$ and consider $\fpi{}\in\F(s)$. Let $\mathbf{v}^\pi_R$ be the $\OptVf{s,\gamma}$ function in column vector form, with one entry per state value.

By the Bellman equations, $\mathbf{v}^\pi_R = \prn{\mathbf{I}-\gamma\mathbf{T}^\pi}\inv\rf.$ Let $\mathbf{A}_\gamma \defeq \prn{\mathbf{I}-\gamma\mathbf{T}^\pi}\inv$, and for state $s$, form $ \mathbf{A}_{s,\gamma}$ by replacing  $\mathbf{A}_\gamma$'s column for state $s$ with $\rf$. As noted by \citet{lippman1968set}, by Cramer's rule, $V^\pi_R(s,\gamma)=\frac{\det{\mathbf{A}_{s,\gamma}}}{\det\mathbf{A}_\gamma}$ is a rational function with numerator and denominator having degree at most $\abs{\St}$.

In particular, for each state indicator reward function $\unitvec[s_i]$, $V^\pi_{s_i}(s,\gamma)=\fpi{s}(\gamma)^\top\unitvec[s_i]$ is a rational function of $\gamma$ whose numerator and denominator each have degree at most $\abs{\St}$. This implies that $\f^\pi(\gamma)$ is multivariate rational on $\gamma\in[0,1)$.
\end{proof}

\begin{restatable}[On-policy value is rational on $\gamma$]{cor}{smoothOnPol}\label{smoothOnPol}
Let $\pi\in\Pi$ and $R$ be any reward function. $V^\pi_R(s,\gamma)$ is rational on  $\gamma\in[0,1)$. 
\end{restatable}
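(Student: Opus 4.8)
The plan is to reduce the claim directly to \cref{f-rat}. First I would record the identity $V^\pi_R(s,\gamma) = \fpi{s}(\gamma)^\top\rf$, which is immediate from \cref{def:visit}: expanding the expected discounted return,
$V^\pi_R(s,\gamma) = \sum_{t=0}^\infty \gamma^t\,\E{s'}{R(s')\mid \pi \text{ followed for } t \text{ steps from } s} = \fpi{s}(\gamma)^\top\rf$.
With $R$ (hence the column vector $\rf$) fixed, this exhibits $V^\pi_R(s,\gamma)$ as a finite, fixed linear combination of the coordinate functions $\fpi{s}_{s'}(\gamma)$. By \cref{f-rat}, $\fpi{s}$ is multivariate rational on $\gamma\in[0,1)$, i.e.\ each coordinate is a ratio of polynomials in $\gamma$; since rational functions are closed under finite linear combinations with constant coefficients, $V^\pi_R(s,\gamma)$ is rational on $[0,1)$.

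Alternatively — and this is really just the argument inside the proof of \cref{f-rat} applied to an arbitrary $R$ rather than to the state-indicator rewards — I would invoke Cramer's rule directly. Writing $\mathbf{A}_\gamma \defeq \mathbf{I}-\gamma\mathbf{T}^\pi$ and forming $\mathbf{A}_{s,\gamma}$ by replacing the $s$-column of $\mathbf{A}_\gamma$ with $\rf$, the Bellman equation $\prn{\mathbf{I}-\gamma\mathbf{T}^\pi}\mathbf{v}^\pi_R = \rf$ gives $V^\pi_R(s,\gamma) = \det(\mathbf{A}_{s,\gamma})/\det(\mathbf{A}_\gamma)$. Each entry of $\mathbf{A}_\gamma$ is affine in $\gamma$, so both determinants are polynomials in $\gamma$ of degree at most $\abs{\St}$; moreover $\det(\mathbf{A}_\gamma)\neq 0$ for $\gamma\in[0,1)$ because the spectral radius of $\gamma\mathbf{T}^\pi$ is at most $\gamma<1$, so $\mathbf{A}_\gamma$ is invertible there. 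This presents $V^\pi_R(s,\gamma)$ explicitly as a ratio of polynomials with non-vanishing denominator on the whole interval $[0,1)$.

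There is essentially no obstacle: the corollary is an immediate specialization of \cref{f-rat}, the only points needing a sentence being (i) that the coefficient vector $\rf$ is constant in $\gamma$, so linearity of the map $\f\mapsto\f^\top\rf$ preserves rationality, and (ii) that the denominator is nonzero throughout $[0,1)$ so that ``rational function on $[0,1)$'' is meaningful pointwise. I would present the Cramer's-rule version as the proof proper, since it is self-contained and makes the non-vanishing of the denominator explicit.
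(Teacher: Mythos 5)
Your proposal is correct and follows essentially the same route as the paper: the paper's proof is exactly the one-line reduction $V^\pi_R(s,\gamma)=\fpi{s}(\gamma)^\top\rf$ plus an appeal to \cref{f-rat}. Your alternative Cramer's-rule derivation is just the argument inside the proof of \cref{f-rat} applied to a general $R$, so it adds self-containedness (and makes the non-vanishing denominator explicit) but is not a genuinely different approach.
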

\begin{proof}
$V^\pi_R(s,\gamma)=\fpi{s}(\gamma)^\top \rf$, and $\f$ is a multivariate rational function of $\gamma$ by \cref{f-rat}. Therefore, for fixed $\rf$, $\fpi{s}(\gamma)^\top \rf$ is a rational function of $\gamma$.
\end{proof}

\subsection{Non-dominated visit distribution functions}\label{sec:nondom}

\begin{restatable}[Continuous reward function distribution]{definition}{dCont}
Results with $\Dcont$ hold for any absolutely continuous reward function distribution.
\end{restatable}

\begin{remark}
We assume $\rewardVS$ is endowed with the standard topology.
\end{remark}

\begin{restatable}[Distinct linear functionals disagree almost everywhere on their domains]{lem}{distinctLin}\label{lem:distinct-lin-prob}
Let $\x,\x'\in\rewardVS$ be distinct. $\optprob[\rf\sim\Dcont]{\x^\top \rf = \x'^\top\rf}=0$.
\end{restatable}
\begin{proof}
$\set{\rf \in\rewardVS\mid (\x-\x')^\top \rf=0}$ is a hyperplane since $\x-\x'\neq \mathbf{0}$. Therefore, it has no interior in the standard topology on $\rewardVS$. Since this empty-interior set is also convex, it has zero Lebesgue measure. By the Radon-Nikodym theorem, it has zero measure under any continuous distribution $\Dcont$.
\end{proof} 

\begin{restatable}[Unique maximization of almost all vectors]{cor}{uniqueMax}\label{cor:distinct-maximized}
Let $X\subsetneq\rewardVS$ be finite. $\optprob[\rf\sim\Dcont]{\abs{\argmax_{\x''\in X} \x''^\top \rf}>1}=0$. 
\end{restatable}
\begin{proof}
Let $\x,\x'\in X$ be distinct. For any $\rf\in\rewardVS$, $\x,\x'\in \argmax_{\x''\in X} \x''^\top \rf$ iff $\x^\top \rf = \x'^\top\rf\geq \max_{\x''\in X\setminus \set{\x,\x'}} \x''^\top \rf$. By \cref{lem:distinct-lin-prob}, $\x^\top \rf = \x'^\top\rf$ holds with probability 0 under any $\Dcont$.
\end{proof} 

\subsubsection{Generalized non-domination results}
Our formalism includes both $\Fnd(s)$ and $\RSDnd$; we therefore prove results that are applicable to both. 

\begin{restatable}[Non-dominated linear functionals]{definition}{ndLinFunc}\label{def:nd-lin-func}
Let $X\subsetneq \rewardVS$ be finite. $\ND{X}\defeq \set{\x\in X\mid \exists \rf \in \rewardVS: \x^\top \rf > \max_{\x'\in X\setminus \set{\x}} \x'^\top \rf}$.
\end{restatable}

\begin{restatable}[All vectors are maximized by a non-dominated linear functional]{lem}{allRFMaxND}\label{lem:all-rf-max-nd}
Let $\rf \in \rewardVS$ and let $X\subsetneq \rewardVS$ be finite and non-empty. $\exists \x^* \in \ND{X}: \x^{*\top}\rf=\max_{\x \in X} \x^\top \rf$.
\end{restatable}
\begin{proof}
Let $A(\rf\mid X)\defeq \argmax_{\x\in X} \x^\top \rf=\set{\x_1,\ldots, \x_n}$. Then 
\begin{align}
    \x_1^\top \rf = \cdots= \x_n^\top \rf> \max_{\x' \in X \setminus A(\rf\mid X)} \x'^\top \rf.\label{eq:expr-cont-nd-func}
\end{align}

In \cref{eq:expr-cont-nd-func}, each $\x^\top \rf$ expression is linear on $\rf$. The $\max$ is piecewise linear on $\rf$ since it is the maximum of a finite set of linear functionals. In particular, all expressions in \cref{eq:expr-cont-nd-func} are continuous on $\rf$, and so we can find some $\delta>0$ neighborhood $B(\rf,\delta)$ such that $\forall \rf' \in B(\rf,\delta): \max_{\x_i\in A(\rf\mid X)} \x_i^\top \rf' > \max_{\x'\in X\setminus A(\rf\mid X)} \x'^\top \rf'$.

But almost all $\rf' \in B(\rf,\delta)$ are maximized by a unique functional $\x^*$ by \cref{cor:distinct-maximized}; in particular, at least one such $\rf''$ exists. Formally, $\exists \rf''\in B(\rf,\delta):\x^{*\top} \rf''>\max_{\x' \in X\setminus \set{\x^*}} \x'^\top \rf''$. Therefore, $\x^*\in \ND{X}$ by \cref{def:nd-lin-func}. 

$\x^{*\top} \rf'\geq\max_{\x_i\in A(\rf\mid X)} \x_i^\top \rf' > \max_{\x'\in X\setminus A(\rf\mid X)} \x'^\top \rf'$, with the strict inequality following because  $\rf'' \in B(\rf,\delta)$. These inequalities imply that $\x^*\in A(\rf\mid X)$.
\end{proof}

\begin{restatable}[Maximal value is invariant to restriction to non-dominated functionals]{cor}{ndFuncIndif}\label{cor:nd-func-indif}
Let $\rf \in \rewardVS$ and let $X\subsetneq \rewardVS$ be finite. $\max_{\x\in X} \x^\top \rf=\max_{\x\in \ND{X}} \x^\top \rf$. 
\end{restatable}
\begin{proof}
If $X$ is empty, holds trivially. Otherwise, apply \cref{lem:all-rf-max-nd}.
\end{proof}

\begin{restatable}[How non-domination containment affects optimal value]{lem}{ndOptContain}\label{lem:nd-opt-contain}
Let $\rf \in \rewardVS$ and let $X,X'\subsetneq \rewardVS$ be finite. 
\begin{enumerate}
    \item If $\ND{X}\subseteq X'$, then $\max_{\x\in X} \x^\top \rf\leq\max_{\x'\in X'} \x'^\top \rf$.\label{item:ND-contain-max}
    \item If $\ND{X}\subseteq X'\subseteq X$, then $\max_{\x\in X} \x^\top \rf=\max_{\x'\in X'} \x'^\top \rf$.\label{item:ND-contain-max-2}
\end{enumerate}
\end{restatable}
\begin{proof}
\Cref{item:ND-contain-max}: 
\begin{align}
\max_{\x\in X} \x^\top \rf&=\max_{\x\in \ND{X}} \x^\top \rf\label{eq:item-max}\\
&\leq \max_{\x'\in X'} \x'^\top \rf.\label{eq:item-max-2}
\end{align}
\Cref{eq:item-max} follows by \cref{cor:nd-func-indif}. \Cref{eq:item-max-2} follows because $\ND{X}\subseteq X'$.

\Cref{item:ND-contain-max-2}: by \cref{item:ND-contain-max}, $\max_{\x\in X} \x^\top \rf\leq\max_{\x'\in X'} \x'^\top \rf$. Since $X'\subseteq X$, we also have $\max_{\x\in X} \x^\top \rf\geq\max_{\x'\in X'} \x'^\top \rf$, and so equality must hold.
\end{proof}

\begin{restatable}[Non-dominated vector functions]{definition}{ndVecFun}\label{def:nd-vec-func} 
Let $I\subseteq \reals$ and let $F\subsetneq \prn{\rewardVS}^I$ be a finite set of vector-valued functions on $I$. $\ND{F}\defeq \set{\f\in F\mid \exists \gamma\in I, \rf \in \rewardVS: \f(\gamma)^\top \rf> \max_{\f'\in F\setminus \set{\f}}\f'(\gamma)^\top \rf}$.
\end{restatable}

\begin{remark}
$\Fnd(s)=\ND{\F(s)}$ by \cref{def:nd}.
\end{remark}

\begin{restatable}[Affine transformation of visit distribution sets]{definition}{affTransfShorthand}\label{def:aff-transf-short}
For notational convenience, we define set-scalar multiplication and set-vector addition on $X\subseteq \rewardVS$: for $c\in\reals$, $cX\defeq \set{c\x\mid\x\in X}$. For $\av \in \rewardVS$, $X+\av\defeq \set{\x+\av\mid \x \in X}$. Similar operations hold when $X$ is a set of vector functions $\reals\mapsto \rewardVS$.
\end{restatable}

\begin{restatable}[Invariance of non-domination under positive affine transform]{lem}{posAffNDInvar}\label{lem:pos-aff-nd-invar}\hfill
\begin{enumerate}
    \item Let $X\subsetneq \rewardVS$ be finite. If $\x\in \ND{X}$, then $\forall c>0, \av\in\rewardVS: (c\x+\av) \in \ND{cX+\av}$.\label{item:invar-vectors}
    \item Let $I\subseteq \reals$ and let $F\subsetneq \prn{\rewardVS}^I$ be a finite set of vector-valued functions on $I$. If $\f \in \ND{F}$, then $\forall c>0, \av\in\rewardVS: (c\f+\av) \in \ND{cF+\av}$.\label{item:invar-vector-fns}
\end{enumerate}
\end{restatable}
\begin{proof}
\Cref{item:invar-vectors}: Suppose $\x\in \ND{X}$ is strictly optimal for $\rf\in\rewardVS$. Then let $c>0,\av\in\rewardVS$ be arbitrary, and define $b\defeq \av^\top \rf$. 
\begin{align}
    \x^\top\rf &> \max_{\x'\in X\setminus\set{\x}} \x'^\top \rf\\
    c\x^\top\rf+b &> \max_{\x'\in X\setminus\set{\x}} c\x'^\top \rf+b\label{eq:rescale-c}\\
    (c\x+\av)^\top\rf &> \max_{\x'\in X\setminus\set{\x}} (c\x'+\av)^\top \rf\label{eq:b-defn}\\
    (c\x+\av)^\top\rf &> \max_{\x''\in \prn{cX+\av}\setminus\set{c\x+\av}} \x''^\top \rf.
\end{align}
\Cref{eq:rescale-c} follows because $c>0$. \Cref{eq:b-defn} follows by the definition of $b$.

\Cref{item:invar-vector-fns}: If $\f\in\ND{F}$, then by \cref{def:nd-vec-func}, there exist $\gamma\in I, \rf\in\rewardVS$ such that
\begin{align}
    \f(\gamma)^\top \rf> \max_{\f'\in F\setminus \set{\f}}\f'(\gamma)^\top \rf.
\end{align}

Apply \cref{item:invar-vectors} to conclude
\begin{align}
    (c\f(\gamma)+\av)^\top \rf> \max_{(c\f'+\av)\in (cF+\av)\setminus \set{c\f+\av}}(c\f'(\gamma)+\av)^\top \rf.
\end{align}

Therefore, $(c\f+\av) \in \ND{cF+\av}$.
\end{proof}

\subsubsection{Inequalities which hold under most reward function distributions}

\ineqMost*

\begin{restatable}[Helper lemma for demonstrating $\geqMost$]{lem}{helperGeqMost}\label{lem:helper-geq-most}
Let $\distSet\subseteq \Delta(\rewardVS)$. If $\exists \phi\in\mdpPermGroup$ such that for all $\D\in\distSet$, $f_1\prn{\D}< f_2\prn{\D}$ implies that $f_1\prn{\phi\cdot \D}> f_2\prn{\phi\cdot \D}$, then $f_1(\D) \geqMost[][\distSet] f_2(\D)$.
\end{restatable}
\begin{proof}
Since $\phi$ does not belong to the stabilizer of $\mdpPermGroup$, $\phi$ acts injectively on $\orbi[\D]$. By assumption on $\phi$, the image of $\{\D'\in\orbi[\D]\mid f_1(\D')<f_2(\D')\}$ under $\phi$ is a subset of $\{\D'\in\orbi[\D]\mid f_1(\D')>f_2(\D')\}$. Since $\phi$ is injective, $\abs{\{\D'\in\orbi[\D]\mid f_1(\D')<f_2(\D')\}}\leq \abs{\{\D'\in\orbi[\D]\mid f_1(\D')>f_2(\D')\}}$. $f_1(\D) \geqMost[][\distSet] f_2(\D)$ by \cref{def:ineq-most-dists}.
\end{proof}

\begin{restatable}[A helper result for expectations of functions]{lem}{helperPerm}\label{lem:helper-perm}
Let $B_1,\ldots,B_n\subsetneq \rewardVS$ be finite and let $\distSet \subseteq \Delta(\rewardVS)$. Suppose $f$ is a function of the form
\begin{align}
    f\prn{B_1,\ldots,B_n \mid \D}=\E{\rf\sim \D}{g\prn{\max_{\bv_1\in B_1} \bv_1^\top \rf,\ldots, \max_{\bv_n\in B_n} \bv_n^\top \rf}}\label{eq:helper}
\end{align}
for some function $g$, and that $f$ is well-defined for all $\D \in \distSet$. Let $\phi$ be a state permutation. Then
\begin{equation}
    f\prn{B_1,\ldots,B_n \mid \D}=f\prn{\phi\cdot B_1 ,\ldots,\phi \cdot B_n \mid \phi \cdot\D}.
\end{equation}
\end{restatable}
\begin{proof} Let distribution $\D$ have probability measure $F$, and let $\phi\cdot \D$ have probability measure $F_\phi$. 
\begin{align}
    &f\prn{B_1,\ldots,B_n \mid \D}\\
    \defeq{}&\E{\rf\sim \D}{g\prn{\max_{\bv_1\in B_1} \bv_1^\top \rf,\ldots, \max_{\bv_n\in B_n} \bv_n^\top \rf}}\\
    \defeq{}&\int_{\rewardVS} g\prn{\max_{\bv_1\in B_1} \bv_1^\top \rf,\ldots, \max_{\bv_n\in B_n} \bv_n^\top \rf} \dF[\rf][F]\\
    ={}&\int_{\rewardVS} g\prn{\max_{\bv_1\in B_1} \bv_1^\top \rf,\ldots, \max_{\bv_n\in B_n} \bv_n^\top \rf} \dF[\permute\rf][F_\phi]\label{eq:permute-prob-meas}\\
    ={}&\int_{\rewardVS} g\prn{\max_{\bv_1\in B_1} \bv_1^\top \prn{\permute\inv\rf'},\ldots, \max_{\bv_n\in B_n} \bv_n^\top \prn{\permute\inv\rf'}} \abs{\det \permute}\dF[\rf'][F_\phi]\label{eq:change-of-variables}\\
    ={}&\int_{\rewardVS} g\prn{\max_{\bv_1\in B_1} \prn{\permute\bv_1}^\top \rf',\ldots, \max_{\bv_n\in B_n} \prn{\permute\bv_n}^\top \rf'} \dF[\rf'][F_\phi]\label{eq:change-of-variables-2}\\
    ={}&\int_{\rewardVS} g\prn{\max_{\bv_1'\in \phi\cdot B_1} \bv_1'^{\top} \rf',\ldots, \max_{\bv_n'\in \phi\cdot B_n} \bv_n'^{\top} \rf'} \dF[\rf'][F_\phi]\\
    \eqdef{}& f\prn{\phi \cdot B_1,\ldots,\phi\cdot B_n \mid \phi\cdot \D}.
\end{align}

\Cref{eq:permute-prob-meas} follows by the definition of $F_\phi$ (\cref{def:pushforward-permute}). \Cref{eq:change-of-variables} follows by substituting $\rf'\defeq \permute\rf$. \Cref{eq:change-of-variables-2} follows from the fact that all permutation matrices have unitary determinant and are orthogonal (and so $(\permute\inv)^\top=\permute$). 
\end{proof}

\begin{restatable}[Support of $\Dany$]{definition}{DefSupportDist}\label{def:support-dist}
Let $\Dany$ be any reward function distribution. $\supp[\Dany]$ is the smallest closed subset of $\rewardVS$ whose complement has measure zero under $\Dany$. 
\end{restatable} 

\begin{restatable}[Linear functional optimality probability]{definition}{linFNProb}
For finite $A,B\subsetneq \rewardVS$, the \emph{probability under $\Dany$ that $A$ is optimal over $B$} is $\phelper{A\geq B}[\Dany]\defeq \optprob[\rf \sim \Dany]{\max_{\av\in A} \av^\top \rf \geq \max_{\bv\in B} \bv^\top \rf}$. 
\end{restatable}

\begin{restatable}[Non-dominated linear functionals and their optimality probability]{prop}{helperPositiveProb}\label{prop:helper-positive-prob} 
Let $A\subsetneq \rewardVS$ be finite. If $\exists b<c:[b,c]^{\abs{\St}} \subseteq \supp[\Dany]$, then $\av\in\ND{A}$ implies that $\av$ is strictly optimal for a set of reward functions with positive measure under $\Dany$. 
\end{restatable}
\begin{proof}
Suppose $\exists b<c:[b,c]^{\abs{\St}} \subseteq \supp[\Dany]$. If $\av\in\ND{A}$, then let $\rf$ be such that $\av^\top \rf > \max_{\av'\in A\setminus\set{\av}} \av'^\top \rf$. For $a_1>0, a_2\in \reals$, positively affinely transform $\rf'\defeq a_1\rf + a_2\mathbf{1}$ (where $\mathbf{1}\in\rewardVS$ is the all-ones vector) so that $\rf' \in (b,c)^{\abs{\St}}$. 

Note that $\av$ is still strictly optimal for $\rf'$:
\begin{equation}
    \av^\top \rf > \max_{\av'\in A\setminus\set{\av}} \av'^\top \rf \iff \av^\top \rf' > \max_{\av'\in A\setminus\set{\av}} \av'^\top \rf'.\label{eq:cont-lin-ineq}
\end{equation}

Furthermore, by the continuity of both terms on the right-hand side of \cref{eq:cont-lin-ineq}, $\av$ is strictly optimal for reward functions in some open neighborhood $N$ of $\rf'$. Let $N'\defeq N\cap (b,c)^{\abs{\St}}$. $N'$ is still open in $\rewardVS$ since it is the intersection of two open sets $N$ and $(b,c)^{\abs{\St}}$. 

$\Dany$ must assign positive probability measure to all open sets in its support; otherwise, its support would exclude these zero-measure sets by \cref{def:support-dist}. Therefore, $\Dany$ assigns positive probability to $N'\subseteq \supp[\Dany]$. 
\end{proof}   

\begin{restatable}[Expected value of similar linear functional sets]{lem}{ndFuncIncr}\label{lem:nd-func-incr-pwr}
Let $A,B\subsetneq \rewardVS$ be finite, let $A'$ be such that $\ND{A}\subseteq A'\subseteq A$, and let $g:\reals\to\reals$ be an increasing function. If $B$ contains a copy $B'$ of $A'$ via $\phi$, then
\begin{equation}
    \E{\rf \sim \Dbd}{g\prn{\max_{\av\in A}\av^\top \rf}} \leq \E{\rf \sim \phi\cdot \Dbd}{g\prn{\max_{\bv\in B}\bv^\top \rf}}.\label{eq:nd-func-incr}
\end{equation} 

If $\ND{B}\setminus B'$ is empty, then \cref{eq:nd-func-incr} is an equality. If $\ND{B}\setminus B'$ is non-empty, $g$ is strictly increasing, and $\exists b<c: (b,c)^{\abs{\St}}\subseteq\supp[\Dbd]$, then \cref{eq:nd-func-incr} is strict.
\end{restatable}
\begin{proof}
Because $g:\reals\to\reals$ is increasing, it is measurable (as is $\max$). Therefore, the relevant expectations exist for all $\Dbd$.
\begin{align}
     \E{\rf \sim \Dbd}{g\prn{\max_{\av\in A}\av^\top \rf}} &=\E{\rf \sim \Dbd}{g\prn{\max_{\av\in A'}\av^\top \rf}}\label{eq:nd-func-incr-restr}\\
     &=\E{\rf \sim \phi\cdot \Dbd}{g\prn{\max_{\av\in \phi\cdot A'}\av^\top \rf}}\label{eq:apply-helper}\\
     &=\E{\rf \sim \phi\cdot \Dbd}{g\prn{\max_{\bv\in B'}\bv^\top \rf}}\label{eq:sim-helper}\\
     &\leq\E{\rf \sim \phi\cdot \Dbd}{g\prn{\max_{\bv\in B}\bv^\top \rf}}.\label{eq:sim-inequality}
\end{align}
\Cref{eq:nd-func-incr-restr} holds because $\forall \rf\in\rewardVS:\max_{\av\in A}\av^\top \rf=\max_{\av\in A'}\av^\top \rf$ by \cref{lem:nd-opt-contain}'s \cref{item:ND-contain-max-2} with $X\defeq A$, $X'\defeq A'$. \Cref{eq:apply-helper} holds by \cref{lem:helper-perm}. \Cref{eq:sim-helper} holds by the definition of $B'$. Furthermore, our assumption on $\phi$ guarantees that $B'\subseteq B$. Therefore, $\max_{\bv\in B'}\bv^\top \rf\leq \max_{\bv\in B}\bv^\top \rf$, and so \cref{eq:sim-inequality} holds by the fact that $g$ is an increasing function. Then \cref{eq:nd-func-incr} holds. 

If $\ND{B}\setminus B'$ is empty, then $\ND{B}\subseteq B'$. By assumption, $B'\subseteq B$. Then apply \cref{lem:nd-opt-contain} \cref{item:ND-contain-max-2} with $X\defeq B$, $X'\defeq B'$ in order to conclude that \cref{eq:sim-inequality} is an equality. Then \cref{eq:nd-func-incr} is also an equality.

Suppose that $g$ is strictly increasing, $\ND{B}\setminus B'$ is non-empty, and $\exists b<c: (b,c)^{\abs{\St}}\subseteq\supp[\Dbd]$. Let $\x \in \ND{B}\setminus B'$.
\begin{align}
     \E{\rf \sim \phi\cdot \Dbd}{g\prn{\max_{\bv\in B'}\bv^\top \rf}}&< \E{\rf \sim \phi\cdot \Dbd}{g\prn{\max_{\av\in B'\cup\set{\x}}\bv^\top \rf}}\label{eq:sim-helper-2}\\
     &\leq\E{\rf \sim \phi\cdot \Dbd}{g\prn{\max_{\bv\in B}\bv^\top \rf}}.\label{eq:sim-inequality-2}
\end{align}

$\x$ is strictly optimal for a positive-probability subset of $\supp[\Dbd]$ by \cref{prop:helper-positive-prob}. Since  $g$ is strictly increasing, \cref{eq:sim-helper-2} is strict. Therefore, we conclude that \cref{eq:nd-func-incr} is strict.
\end{proof}

\begin{restatable}[For continuous\ {\iid} distributions $\Diid$, $\exists b<c: (b,c)^{\abs{\St}}\subseteq \optSupp(\Diid)$]{lem}{IIDContains}\label{lem:iid-contains}
\end{restatable}
\begin{proof}
$\Diid\defeq \Dist^{\abs{\St}}$. Since the state reward distribution $\Dist$ is continuous, $\Dist$ must have support on some open interval $(b,c)$. Since $\Diid$ is {\iid} across states, $(b,c)^{\abs{\St}}\subseteq \supp[\Diid]$.
\end{proof} 

\begin{restatable}[Bounded, continuous {\iid} reward]{definition}{BCIIDDef}
$\DSetBCiid$ is the set of $\Diid$ which equal $\Dist^{\abs{\St}}$ for some continuous, bounded-support distribution $\Dist$ over $\reals$.
\end{restatable}
 
\begin{restatable}[Expectation superiority lemma]{lem}{expectSuperior}\label{lem:expect-superior}
Let $A,B\subsetneq \rewardVS$ be finite and let $g:\reals\to \reals$ be an increasing function. If $B$ contains a copy $B'$ of $\ND{A}$ via $\phi$, then 
\begin{align}
    \E{\rf \sim \Dbd}{g\prn{\max_{\av\in A}\av^\top \rf}}\leqMost[][\DSetBd]  \E{\rf \sim \Dbd}{g\prn{\max_{\bv\in B}\bv^\top \rf}}.\label{eq:permute-superior}
\end{align}

Furthermore, if $g$ is strictly increasing and $\ND{B}\setminus \phi\cdot \ND{A}$ is non-empty, then \cref{eq:permute-superior} is strict for all $\Diid\in\DSetBCiid$. In particular, $\E{\rf \sim \Dbd}{g\prn{\max_{\av\in A}\av^\top \rf}}\not\geqMost[][\DSetBd]  \E{\rf \sim \Dbd}{g\prn{\max_{\bv\in B}\bv^\top \rf}}$.
\end{restatable}
\begin{proof}
Because $g:\reals\to\reals$ is increasing, it is measurable (as is $\max$). Therefore, the relevant expectations exist for all $\Dbd$.

Suppose that $\Dbd$ is such that $\E{\rf \sim \Dbd}{g\prn{\max_{\bv\in B}\bv^\top \rf}}< \E{\rf \sim \Dbd}{g\prn{\max_{\av\in A}\av^\top \rf}}$. 
\begin{align}
    \E{\rf \sim \phi\cdot \Dbd}{g\prn{\max_{\av\in A}\av^\top \rf}}&\leq\E{\rf \sim \phi^2\cdot \Dbd}{g\prn{\max_{\bv\in B}\bv^\top \rf}}\label{eq:first-leq}\\
    &=\E{\rf \sim \Dbd}{g\prn{\max_{\bv\in B}\bv^\top \rf}}\label{eq:involve}\\
    &<\E{\rf \sim \Dbd}{g\prn{\max_{\av\in A}\av^\top \rf}}\label{eq:superior-leq}\\
    &\leq  \E{\rf \sim \phi\cdot \Dbd}{g\prn{\max_{\bv\in B}\bv^\top \rf}}.\label{eq:second-leq}
\end{align}
\Cref{eq:first-leq} follows by applying \cref{lem:nd-func-incr-pwr} with permutation $\phi$ and $A'\defeq \ND{A}$. \Cref{eq:involve} follows because involutions satisfy $\phi\inv=\phi$, and $\phi^2$ is therefore the identity. \Cref{eq:superior-leq} follows because we assumed that $\E{\rf \sim \Dbd}{g\prn{\max_{\bv\in B}\bv^\top \rf}}< \E{\rf \sim \Dbd}{g\prn{\max_{\av\in A}\av^\top \rf}}$. \Cref{eq:second-leq} follows by applying \cref{lem:nd-func-incr-pwr} with permutation $\phi$ and and $A'\defeq \ND{A}$. By \cref{lem:helper-geq-most}, \cref{eq:permute-superior} holds.

Suppose $g$ is strictly increasing and $\ND{B}\setminus B'$ is non-empty. Let $\phi'\in\mdpPermGroup$.
\begin{align}
\E{\rf \sim \phi'\cdot \Diid}{g\prn{\max_{\av\in A}\av^\top \rf}}&=\E{\rf \sim \Diid}{g\prn{\max_{\av\in A}\av^\top \rf}}\label{eq:same-iid-dist-exp}\\
&<\E{\rf \sim \phi\cdot \Diid}{g\prn{\max_{\bv\in B}\bv^\top \rf}}\label{eq:sim-iid-exp}\\
&=\E{\rf \sim \phi'\cdot \Diid}{g\prn{\max_{\bv\in B}\bv^\top \rf}}.\label{eq:same-iid-dist-exp-2}
\end{align}

\Cref{eq:same-iid-dist-exp} and \cref{eq:same-iid-dist-exp-2} hold because $\Diid$ distributes reward identically across states: $\forall \phi_x\in\mdpPermGroup: \phi_x\cdot \Diid=\Diid$. By \cref{lem:iid-contains}, $\exists b<c: (b,c)^{\abs{\St}}\subseteq \supp[\Diid]$. Therefore, apply \cref{lem:nd-func-incr-pwr} with $A'\defeq \ND{A}$ to conclude that \cref{eq:sim-iid-exp} holds.

Therefore, $\forall \phi' \in \mdpPermGroup: \E{\rf \sim \phi'\cdot \Diid}{g\prn{\max_{\av\in A}\av^\top \rf}}<\E{\rf \sim \phi'\cdot \Diid}{g\prn{\max_{\bv\in B}\bv^\top \rf}}$, and so $\E{\rf \sim \Dbd}{g\prn{\max_{\av\in A}\av^\top \rf}}\not\geqMost[][\DSetBd]  \E{\rf \sim \Dbd}{g\prn{\max_{\bv\in B}\bv^\top \rf}}$ by \cref{def:ineq-most-dists}.
\end{proof}

\begin{restatable}[Indicator function]{definition}{indicDef}
Let $L$ be a predicate which takes input $x$. $\indic{L(x)}$ is the function which returns 1 when $L(x)$ is true, and 0 otherwise.
\end{restatable} 

\begin{restatable}[Optimality probability inclusion relations]{lem}{optprbInclusion}\label{lem:inclusion-opt}
Let $X,Y\subsetneq \rewardVS$ be finite and suppose $Y'\subseteq Y$. \begin{equation}
    \phelper{X\geq Y}[\Dany]\leq\phelper{X\geq Y'}[\Dany]\leq \phelper{X\cup\prn{Y\setminus Y'}\geq Y}[\Dany].\label{eq:inclusion-opt}
\end{equation}

If $\exists b<c: (b,c)^{\abs{\St}}\subseteq \supp[\Dany]$, $X\subseteq Y$, and $\ND{Y}\cap \prn{Y\setminus Y'}$ is non-empty, then the second inequality is strict.
\end{restatable}
\begin{proof}
\begin{align}
    \phelper{X\geq Y}[\Dany]&\defeq \E{\rf \sim \Dany}{\indic{\max_{\x\in X} \x^\top \rf \geq \max_{\mathbf{y} \in Y} \mathbf{y}^\top \rf}}\\
    &\leq \E{\rf \sim \Dany}{\indic{\max_{\x\in X} \x^\top \rf \geq \max_{\mathbf{y} \in Y'} \mathbf{y}^\top \rf}}\label{eq:leq-Y-contain}\\
    &\leq \E{\rf \sim \Dany}{\indic{\max_{\x\in X\cup (Y\setminus Y')} \x^\top \rf \geq \max_{\mathbf{y} \in Y'} \mathbf{y}^\top \rf}}\label{eq:leq-Y-union}\\
    &= \E{\rf \sim \Dany}{\indic{\max_{\x\in X\cup (Y\setminus Y')} \x^\top \rf \geq \max_{\mathbf{y} \in Y'\cup (Y\setminus Y')} \mathbf{y}^\top \rf}}\label{eq:leq-Y-union-2}\\
    &=\E{\rf \sim \Dany}{\indic{\max_{\x\in X\cup (Y\setminus Y')} \x^\top \rf \geq \max_{\mathbf{y} \in Y} \mathbf{y}^\top \rf}}\\
    &\eqdef \phelper{X\cup\prn{Y\setminus Y'}\geq Y}[\Dany].
\end{align}
\Cref{eq:leq-Y-contain} follows because $\forall \rf \in \rewardVS: \indic{\max_{\x\in X} \x^\top \rf \geq \max_{\mathbf{y} \in Y} \mathbf{y}^\top \rf}\leq \indic{\max_{\x\in X} \x^\top \rf \geq \max_{\mathbf{y} \in Y'} \mathbf{y}^\top \rf}$ since $Y' \subseteq Y$; note that \cref{eq:leq-Y-contain} equals $\phelper{X\geq Y'}[\Dany]$, and so the first inequality of \cref{eq:inclusion-opt} is shown. \Cref{eq:leq-Y-union} holds because $\forall \rf \in \rewardVS: \indic{\max_{\x\in X} \x^\top \rf \geq \max_{\mathbf{y} \in Y'} \mathbf{y}^\top \rf}\leq \indic{\max_{\x\in X\cup (Y\setminus Y')} \x^\top \rf \geq \max_{\mathbf{y} \in Y'} \bv^\top \rf}$. 

Suppose $\exists b<c: (b,c)^{\abs{\St}}\subseteq \supp[\Dany]$, $X\subseteq Y$, and $\ND{Y}\cap \prn{Y\setminus Y'}$ is non-empty. Let $\mathbf{y}^*\in \ND{Y}\cap \prn{Y\setminus Y'}$. By \cref{prop:helper-positive-prob}, $\mathbf{y}^*$ is strictly optimal on a subset of $\supp[\Dany]$ with positive measure under $\Dany$. In particular, for a set of $\rf^*$ with positive measure under $\Dany$, we have $\mathbf{y}^{*\top}\rf^*> \max_{\mathbf{y}\in Y'} \mathbf{y}^\top \rf^*.$

Then \cref{eq:leq-Y-union} is strict, and therefore the second inequality of \cref{eq:inclusion-opt} is strict as well.
\end{proof}

\begin{restatable}[Optimality probability of similar linear functional sets]{lem}{optProbSim}\label{lem:sim-lin-func-opt}
Let $A,B,C\subsetneq \rewardVS$ be finite, and let $Z\subseteq \rewardVS$ be such that $\ND{C}\subseteq Z\subseteq C$. If $\ND{A}$ is similar to $B'\subseteq B$ via $\phi$ such that $\phi\cdot \prn{Z\setminus \prn{B\setminus B'}}=Z\setminus \prn{B\setminus B'}$, then 
\begin{equation}
    \phelper{A\geq C}[\Dany]\leq \phelper{B\geq C}[\phi\cdot \Dany].\label{eq:sim-nd-func}
\end{equation} 
If $B'=B$, then \cref{eq:sim-nd-func} is an equality. If $\exists b<c: (b,c)^{\abs{\St}}\subseteq \supp[\Dany]$, $B'\subseteq C$, and $\ND{C}\cap \prn{B\setminus B'}$ is non-empty, then \cref{eq:sim-nd-func} is strict.
\end{restatable}
\begin{proof}
\begin{align}
    \phelper{A\geq C}[\Dany]&=\phelper{A\geq Z}[\Dany]\label{eq:nd-restrict-opt-lin}\\ 
    &=\phelper{\ND{A}\geq Z}[\Dany]\label{eq:nd-restrict-opt-lin-2}\\
    &\leq \phelper{\ND{A}\geq Z\setminus\prn{B\setminus B'}}[\Dany]\label{eq:fewer-requirements}\\
    &= \phelper{\phi\cdot \ND{A}\geq \phi\cdot Z\setminus\prn{B\setminus B'}}[\phi\cdot \Dany]\label{eq:phi-helper-nd-opt}\\
    &= \phelper{B'\geq Z\setminus\prn{B\setminus B'}}[\phi\cdot \Dany]\label{eq:phi-assumptions}\\
    &\leq \phelper{B'\cup \prn{B\setminus B'}\geq Z}[\phi\cdot \Dany]\label{eq:reunion}\\
    &= \phelper{B\geq C}[\phi\cdot \Dany].\label{eq:final-phelper-leq}
\end{align}

\Cref{eq:nd-restrict-opt-lin} and \cref{eq:final-phelper-leq} follow by \cref{lem:nd-opt-contain}'s \cref{item:ND-contain-max-2} with $X\defeq C$, $X'\defeq Z$. Similarly, \cref{eq:nd-restrict-opt-lin-2} follows by \cref{lem:nd-opt-contain}'s \cref{item:ND-contain-max-2} with $X\defeq A$, $X'\defeq \ND{A}$. \Cref{eq:fewer-requirements} follows by applying the first inequality of \cref{lem:inclusion-opt} with $X\defeq \ND{A}, Y\defeq Z, Y'\defeq Z\setminus (B\setminus B')$. \Cref{eq:phi-helper-nd-opt} follows by applying \cref{lem:helper-perm} to \cref{eq:nd-restrict-opt-lin} with permutation $\phi$. 

\Cref{eq:phi-assumptions} follows by our assumptions on $\phi$. \Cref{eq:reunion} follows because by applying the second inequality of \cref{lem:inclusion-opt} with $X\defeq B', Y\defeq \ND{C}, Y'\defeq \ND{C}\setminus (B\setminus B')$.

Suppose $B'=B$. Then $B\setminus B'=\emptyset$, and so \cref{eq:fewer-requirements} and \cref{eq:reunion} are trivially equalities. Then \cref{eq:sim-nd-func} is an equality.

Suppose $\exists b<c: (b,c)^{\abs{\St}}\subseteq \supp[\Dany]$; note that $(b,c)^{\abs{\St}}\subseteq\supp[\phi\cdot \Dany]$, since such support must be invariant to permutation. Further suppose that $B'\subseteq C$ and that $\ND{C}\cap \prn{B\setminus B'}$ is non-empty. Then letting $X\defeq B', Y\defeq Z, Y'\defeq Z\setminus (B\setminus B')$ and noting that $\ND{\ND{Z}}=\ND{Z}$, apply \cref{lem:inclusion-opt} to \cref{eq:reunion} to conclude that \cref{eq:sim-nd-func} is strict.
\end{proof}

\begin{restatable}[Optimality probability superiority lemma]{lem}{optProbSup}\label{lem:opt-prob-superior}
Let $A,B,C\subsetneq \rewardVS$ be finite, and let $Z$ satisfy $\ND{C}\subseteq Z \subseteq C$. If $B$ contains a copy $B'$ of $\ND{A}$ via $\phi$ such that $\phi\cdot\prn{Z\setminus \prn{B\setminus B'}}=Z\setminus \prn{B\setminus B'}$, then $\phelper{A\geq C}[\Dany] \leqMost[][\DSetAny] \phelper{B \geq C}[\Dany]$.

If $B'\subseteq C$ and $\ND{C}\cap \prn{B\setminus B'}$ is non-empty, then the inequality is strict for all $\Diid\in\DSetBCiid$ and  $\phelper{A\geq C}[\Dany] \not\geqMost[][\DSetAny] \phelper{B \geq C}[\Dany]$.
\end{restatable}
\begin{proof}
Suppose $\Dany$ is such that $\phelper{B \geq C}[\Dany] < \phelper{A\geq C}[\Dany]$. 
\begin{align}
    \phelper{A\geq C}[\phi\cdot \Dany]&=\phelper{A\geq C}[\phi\inv\cdot \Dany]\label{eq:involution-1}\\
    &\leq \phelper{B\geq C}[\Dany]\label{eq:sim-apply-opt}\\
    &< \phelper{A\geq C}[\Dany]\label{eq:assumption-apply-leq}\\
    &\leq \phelper{B\geq C}[\phi\cdot \Dany].\label{eq:sim-apply-opt-2}
\end{align}

\Cref{eq:involution-1} holds because $\phi$ is an involution. \Cref{eq:sim-apply-opt} and \cref{eq:sim-apply-opt-2} hold by applying \cref{lem:sim-lin-func-opt} with permutation $\phi$. \Cref{eq:assumption-apply-leq} holds by assumption. Therefore, $\phelper{A\geq C}[\Dany] \leqMost[][\DSetAny] \phelper{B \geq C}[\Dany]$ by \cref{lem:helper-geq-most}.

Suppose $B'\subseteq C$ and $\ND{C}\cap \prn{B\setminus B'}$ is non-empty, and let $\Diid$ be any continuous distribution which distributes reward independently and identically across states. Let $\phi'\in\mdpPermGroup$.
\begin{align}
\phelper{A\geq C}[\phi'\cdot \Diid]&=\phelper{A\geq C}[\Diid]\label{eq:same-iid-dist-prob}\\
&<\phelper{B\geq C}[\phi\cdot \Diid]\label{eq:sim-iid-prob}\\
&=\phelper{A\geq C}[\phi'\cdot \Diid].\label{eq:same-iid-dist-prob-2}
\end{align}

\Cref{eq:same-iid-dist-prob} and \cref{eq:same-iid-dist-prob-2} hold because $\Diid$ distributes reward identically across states, $\forall \phi_x\in\mdpPermGroup: \phi_x\cdot \Diid=\Diid$. By \cref{lem:iid-contains}, $\exists b<c: (b,c)^{\abs{\St}}\subseteq \supp[\Diid]$. Therefore, apply \cref{lem:sim-lin-func-opt} to conclude that \cref{eq:sim-iid-prob} holds. 

Therefore, $\forall \phi'\in\mdpPermGroup:\phelper{A\geq C}[\phi'\cdot \Diid]<\phelper{B\geq C}[\phi'\cdot \Diid]$. In particular, $\phelper{A\geq C}[\Dany] \not\geqMost[][\DSetAny] \phelper{B \geq C}[\Dany]$ by \cref{def:ineq-most-dists}.
\end{proof}

\begin{restatable}[Limit probability inequalities which hold for most distributions]{lem}{limProbMost}\label{lem:lim-prob-most}
Let $I\subseteq \reals$, let $\distSet \subseteq \Delta(\rewardVS)$ be closed under permutation, and let $F_A,F_B,F_C$ be finite sets of vector functions $I\mapsto \rewardVS$. Let $\gamma$ be a limit point of $I$ such that $f_1(\D)\defeq \lim_{\gamma^*\to \gamma}\phelper{F_B(\gamma^*)\geq F_C(\gamma^*)}[\D],f_2(\D)\defeq \lim_{\gamma^*\to \gamma}\phelper{F_A(\gamma^*)\geq F_C(\gamma^*)}[\D]$ are well-defined for all $\D\in\distSet$.

Let $F_Z$ satisfy $\ND{F_C}\subseteq F_Z \subseteq F_C$. Suppose $F_B$ contains a copy of $F_A$ via $\phi$ such that $\phi\cdot \prn{F_Z\setminus \prn{F_B\setminus \phi\cdot F_A}}=F_Z\setminus \prn{F_B\setminus \phi\cdot F_A}$. Then $f_2(\distSet) \leqMost[][\distSet] f_1(\distSet)$.
\end{restatable}
\begin{proof}
Suppose $\D\in\distSet$ is such that $f_2(\D)>f_1(\D)$. 
\begin{align}
    f_2\prn{\phi\cdot \D}&=f_2\prn{\phi\inv\cdot \D}\label{eq:involute-f1}\\
    &\defeq \lim_{\gamma^*\to \gamma}\phelper{F_A(\gamma^*)\geq F_C(\gamma^*)}[\phi\inv\cdot \D]\\
    &\leq \lim_{\gamma^*\to \gamma}\phelper{F_B(\gamma^*)\geq F_C(\gamma^*)}[\D]\label{eq:FB-geq}\\
    &<\lim_{\gamma^*\to \gamma}\phelper{F_A(\gamma^*)\geq F_C(\gamma^*)}[\D]\label{eq:assumed-ineq}\\
    &\leq  \lim_{\gamma^*\to \gamma}\phelper{F_B(\gamma^*)\geq F_C(\gamma^*)}[\phi\cdot \D]\label{eq:FB-geq-2}\\
    &\eqdef f_1\prn{\phi\cdot \D}.
\end{align}
By the assumption that $\distSet$ is closed under permutation and $f_2$ is well-defined for all $\D\in\distSet$, $f_2(\phi\cdot \D)$ is well-defined. \Cref{eq:involute-f1} follows since $\phi=\phi\inv$ because $\phi$ is an involution. For all $\gamma^*\in I$, let $A\defeq F_A(\gamma^*), B\defeq F_B(\gamma^*), C\defeq F_C(\gamma^*), Z\defeq F_Z(\gamma^*)$ (by \cref{def:nd-vec-func}, $\ND{C}\subseteq Z \subseteq C$). Since $\phi\cdot A\subseteq B$ by assumption, and since $\ND{A}\subseteq A$, $B$ also contains a copy of $\ND{A}$ via $\phi$. Furthermore, $\phi\cdot \prn{Z\setminus \prn{B\setminus \phi\cdot A}}=Z\setminus \prn{B\setminus \phi\cdot A}$ (by assumption), and so apply  \cref{lem:sim-lin-func-opt} to conclude that $\phelper{F_A(\gamma^*)\geq F_C(\gamma^*)}[\phi\inv\cdot \D]\leq \phelper{F_B(\gamma^*)\geq F_C(\gamma^*)}[\D]$. Therefore, the limit inequality \cref{eq:FB-geq} holds. \Cref{eq:assumed-ineq} follows because we assumed that $f_1(\D)<f_2(\D)$. \Cref{eq:FB-geq-2} holds by reasoning similar to that given for \cref{eq:FB-geq}.

Therefore, $f_2(\D)>f_1(\D)$ implies that $f_2\prn{\phi\cdot \D}<f_1\prn{\phi\cdot \D}$, and so apply \cref{lem:helper-geq-most} to conclude that $f_2(\D) \leqMost[][\distSet] f_1(\D)$.
\end{proof}

\subsubsection{\texorpdfstring{$\Fnd$}{Non-dominated visit distribution} results} 
  
\transferDiscount
\begin{proof}
Let $R$ be any reward function. Suppose $\gamma^*\in(0,1)$ and construct $R'(s)\defeq \OptVf{s, \gamma}- \gamma^* \max_{a\in\A}\E{s'\sim T(s,a)}{\OptVf{s',\gamma}}$. 

Let $\pi\in\Pi$ be any policy. By the definition of optimal policies, $\pi\in\optPi[R',\gamma^*]$ iff for all $s$:
\begin{align}
    R'(s) + \gamma^* \E{s' \sim T\prn{s,\pi(s)}}{\OptVf[R']{s',\gamma^*}}&=R'(s) + \gamma^* \max_{a\in\A}\E{s' \sim T\prn{s,a}}{\OptVf[R']{s',\gamma^*}}\\
    R'(s) + \gamma^* \E{s' \sim T\prn{s,\pi(s)}}{\OptVf{s',\gamma}}&=R'(s) + \gamma^* \max_{a\in\A}\E{s' \sim T\prn{s,a}}{\OptVf{s',\gamma}}\label{eq:opt-val-fn-equality}\\
    \gamma^*\E{s' \sim T\prn{s,\pi(s)}}{\OptVf{s',\gamma}}&=\gamma^*\max_{a\in\A}\E{s'\sim T(s,a)}{\OptVf{s',\gamma}}\label{eq:subst-r'}\\
    \E{s' \sim T\prn{s,\pi(s)}}{\OptVf{s',\gamma}}&=\max_{a\in\A}\E{s'\sim T(s,a)}{\OptVf{s',\gamma}}.\label{eq:div-gamma*}
\end{align} 

By the Bellman equations, $R'(s)=\OptVf[R']{s, \gamma^*}- \gamma^* \max_{a\in\A}\E{s'\sim T(s,a)}{\OptVf[R']{s',\gamma^*}}$. By the definition of $R'$, $\OptVf[R']{\cdot,\gamma^*}=\OptVf[R]{\cdot, \gamma}$ must be the unique solution to the Bellman equations for $R'$ at $\gamma^*$. Therefore, \cref{eq:opt-val-fn-equality} holds. \Cref{eq:subst-r'} follows by plugging in $R'\defeq \OptVf{s, \gamma}- \gamma^* \max_{a\in\A}\E{s'\sim T(s,a)}{\OptVf{s',\gamma}}$ to \cref{eq:opt-val-fn-equality} and doing algebraic manipulation. \Cref{eq:div-gamma*} follows because $\gamma^*>0$.  

\Cref{eq:div-gamma*} shows that $\pi\in\optPi[R',\gamma^*]$ iff $\forall s: \E{s'\sim T(s,\pi(s))}{\OptVf{s',\gamma}} = \max_{a\in\A} \E{s'\sim T(s,a)}{\OptVf{s',\gamma}}$. That is, $\pi\in\optPi[R',\gamma^*]$ iff $\pi\in\optPi[R,\gamma]$.
\end{proof} 

\begin{restatable}[Evaluating sets of visit distribution functions at $\gamma$]{definition}{evalFDisc}\label{def:eval-f-discount}
For $\gamma \in (0,1)$, define $\F(s,\gamma)\defeq \set{\f(\gamma) \mid \f \in \F(s)}$ and $\Fnd(s,\gamma)\defeq \set{\f(\gamma) \mid \f \in \Fnd(s)}$. If $F\subseteq \F(s)$, then $F(\gamma) \defeq \set{\f(\gamma) \mid \f \in F}$.
\end{restatable}

\begin{restatable}[Non-domination across $\gamma$ values for expectations of visit distributions]{lem}{ndSetFGamma}\label{lem:nd-gamma-F-subset}
Let $\Delta_d \in \Delta\prn{\rewardVS}$ be any state distribution and let $F\defeq\set{\E{s_d\sim\Delta_d}{\fpi{s_d}} \mid \pi \in \Pi}$. $\f \in \ND{F}$ iff $\forall \gamma^*\in(0,1): \f(\gamma^*)\in \ND{F(\gamma^*)}$.
\end{restatable}
\begin{proof}
Let $\fpi{}\in\ND{F}$ be strictly optimal for reward function $R$ at discount rate $\gamma\in(0,1)$:
\begin{align}
    \fpi{}(\gamma)^\top \rf > \max_{\fpi[\pi']{}\in F\setminus \set{\fpi{}}} \fpi[\pi']{}(\gamma)^\top \rf.
\end{align}

Let $\gamma^*\in(0,1)$. By \cref{transferDiscount}, we can produce $R'$ such that $\optPi[R',\gamma^*]=\optPi$. Since the optimal policy sets are equal, \cref{lem:opt-pol-visit-iff} implies that
\begin{align}
    \fpi{}(\gamma^*)^\top \rf' > \max_{\fpi[\pi']{}\in F\setminus \set{\fpi{}}} \fpi[\pi']{}(\gamma^*)^\top \rf'.
\end{align}

Therefore, $\fpi{}(\gamma^*)\in\ND{F(\gamma^*)}$. 

The reverse direction follows by the definition of $\ND{F}$.
\end{proof}

\begin{restatable}[$\forall \gamma \in (0,1): \dbf\in \Fnd(s,\gamma)$ iff $\dbf\in\ND{\F(s,\gamma)}$]{lem}{invariantFndDiscount}\label{lem:nd-relation} 
\end{restatable} 
\begin{proof}
By \cref{def:eval-f-discount}, $\Fnd(s,\gamma)\defeq \set{\f(\gamma) \mid \f \in \ND{\F(s)}}$. By applying \cref{lem:nd-gamma-F-subset} with $\Delta_d\defeq \unitvec$, $\f \in \ND{\F(s)}$ iff $\forall \gamma\in(0,1): \f(\gamma)\in \ND{\F(s,\gamma)}$. 
\end{proof}

\optVfFndRestrict
\begin{proof}
$\ND{\F(s,\gamma)}=\Fnd(s,\gamma)$ by \cref{lem:nd-relation}, so apply \cref{cor:nd-func-indif} with $X \defeq \F(s,\gamma)$. 
\end{proof} 

\subsection{Some actions have greater probability of being optimal}
\begin{restatable}[Optimal policy shift bound]{lem}{optPiBound}\label{lem:opt-pol-shift-bound}
For fixed $R$, $\optPi$ can take on at most $(2\abs{\St}+1)\sum_s \binom{\abs{\F(s)}}{2}$ distinct values over $\gamma \in (0,1)$.
\end{restatable}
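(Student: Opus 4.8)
The plan is to reduce the count of distinct optimal policy sets to a count of sign changes of finitely many one-variable rational functions. Applying \cref{lem:opt-pol-visit-iff} with $R'=R$ but two different discount rates $\gamma,\gamma'$, we get $\optPi[R,\gamma]=\optPi[R,\gamma']$ exactly when, for every state $s$, the set of $\gamma$-optimal visit distribution functions $\argmax_{\f\in\F(s)}\f(\gamma)^\top\rf$ equals the corresponding set at $\gamma'$. So it suffices to bound the number of distinct \emph{profiles} $\big(\argmax_{\f\in\F(s)}\f(\gamma)^\top\rf\big)_{s\in\St}$ that arise as $\gamma$ ranges over $(0,1)$, since the map from this profile to $\optPi[R,\gamma]$ is well-defined.

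For each state $s$ and each unordered pair $\{\f,\f'\}\subseteq\F(s)$, set $g_{s,\f,\f'}(\gamma)\defeq\big(\f(\gamma)-\f'(\gamma)\big)^\top\rf$. By \cref{switch}, each $g_{s,\f,\f'}$ is either identically zero or has at most $2\abs{\St}-1$ roots in $(0,1)$. Let $B\subseteq(0,1)$ collect all roots of all the non-identically-zero $g_{s,\f,\f'}$, so $\abs{B}\le (2\abs{\St}-1)\sum_{s}\binom{\abs{\F(s)}}{2}$. On any connected component $I$ of $(0,1)\setminus B$, each $g_{s,\f,\f'}$ has constant sign (and the identically-zero ones vanish throughout), so for each state $s$ the truth value of $\f(\gamma)^\top\rf\ge\f'(\gamma)^\top\rf$ is fixed on $I$; hence each $\argmax$ set, being determined by these pairwise comparisons, is constant on $I$, so the whole profile — and therefore $\optPi[R,\cdot]$ — is constant on $I$. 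This already shows $\optPi$ takes finitely many values, controlled by $\abs{B}$.

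The one step that needs care is matching the exact constant: the argument above naively produces one value per component of $(0,1)\setminus B$ plus possibly extra values realized only at the discount rates in $B$ themselves, and one must check this tally does not exceed $(2\abs{\St}-1)\sum_s\binom{\abs{\F(s)}}{2}$. For a point $\gamma_0\in B$, continuity of $\gamma\mapsto\f(\gamma)^\top\rf$ forces $\argmax_{\f\in\F(s)}\f(\gamma_0)^\top\rf\supseteq\argmax_{\f\in\F(s)}\f(\gamma)^\top\rf$ for $\gamma$ just below and just above $\gamma_0$, so the boundary profiles are "merges" of neighboring ones; a short bookkeeping argument over these merges (and the trivial case in which every $\F(s)$ is a singleton, where there is a unique policy and nothing to prove) reconciles the region count with the stated bound. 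I expect this constant-chasing to be the only real obstacle; the reductions via \cref{lem:opt-pol-visit-iff} and \cref{switch} are routine.
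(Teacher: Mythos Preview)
Your approach is essentially the paper's: invoke \cref{lem:opt-pol-visit-iff} to reduce changes in $\optPi$ to changes in the per-state argmax profile, then invoke \cref{switch} to bound the number of roots of the pairwise differences $g_{s,\f,\f'}$. The paper's entire proof is three sentences doing exactly this and does not attempt any finer bookkeeping on the constant.

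You are in fact more careful than the paper about the exact constant. Your worry is legitimate: with $\abs{B}$ roots one naively gets $\abs{B}+1$ open components plus up to $\abs{B}$ additional ``merge'' profiles at the root points themselves, and in the degenerate case where every $\abs{\F(s)}=1$ the stated bound is $0$ while $\optPi$ visibly takes one value. The paper does not address either point; it simply equates ``number of values'' with ``number of switches'' and moves on. Since the lemma is only ever used downstream to establish \emph{finiteness} (in \cref{cor:opt-pi-one-side} and \cref{pwDiffOptVal}), the precise constant is immaterial, and your hand-waved ``short bookkeeping argument'' is no looser than the paper's own treatment. If you wanted to make the statement literally true you would add a $+1$; as written, both your argument and the paper's leave the same small gap.
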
 
\begin{proof}
By \cref{lem:opt-pol-visit-iff}, $\optPi$ changes value iff there is a change in optimality status for some visit distribution function at some state. \citet{lippman1968set} showed that two visit distribution functions can trade off optimality status at most $2\abs{\St}+1$ times. At each state $s$, there are $\binom{\abs{\F(s)}}{2}$ such pairs. 
\end{proof} 

\begin{restatable}[Optimality probability's limits exist]{prop}{muConverge} \label{prop:opt-prob-converge} Let $F\subseteq \F(s)$. $\optprob[\Dany]{F,0}=\lim_{\gamma\to 0} \optprob[\Dany]{F,\gamma}$ and $\optprob[\Dany]{F,1}=\lim_{\gamma\to 1} \optprob[\Dany]{F,\gamma}$. 
\end{restatable}
\begin{proof}
First consider the limit as $\gamma \to 1$. Let $\Dany$ have probability measure $F_\text{any}$, and define $\delta(\gamma)\defeq F_\text{any}\prn{\set{R\in\rewardSpace \mid \exists \gamma^* \in [\gamma,1): \optPi[R,\gamma^*]\neq \optPi[R,1]}}$. Since $F_\text{any}$ is a probability measure, $\delta(\gamma)$ is bounded $[0,1]$, and $\delta(\gamma)$ is monotone decreasing. Therefore, $\lim_{\gamma \to 1} \delta(\gamma)$ exists. 

If $\lim_{\gamma \to 1} \delta(\gamma)>0$, then there exist reward functions whose optimal policy sets $\optPi$ never converge (in the discrete topology on sets) to $\optPi[R,1]$, contradicting \cref{lem:opt-pol-shift-bound}. So $\lim_{\gamma \to 1} \delta(\gamma)=0$.

By the definition of optimality probability (\cref{def:prob-opt}) and of $\delta(\gamma)$,  $|\optprob[\Dany]{F,\gamma}-\optprob[\Dany]{F,1}|\leq \delta(\gamma)$. Since $\lim_{\gamma\to 1} \delta(\gamma)=0$, $\lim_{\gamma\to 1}\optprob[\Dany]{F,\gamma}=\optprob[\Dany]{F,1}$.

A similar proof shows that $\lim_{\gamma\to 0} \optprob[\Dany]{F,\gamma}=\optprob[\Dany]{F,0}$.
\end{proof} 

\begin{restatable}[Optimality probability identity]{lem}{optProbID}\label{lem:opt-prob-id}
Let $\gamma\in(0,1)$ and let $F\subseteq \F(s)$. 
\begin{equation}
\optprob[\Dany]{F,\gamma}=\phelper{F(\gamma)\geq \F(s,\gamma)}=\phelper{F(\gamma)\geq \Fnd(s,\gamma)}.
\end{equation}
\end{restatable}
\begin{proof}
Let $\gamma\in(0,1)$.
\begin{align}
    \optprob[\Dany]{F,\gamma}&\defeq \prob[R\sim\Dany]{\exists \f^\pi \in F: \pi\in\optPi}\\
    &=\E{\rf \sim\Dany}{ \indic{\max_{\f\in F} \f(\gamma)^\top \rf = \max_{\f'\in\F(s)} \f'(\gamma)^\top \rf}}\label{eq:F-probability}\\
    &=\E{\rf \sim\Dany}{ \indic{\max_{\f\in F} \f(\gamma)^\top \rf = \max_{\f'\in\Fnd(s)} \f'(\gamma)^\top \rf}}\label{eq:Fnd-probability}\\
    &\eqdef \phelper{F(\gamma)\geq \Fnd(s,\gamma)}.
\end{align}
\Cref{eq:F-probability} follows because \cref{lem:opt-pol-visit-iff} shows that $\pi$ is optimal iff it induces an optimal visit distribution  $\f$ at every state. \Cref{eq:Fnd-probability} follows because $\forall \rf \in \rewardVS: \max_{\f'\in\F(s)} \f'(\gamma)^\top \rf= \max_{\f'\in\Fnd(s)} \f'(\gamma)^\top \rf$ by \cref{cor:opt-vf-restrict-fnd}.
\end{proof}

\subsection{Basic properties of \texorpdfstring{$\pwrNoDist$}{power}}

\begin{restatable}[$\pwrNoDist$ identities]{lem}{lemPowEQ}\label{lem:power-id}
Let $\gamma \in (0,1)$.
\begin{align}
    \pwr[s,\gamma]&=\E{\rf\sim\Dbd}{\max_{\f\in \Fnd(s)} \frac{1-\gamma}{\gamma}\prn{\f(\gamma)-\unitvec}^\top \rf}\label{eq:pwr-def-f}\\
    &= \dfrac{1-\gamma}{\gamma}\E{\rf\sim\Dbd}{\OptVf{s,\gamma}-R(s)}\\
    &=  \dfrac{1-\gamma}{\gamma}\prn{\vavg-\E{R\sim \Dbd}{R(s)}} \\
    &=\E{R\sim\Dbd}{\max_{\pi\in\Pi} \E{s'\sim T\prn{s,\pi(s)}}{\prn{1-\gamma}V^\pi_R\prn{s',\gamma}}}.\label{eq:pwr-def-avg-discounted}
\end{align}
\end{restatable}
\begin{proof}
\begin{align}
    \pwrNoDist_{\Dbd}(s,\gamma)&\defeq\E{\rf\sim\Dbd}{\max_{\f\in \F(s)} \frac{1-\gamma}{\gamma}\prn{\f(\gamma)-\unitvec}^\top \rf}\\
    &= \E{\rf\sim\Dbd}{\max_{\f\in \Fnd(s)} \frac{1-\gamma}{\gamma}\prn{\f(\gamma)-\unitvec}^\top \rf}\label{eq:pwr-fnd-restrict}\\
    &=\E{\rf\sim\Dbd}{\max_{\f\in \F(s)} \frac{1-\gamma}{\gamma}\prn{\f(\gamma)-\unitvec}^\top \rf}\\
    &= \dfrac{1-\gamma}{\gamma}\E{\rf\sim\Dbd}{\OptVf{s,\gamma}-R(s)}\label{eq:pwr-vf-convert}\\
    &=\dfrac{1-\gamma}{\gamma}\prn{\vavg-\E{R\sim \Dbd}{R(s)}}\label{eq:vavg-def-pwr-def}\\
    &= \E{\rf\sim\Dbd}{\max_{\pi\in\Pi} \E{s'\sim T\prn{s,\pi(s)}}{\prn{1-\gamma}\fpi[\pi]{s'}(\gamma)^\top\rf}}\label{eq:f-expand-pwr-def}\\
    &= \E{R\sim\Dbd}{\max_{\pi\in\Pi} \E{s'\sim T\prn{s,\pi(s)}}{\prn{1-\gamma}V^\pi_R\prn{s',\gamma}}}.\label{eq:pwr-avg-normalized}
\end{align}

\Cref{eq:pwr-fnd-restrict} follows from \cref{cor:opt-vf-restrict-fnd}. \Cref{eq:pwr-vf-convert} follows from the dual formulation of optimal value functions. \Cref{eq:vavg-def-pwr-def} holds by the definition of $\vavg$ (\cref{def:vavg}). \Cref{eq:f-expand-pwr-def} holds because $\fpi{s}(\gamma) = \unitvec +\gamma \E{s'\sim T\prn{s,\pi(s)}}{\fpi[\pi]{s'}(\gamma)}$ by the definition of a visit distribution function (\cref{def:visit}).
\end{proof}

\begin{restatable}[Discount-normalized value function]{definition}{discValue}\label{def:disc-value}
Let $\pi$ be a policy, $R$ a reward function, and $s$ a state. For $\gamma \in [0,1]$, $\VfNorm[\pi]{s,\gamma}\defeq \lim_{\gamma^*\to \gamma}(1-\gamma^*) V^\pi_R(s,\gamma^*)$.
\end{restatable}

\begin{restatable}[Normalized value functions have uniformly bounded derivative]{lem}{normValueLip}\label{lem:norm-value-lip}
There exists $K\geq0$ such that for all reward functions $\rf\in \rewardVS$, $\sup_{\substack{s\in\St, \pi\in\Pi,\gamma\in [0,1]}}\abs{\frac{d}{d\gamma} \VfNorm[\pi]{s,\gamma}} \leq K\lone{\rf}$.
\end{restatable}
\begin{proof}
Let $\pi$ be any policy, $s$ a state, and $R$ a reward function. Since $\VfNorm[\pi]{s,\gamma}=\lim_{\gamma^*\to\gamma}(1-\gamma^*)\fpi{s}(\gamma^*)^\top \rf$, $\frac{d}{d\gamma}\VfNorm[\pi]{s,\gamma}$ is controlled by the behavior of $\lim_{\gamma^*\to\gamma}(1-\gamma^*)\fpi{s}(\gamma^*)$. We show that this function's gradient is bounded in infinity norm.

By \cref{f-rat}, $\fpi{s}(\gamma)$ is a multivariate rational function on $\gamma$. Therefore, for any state $s'$,  $\fpi{s}(\gamma)^\top\unitvec[s']=\frac{P(\gamma)}{Q(\gamma)}$ in reduced form. By \cref{prop:visit-dist-prop}, $0\leq \fpi{s}(\gamma)^\top\unitvec[s']\leq \frac{1}{1-\gamma}$. Thus, $Q$ may only have a root of multiplicity 1 at $\gamma=1$, and $Q(\gamma)\neq 0$ for $\gamma\in[0,1)$. Let $f_{s'}(\gamma)\defeq (1-\gamma)\fpi{s}(\gamma)^\top\unitvec[s']$.

If $Q(1)\neq 0$, then the derivative $f_{s'}'(\gamma)$ is bounded on $\gamma\in[0,1)$ because the polynomial $(1-\gamma)P(\gamma)$ cannot diverge on a bounded domain. 

If $Q(1)=0$, then factor out the root as $Q(\gamma)=(1-\gamma)Q^*(\gamma)$. 
\begin{align}
    f_{s'}'(\gamma)&=\frac{d}{d\gamma}\prn{\frac{(1-\gamma)P(\gamma)}{Q(\gamma)}}\\
    &=\frac{d}{d\gamma}\prn{\frac{P(\gamma)}{Q^*(\gamma)}}\\
    &=\frac{P'(\gamma)Q^*(\gamma)-(Q^*)'(\gamma)P(\gamma)}{(Q^*(\gamma))^2}.
\end{align}

Since $Q^*(\gamma)$ is a polynomial with no roots on $\gamma\in[0,1]$, $f_{s'}'(\gamma)$ is bounded on $\gamma\in[0,1)$.

Therefore, whether or not $Q(\gamma)$ has a root at $\gamma=1$, $f_{s'}'(\gamma)$ is bounded on $\gamma\in[0,1)$. Furthermore, $\sup_{\gamma\in[0,1)}\linfty{\nabla(1-\gamma)\fpi{s}(\gamma)}=\sup_{\gamma\in[0,1)}\max_{s'\in\St} \abs{f_{s'}'(\gamma)}$ is finite since there are only finitely many states.

There are finitely many $\pi\in\Pi$, and finitely many states $s$, and so there exists some $K'$ such that $\sup_{\substack{s\in\St,\\ \pi\in\Pi,\gamma\in [0,1)}} \linfty{\nabla(1-\gamma)\fpi{s}(\gamma)}\leq K'$. Then $\lone{\nabla(1-\gamma)\fpi{s}(\gamma)}\leq \abs{\St}K'\eqdef K$. 
\begin{align}
    \sup_{\substack{s\in\St,\\ \pi\in\Pi,\gamma\in [0,1)}} \abs{\frac{d}{d\gamma}V^\pi_{R,\text{norm}}\prn{s,\gamma}}\defeq\,&\sup_{\substack{s\in\St,\\ \pi\in\Pi,\gamma\in [0,1)}} \abs{\frac{d}{d\gamma}\lim_{\gamma^*\to\gamma}(1-\gamma^*)V^\pi_{R}\prn{s,\gamma^*}}\\
    =\,&\sup_{\substack{s\in\St,\\ \pi\in\Pi,\gamma\in [0,1)}} \abs{\frac{d}{d\gamma}(1-\gamma)V^\pi_{R}\prn{s,\gamma}}\label{eq:lim-disappear}\\
    =\,&\sup_{\substack{s\in\St,\\ \pi\in\Pi,\gamma\in [0,1)}} \abs{\nabla (1-\gamma)\fpi{s}(\gamma)^\top\rf}\label{eq:on-pol-rat}\\
    \leq\, & \sup_{\substack{s\in\St,\\ \pi\in\Pi,\gamma\in [0,1)}} \lone{\nabla (1-\gamma)\fpi{s}(\gamma)}\lone{\rf}\label{eq:cs}\\
    \leq\, & K\lone{\rf}.\label{eq:bounded}
\end{align}

\Cref{eq:lim-disappear} holds because $\Vf[\pi]{s,\gamma}$ is continuous on $\gamma\in[0,1)$ by \cref{smoothOnPol}. \Cref{eq:cs} holds by the Cauchy-Schwarz inequality.

Since $\abs{\frac{d}{d\gamma}V^\pi_{R,\text{norm}}\prn{s,\gamma}}$ is bounded for all $\gamma \in [0,1)$, \cref{eq:bounded} also holds for $\gamma\to 1$. 
\end{proof}

\ContPower*
\begin{proof}
Let $b,c$ be such that $\supp[\Dbd]\subseteq [b,c]^{\abs{\St}}$. For any $\rf\in\supp[\Dbd]$ and $\pi\in\Pi$, $\VfNorm[\pi]{s,\gamma}$ has Lipschitz constant $K\lone{\rf}\leq K \abs{\St}\linfty{\rf}\leq K\abs{\St}\max(\abs{c},\abs{b})$ on $\gamma\in(0,1)$ by \cref{lem:norm-value-lip}.

For $\gamma\in(0,1)$, $\pwr[s,\gamma] = \E{R\sim\Dbd}{\max_{\pi\in\Pi} \E{s'\sim T\prn{s,\pi(s)}}{(1-\gamma)V^\pi_R\prn{s',\gamma}}}$ by \cref{eq:pwr-avg-normalized}. The expectation of the maximum of a set of functions which share a Lipschitz constant, also shares the Lipschitz constant. This shows that $\pwr[s,\gamma]$ is Lipschitz continuous on $\gamma\in(0,1)$. Thus, its limits are well-defined as $\gamma\to 0$ and $\gamma\to 1$. So it is Lipschitz continuous on the closed unit interval.
\end{proof}

\maxPwrGeneral*
\begin{proof}
Let $\gamma \in (0,1)$.
\begin{align}
    \pwr[s,\gamma][\Dbd]&= \E{R\sim \Dbd}{\max_{\pi\in\Pi} \E{s'\sim T(s,\pi(s))}{(1-\gamma)\OptVf{s',\gamma}}}\label{eq:pwr-id}\\
    &\leq \E{R\sim \Dbd}{\max_{\pi\in\Pi} \E{s'\sim T(s,\pi(s))}{(1-\gamma)\geom[\max_{s''\in\St}R(s'')]}}\label{eq:max-vfn}\\
    &= \E{R\sim \Dbd}{\max_{s''\in\St}R(s'')}.
\end{align}

\Cref{eq:pwr-id} follows from \cref{lem:power-id}. \Cref{eq:max-vfn} follows because $\OptVf{s',\gamma}\leq \geom[\max_{s''\in\St}R(s'')]$, as no policy can do better than achieving maximal reward at each time step. Taking limits, the inequality holds for all $\gamma\in[0,1]$.

Suppose that $s$ can deterministically reach all states in one step and all states are {\stateEnd}s. Then \cref{eq:max-vfn} is an equality for all $\gamma\in(0,1)$, since for each $R$, the agent can select an action which deterministically transitions to a state with maximal reward. Thus the equality holds for all $\gamma\in[0,1]$.
\end{proof}

\begin{restatable}[Lower bound on current $\pwrNoDist$ based on future $\pwrNoDist$]{lem}{FuturePower}\label{lem:future-power}
\begin{align}
\pwr[s,\gamma]\geq (1-\gamma)\min_a \E{\substack{s'\sim T(s,a),\\R\sim\Dbd}}{R(s')} + \gamma\max_a \E{s'\sim T(s,a)}{\pwr[s',\gamma]}.
\end{align}
\end{restatable}
\begin{proof} 
Let $\gamma \in (0,1)$ and let $a^*\in \argmax_a \E{s'\sim T\prn{s,a}}{\pwr[s',\gamma]}$.
\begin{align}
    &\pwr[s,\gamma]\\
    =\,&(1-\gamma)\E{R\sim\Dbd}{\max_a \E{s'\sim T\prn{s,a}}{\OptVf{s',\gamma}}}\label{eq:pwr-id-app}\\
    \geq\,& (1-\gamma) \max_a \E{s'\sim T\prn{s,a}}{\E{R\sim\Dbd}{\OptVf{s',\gamma}}}\label{eq:max-expect-ineq}\\
    =\,&(1-\gamma)\max_a \E{s'\sim T\prn{s,a}}{\vavg[s',\gamma]}\\
    =\,&(1-\gamma) \max_a \E{s'\sim T\prn{s,a}}{\E{R\sim \Dbd}{R(s')}+\frac{\gamma}{1-\gamma}\pwr[s',\gamma]}\label{eq:vavg-identity}\\
    \geq\,&(1-\gamma) \E{s'\sim T\prn{s,a^*}}{\E{R\sim \Dbd}{R(s')}+\frac{\gamma}{1-\gamma}\pwr[s',\gamma]}\\
    \geq\,&(1-\gamma)\min_a \E{\substack{s'\sim T(s,a),\\R\sim\Dbd}}{R(s')} + \gamma \E{s'\sim T\prn{s,a^*}}{\pwr[s',\gamma]}.\label{eq:min-reward-max-power}
\end{align}

\Cref{eq:pwr-id-app} holds by \cref{lem:power-id}. \Cref{eq:max-expect-ineq} follows because $\E{x\sim X}{\max_a f(a,x)}\geq \max_a \E{x\sim X}{f(a,x)}$ by Jensen's inequality, and \cref{eq:vavg-identity} follows by \cref{lem:power-id}. 

The inequality also holds when we take the limits  $\gamma \to 0$ or $\gamma \to 1$.
\end{proof}

\smooth*
\begin{proof}
Suppose $\gamma\in[0,1]$. First consider the case where $\pwr[s,\gamma]\geq \pwr[s',\gamma]$.
\begin{align}
\pwr[s',\gamma]&\geq (1-\gamma)\min_a \E{\substack{s_x\sim T(s',a),\\R\sim\Dbd}}{R(s_x)} + \gamma\max_a \E{s_x\sim T(s',a)}{\pwr[s_x,\gamma]}\label{eq:lb-pwr-next}\\
&\geq (1-\gamma)b + \gamma \pwr[s,\gamma].\label{eq:lb-pwr-next-min}
\end{align}

\Cref{eq:lb-pwr-next} follows by \cref{lem:future-power}. \Cref{eq:lb-pwr-next-min} follows because reward is lower-bounded by $b$ and because  $s'$ can reach $s$ in one step with probability 1. 
\begin{align}
    \abs{\pwr[s,\gamma]-\pwr[s',\gamma]}&=\pwr[s,\gamma]-\pwr[s',\gamma]\label{eq:no-abs}\\
    &\leq \pwr[s,\gamma]-\prn{(1-\gamma)b + \gamma \pwr[s,\gamma]}\label{eq:lb-pwr-abs-diff}\\
    &=(1-\gamma)\prn{\pwr[s,\gamma] - b}\\
    &\leq(1-\gamma)\prn{\E{R\sim \Dbd}{\max_{s''\in\St}R(s'')} - b}\label{eq:ub-pwr-s}\\
    &\leq (1-\gamma)(c-b).\label{eq:ub-pwr-abs-diff}
\end{align}

\Cref{eq:no-abs} follows because $\pwr[s,\gamma]\geq \pwr[s',\gamma]$. \Cref{eq:lb-pwr-abs-diff} follows by \cref{eq:lb-pwr-next-min}. \Cref{eq:ub-pwr-s} follows by \cref{lem:max-power-general}. \Cref{eq:ub-pwr-abs-diff} follows because reward under $\Dbd$ is upper-bounded by $c$.

The case where $\pwr[s,\gamma]\leq \pwr[s',\gamma]$ is similar, leveraging the fact that $s$ can also reach $s'$ in one step with probability 1.
\end{proof}

\subsection{Seeking \texorpdfstring{$\pwrNoDist$}{power} is often more probable under optimality}

\subsubsection{Keeping options open tends to be \texorpdfstring{$\pwrNoDist$}{POWER}-seeking and tends to be optimal}

\begin{restatable}[Normalized visit distribution function]{definition}{normVisitFn}\label{def:norm-visit-fn}
Let $\f:[0,1)\to \rewardVS$ be a vector function. For $\gamma\in[0,1]$, $\NormF{\f,\gamma}\defeq \lim_{\gamma^*\to\gamma} (1-\gamma^*)\f(\gamma^*)$ (this limit need not exist for arbitrary $\f$). If $F$ is a set of such $\f$, then $\NormF{F,\gamma}\defeq \set{\NormF{\f,\gamma}\mid \f \in F}$.
\end{restatable}

\begin{remark}
$\RSD=\NormF{\F(s),1}$.
\end{remark}

\begin{restatable}[Normalized visit distribution functions are continuous]{lem}{contNormFMix}\label{lem:cont-norm-f}
Let $\Delta_s\in \Delta(\St)$ be a state probability distribution, let $\pi\in\Pi$, and let $\f^*\defeq \E{s\sim \Delta_s}{\fpi{s}}$. $\NormF{\f^*, \gamma}$ is continuous on $\gamma\in [0,1]$.
\end{restatable}
\begin{proof}
\begin{align}
    \NormF{\f^*,\gamma}&\defeq \lim_{\gamma^*\to\gamma} (1-\gamma^*)\E{s\sim \Delta_s}{\fpi{s}(\gamma^*)}\\
    &=\E{s\sim \Delta_s}{\lim_{\gamma^*\to\gamma} (1-\gamma^*)\fpi{s}(\gamma^*)}\label{eq:expect-lim-swap}\\
    &\eqdef \E{s\sim \Delta_s}{\NormF{\fpi{s},\gamma}}.\label{eq:expect-cont}
\end{align}
\Cref{eq:expect-lim-swap} follows because the expectation is over a finite set. Each $\fpi{s}\in\F(s)$ is continuous on $\gamma\in[0,1)$ by \cref{f-rat}, and $\lim_{\gamma^*\to 1} (1-\gamma^*) \fpi{s}(\gamma^*)$ exists because {\rsd}s are well-defined \citep{puterman_markov_2014}. Therefore, each $\NormF{\fpi{s},\gamma}$ is continuous on $\gamma\in[0,1]$. Lastly, \cref{eq:expect-cont}'s expectation over finitely many continuous functions is itself continuous. 
\end{proof}

\begin{restatable}[Non-domination of normalized visit distribution functions]{lem}{ndNormalVisit}\label{lem:nd-norm-visit}
Let $\Delta_s\in \Delta(\St)$ be a state probability distribution and let $F\defeq \set{\E{s\sim \Delta_s}{\fpi{s}}\mid\pi\in\Pi}$. For all $\gamma\in[0,1]$, $\ND{\NormF{F,\gamma}}\subseteq \NormF{\ND{F},\gamma}$, with equality when $\gamma\in(0,1)$.
\end{restatable}
\begin{proof}
Suppose $\gamma\in (0,1)$.
\begin{align}
    \ND{\NormF{F,\gamma}}&=\ND{(1-\gamma)F(\gamma)}\label{eq:apply-cont-norm-1}\\
    &= (1-\gamma)\ND{F(\gamma)}\label{eq:nd-rescale-invariant}\\
    &= (1-\gamma)\prn{\ND{F}(\gamma)}\label{eq:nd-invariant-01}\\
    &= \NormF{\ND{F},\gamma}.\label{eq:apply-cont-norm-2}
\end{align}

\Cref{eq:apply-cont-norm-1} and \cref{eq:apply-cont-norm-2} follow by the continuity of $\NormF{\f,\gamma}$ (\cref{lem:cont-norm-f}). \Cref{eq:nd-rescale-invariant} follows by \cref{lem:pos-aff-nd-invar} \cref{item:invar-vectors}. \Cref{eq:nd-invariant-01} follows by \cref{lem:nd-gamma-F-subset}.

Let $\gamma=1$. Let $\dbf \in \ND{\NormF{F,1}}$ be strictly optimal for $\rf^*\in\rewardVS$. Then let $F_\dbf \subseteq F$ be the subset of $\f\in F$ such that $\NormF{\f,1}=\dbf$. 
\begin{align}
    \max_{\f\in F_\dbf} \NormF{\f,1}^\top \rf^* & > \max_{\f'\in F\setminus F_\dbf} \NormF{\f',1}^\top \rf^*.\label{eq:norm-f-ineq}
\end{align}

Since $\NormF{\f,1}$ is continuous at $\gamma=1$ (\cref{lem:cont-norm-f}), $\x^\top \rf^*$ is continuous on $\x\in\rewardVS$, and $F$ is finite, \cref{eq:norm-f-ineq} holds for some $\gamma^*\in(0,1)$ sufficiently close to $\gamma=1$. By \cref{lem:all-rf-max-nd}, at least one $\f\in F_\dbf$ is an element of $\ND{F(\gamma^*)}$. Then by \cref{lem:nd-gamma-F-subset}, $\f\in\ND{F}$. We conclude that $\ND{\NormF{F,1}}\subseteq \NormF{\ND{F},1}$. 

The case for $\gamma=0$ proceeds similarly.
\end{proof}

\begin{restatable}[$\pwrNoDist$ limit identity]{lem}{pwrLimit}\label{lem:pwr-limit}
Let $\gamma\in[0,1]$.
\begin{align}
    \pwr[s,\gamma]&=\E{\rf\sim\Dbd}{\max_{\f\in \Fnd(s)} \lim_{\gamma^*\to\gamma}\frac{1-\gamma^*}{\gamma^*}\prn{\f(\gamma^*)-\unitvec}^\top \rf}.
\end{align}
\end{restatable}
\begin{proof}
Let $\gamma\in[0,1]$.
\begin{align}
    \pwr[s,\gamma]&= \lim_{\gamma^*\to\gamma}\pwr[s,\gamma^*]\label{eq:pwr-cont-lim}\\
    &=\lim_{\gamma^*\to\gamma} \E{\rf\sim\Dbd}{\max_{\f\in \Fnd(s)} \frac{1-\gamma^*}{\gamma^*}\prn{\f(\gamma^*)-\unitvec}^\top \rf}\label{eq:pwr-lim-def}\\
    &=\E{\rf\sim\Dbd}{\lim_{\gamma^*\to\gamma}\max_{\f\in \Fnd(s)} \frac{1-\gamma^*}{\gamma^*}\prn{\f(\gamma^*)-\unitvec}^\top \rf}\label{eq:pwr-dom-conv}\\
    &=\E{\rf\sim\Dbd}{\max_{\f\in \Fnd(s)} \lim_{\gamma^*\to\gamma}\frac{1-\gamma^*}{\gamma^*}\prn{\f(\gamma^*)-\unitvec}^\top \rf}.\label{eq:max-cont-lim}
\end{align}
\Cref{eq:pwr-cont-lim} follows because $\pwr[s,\gamma]$ is continuous on $\gamma\in[0,1]$ by \cref{thm:cont-power}. \Cref{eq:pwr-lim-def} follows by \cref{lem:power-id}. 

For $\gamma^*\in(0,1)$, let $f_{\gamma^*}(\rf)\defeq \max_{\f\in \Fnd(s)} \frac{1-\gamma^*}{\gamma^*}\prn{\f(\gamma^*)-\unitvec}^\top \rf$. For any sequence $\gamma_n\to\gamma$, $\prn{f_{\gamma_n}}_{n=1}^\infty$ is a sequence of functions which are piecewise linear on $\rf \in \rewardVS$, which means they are continuous and therefore measurable. Since \cref{f-rat} shows that each $\f\in\Fnd(s)$ is multivariate rational on $\gamma^*$ (and therefore continuous on $\gamma^*$), $\set{f_{\gamma_n}}_{n=1}^\infty$ converges pointwise to limit function $f_\gamma$. Furthermore,  $\abs{\OptVf{s,\gamma_n}-R(s)}\leq \frac{\gamma}{1-\gamma_n}\linfty{R}$, and so $\abs{f_{\gamma_n}(\rf)}=\abs{\frac{1-\gamma_n}{\gamma_n}(\OptVf{s,\gamma_n}-R(s))}\leq g(\rf)\leq \linfty{\rf}\eqdef g(\rf)$, which is measurable. Therefore, apply Lebesgue's dominated convergence theorem to conclude that \cref{eq:pwr-dom-conv} holds. \Cref{eq:max-cont-lim} holds because $\max$ is a continuous function.
\end{proof}

\begin{restatable}[Lemma for $\pwrNoDist$ superiority]{lem}{morePowerPrefixmoreOptions}\label{lem:more-power-prefix}
Let $\Delta_1,\Delta_2\in \Delta\prn{\St}$ be state probability distributions. For $i=1,2$, let $F_{\Delta_i}\defeq \set{\gamma\inv\E{s_i\sim \Delta_i}{\fpi{s_i}-\unitvec[s_i]}\mid \pi\in\Pi}$. Suppose $F_{\Delta_2}$ contains a copy of $\ND{F_{\Delta_1}}$ via $\phi$. Then $\forall \gamma\in[0,1]:\E{s_1\sim \Delta_1}{\pwr[s_1,\gamma]}\leqMost[][\DSetBd] \E{s_2\sim \Delta_2}{\pwr[s_2,\gamma]}$. 

If $\ND{F_{\Delta_2}}\setminus \phi\cdot \ND{F_{\Delta_1}}$ is non-empty, then for all $\gamma\in(0,1)$, the inequality is strict for all $\Diid\in\DSetBCiid$ and $\E{s_1\sim \Delta_1}{\pwr[s_1,\gamma][\Dbd]}\not\geqMost[][\DSetBd] \E{s_2\sim \Delta_2}{\pwr[s_2,\gamma][\Dbd]}$.

These results also hold when replacing $F_{\Delta_i}$ with $F_{\Delta_i}^*\defeq \set{\E{s_i\sim \Delta_i}{\fpi{s_i}}\mid \pi\in\Pi}$ for $i=1,2$.
\end{restatable}
\begin{proof} 
\begin{align}
    \phi\cdot \ND{\NormF{F_{\Delta_1},\gamma}}&\subseteq \phi\cdot \NormF{\ND{F_{\Delta_1}},\gamma}\label{eq:subset-nd-norm}\\
    &\defeq \set{\permute\lim_{\gamma^*\to\gamma}(1-\gamma^*)\f(\gamma^*) \mid \f \in \ND{F_{\Delta_1}}}\\
    &=\set{\lim_{\gamma^*\to\gamma}(1-\gamma^*)\permute\f(\gamma^*) \mid \f \in \ND{F_{\Delta_1}}}\label{eq:cont-permute-app}\\
    &=\set{\lim_{\gamma^*\to\gamma}(1-\gamma^*)\f(\gamma^*) \mid \f \in F_{\text{sub}}'}\\
    &\subseteq \set{\lim_{\gamma^*\to\gamma}(1-\gamma^*)\f(\gamma^*) \mid \f\in F_{\Delta_2}}\label{eq:subset-f}\\
    &\eqdef \NormF{F_{\Delta_2},\gamma}.\label{eq:subset-f-final}
\end{align}
\Cref{eq:subset-nd-norm} follows by \cref{lem:nd-norm-visit}. \Cref{eq:cont-permute-app} follows because $\permute$ is a continuous linear operator. \Cref{eq:subset-f} follows by assumption.
\begin{align}
    \E{s_1\sim \Delta_1}{\pwr[s_1,\gamma]}&\defeq\E{\substack{s_1\sim \Delta_1,\\\rf\sim\Dbd}}{\max_{\pi\in\Pi} \lim_{\gamma^*\to\gamma}\frac{1-\gamma^*}{\gamma^*}\prn{\fpi{s_1}(\gamma^*)-\unitvec[s_1]}^\top \rf}\label{eq:expect-pwr-1}\\
    &=\E{\rf\sim\Dbd}{\max_{\pi\in\Pi} \lim_{\gamma^*\to\gamma}\frac{1-\gamma^*}{\gamma^*}\E{s_1\sim \Delta_1}{\fpi{s_1}(\gamma^*)-\unitvec[s_1]}^\top \rf}\label{eq:bring-inside-expect}\\
    &=\E{\rf\sim\Dbd}{\max_{\dbf\in \NormF{F_{\Delta_1},\gamma}} \dbf^\top \rf}\\
    &=\E{\rf\sim\Dbd}{\max_{\dbf\in \ND{\NormF{F_{\Delta_1},\gamma}}} \dbf^\top \rf}\label{eq:restrict-nd-fdelta}\\
    &\leqMost[][\DSetBd] \E{\rf\sim\Dbd}{\max_{\dbf\in \NormF{F_{\Delta_2},\gamma}} \dbf^\top \rf}\label{eq:leq-most-apply-transient}\\
    &=\E{\rf\sim\Dbd}{\max_{\pi\in\Pi} \lim_{\gamma^*\to\gamma}\frac{1-\gamma^*}{\gamma^*}\E{s_2\sim \Delta_2}{\fpi{s_2}(\gamma^*)-\unitvec[s_2]}^\top \rf}\\
    &=\E{\substack{s_2\sim \Delta_2,\\\rf\sim\Dbd}}{\max_{\pi\in\Pi} \lim_{\gamma^*\to\gamma}\frac{1-\gamma^*}{\gamma^*}\prn{\fpi{s_2}(\gamma^*)-\unitvec[s_2]}^\top \rf}\label{eq:bring-inside-expect-2}\\
    &\eqdef \E{s_2\sim \Delta_2}{\pwr[s_2,\gamma]}.\label{eq:expect-pwr-2}
\end{align}

\Cref{eq:expect-pwr-1} and \cref{eq:expect-pwr-2} follow by \cref{lem:pwr-limit}. \Cref{eq:bring-inside-expect} and \cref{eq:bring-inside-expect-2} follow because each $R$ has a stationary deterministic optimal policy $\pi\in \optPi\subseteq \Pi$ which simultaneously achieves  optimal value at all states. \Cref{eq:restrict-nd-fdelta} follows by \cref{cor:nd-func-indif}.

Apply \cref{lem:expect-superior} with $A\defeq \NormF{F_{\Delta_1},\gamma}, B \defeq \NormF{F_{\Delta_2},\gamma}$, $g$ the identity function, and involution $\phi$ (satisfying $\phi\cdot \ND{A}\subseteq B$ by \cref{eq:subset-f-final}) in order to conclude that \cref{eq:leq-most-apply-transient} holds. 

Suppose that $\ND{F_{\Delta_2}}\setminus \phi\cdot \ND{F_{\Delta_1}}$ is non-empty; let $F_\text{sub}'\defeq \phi\cdot \ND{F_{\Delta_1}}$. \Cref{lem:nd-gamma-F-subset} shows that for all $\gamma\in(0,1)$, $\ND{F_{\Delta_2}(\gamma)}\setminus F_\text{sub}'(\gamma)$ is non-empty. \Cref{lem:pos-aff-nd-invar} \cref{item:invar-vectors} then implies that $\ND{B}\setminus \phi\cdot A=\frac{1-\gamma}{\gamma}\prn{\ND{F_{\Delta_2}(\gamma)}-\unitvec}\setminus \prn{\frac{1-\gamma}{\gamma}F'_\text{sub}(\gamma)}$ is non-empty. Then \cref{lem:expect-superior} implies that for all $\gamma\in(0,1)$, \cref{eq:leq-most-apply-transient} is strict for all $\Diid\in\DSetBCiid$ and $\E{s_1\sim \Delta_1}{\pwr[s_1,\gamma][\Dbd]}\not\geqMost[][\DSetBd] \E{s_2\sim \Delta_2}{\pwr[s_2,\gamma][\Dbd]}$. 

We show that this result's preconditions holding for  $F_{\Delta_i}^*$ implies the $F_{\Delta_i}$ preconditions. Suppose $F_{\Delta_i}^*\defeq \set{\E{s_i\sim \Delta_i}{\fpi{s_i}}\mid \pi\in\Pi}$ for $i=1,2$ are such that $F_\text{sub}^*\defeq \phi\cdot \ND{F_{\Delta_1}^*}\subseteq F_{\Delta_2}^*$. In the following, the $\Delta_i$ are represented as vectors in $\rewardVS$, and $\gamma$ is a variable.
\begin{align}
    \phi\cdot \set{\gamma\f\mid \f \in \ND{F_{\Delta_1}}}&=\phi\cdot \prn{\ND{F_{\Delta_1}^*-\Delta_1}}\\
    &=\phi\cdot\prn{\ND{F_{\Delta_1}^*}-\Delta_1}\label{eq:invariant-vec-fn-set}\\
    &=\set{\permute \f - \permute\Delta_1\mid \f\in \ND{F_{\Delta_1}^*}}\\
    &\subseteq \set{\f-\Delta_2 \mid \f\in F_{\Delta_2}^*}\label{eq:fsub-no-state}\\
    &=\set{\gamma\f\mid \f \in F_{\Delta_2}}.\label{eq:gamma-sim}
\end{align}

\Cref{eq:invariant-vec-fn-set} follows from \cref{lem:pos-aff-nd-invar} \cref{item:invar-vector-fns}. Since we assumed that $\phi\cdot \ND{F_{\Delta_1}^*}\subseteq F_{\Delta_2}^*$, $\phi\cdot \set{\Delta_1}=\phi\cdot\prn{\ND{F_{\Delta_1}^*}(0)}\subseteq F_{\Delta_2}^*(0) =\set{\Delta_2}$. This implies that $\permute\Delta_1=\Delta_2$ and so \cref{eq:fsub-no-state} follows. 

\Cref{eq:gamma-sim} shows that $\phi\cdot \set{\gamma\f\mid \f \in \ND{F_{\Delta_1}}}\subseteq \set{\gamma\f\mid \f \in F_{\Delta_2}}$. But we then have $\phi\cdot \set{\gamma\f\mid \f \in \ND{F_{\Delta_1}}}\defeq \set{\gamma\permute \f \mid \f \in \ND{F_{\Delta_1}}}=\set{\gamma\f \mid \f \in \phi\cdot \ND{F_{\Delta_1}}}\subseteq \set{\gamma\f\mid \f \in F_{\Delta_2}}$. Thus,  $\phi\cdot \ND{F_{\Delta_1}}\subseteq F_{\Delta_2}$. 

Suppose $\ND{F_{\Delta_2}^*}\setminus \phi\cdot \ND{F_{\Delta_1}^*}$ is non-empty, which implies that
\begin{align}
    \phi\cdot \set{\gamma\f\mid \f \in \ND{F_{\Delta_1}}}&=\set{\permute \f - \permute\Delta_1\mid \f\in \ND{F_{\Delta_1}^*}}\\
    &=\set{\f - \permute\Delta_1\mid \f\in \phi\cdot \ND{F_{\Delta_1}^*}}\\
    &\subsetneq \set{\f-\Delta_2 \mid \f\in \ND{F_{\Delta_2}^*}}\\
    &=\set{\gamma\f\mid \f \in \ND{F_{\Delta_2}}}.
\end{align}

Then $\ND{F_{\Delta_2}}\setminus \phi\cdot \ND{F_{\Delta_1}}$ must be non-empty. Therefore, if the preconditions of this result are met for $F_{\Delta_i}^*$, they are met for $F_{\Delta_i}$.
\end{proof}

\morePowerMoreOptions*
\begin{proof}
Let $F_\text{sub}\defeq \phi\cdot \Fnd(s')\subseteq \F(s)$. Let $\Delta_1\defeq \unitvec[s'],\Delta_2\defeq \unitvec$, and define $F_{\Delta_i}^*\defeq \set{\E{s_i\sim \Delta_i}{\fpi{s_i}}\mid \pi\in\Pi}$ for $i=1,2$. Then $\Fnd(s')=\ND{F_{\Delta_1}^*}$ is similar to $F_\text{sub}=F^*_\text{sub}\subseteq F_{\Delta_2}^*=\F(s)$ via involution $\phi$. Apply \cref{lem:more-power-prefix} to conclude that  $\forall \gamma\in[0,1]:\pwr[s',\gamma][\Dbd]\leqMost[][\DSetBd] \pwr[s,\gamma][\Dbd]$. 

Furthermore, $\Fnd(s)=\ND{F_{\Delta_2}^*}$, and $F_\text{sub}=F_\text{sub}^*$, and so if $\Fnd(s)\setminus \phi\cdot \Fnd(s')\defeq \Fnd(s)\setminus F_\text{sub} =\ND{F_{\Delta_2}^*}\setminus F_\text{sub}^*$ is non-empty, then \cref{lem:more-power-prefix} shows that for all $\gamma\in(0,1)$, the inequality is strict for all $\Diid\in\DSetBCiid$ and $\pwr[s',\gamma][\Dbd]\not\geqMost[][\DSetBd] \pwr[s,\gamma][\Dbd]$. 
\end{proof}

\begin{restatable}[Non-dominated visit distribution functions never agree with other visit distribution functions at that state]{lem}{noAgreeND}\label{lem:no-agree} 
Let $\f \in \Fnd(s), \f'\in\F(s)\setminus\{\f\}$. $\forall \gamma \in (0,1):\f(\gamma) \neq \f'(\gamma)$.
\end{restatable}
\begin{proof}
Let $\gamma\in(0,1)$. Since $\f\in\Fnd(s)$, there exists a $\gamma^*\in(0,1)$ at which $\f$ is strictly optimal for some reward function. Then by \cref{transferDiscount}, we can produce another reward function for which $\f$ is strictly optimal at discount rate $\gamma$; in particular, \cref{transferDiscount} guarantees that the policies which induce $\f'$ are not optimal at $\gamma$. So $\f(\gamma)\neq \f'(\gamma)$. 
\end{proof}

\begin{restatable}[Cardinality of non-dominated visit distributions]{cor}{cardNDInter}\label{cor:card-nd-visit}
Let $F\subseteq \F(s)$. $\forall \gamma\in(0,1): \abs{F\cap \Fnd(s)}=\abs{F(\gamma)\cap \Fnd(s,\gamma)}$.
\end{restatable}
\begin{proof}
Let $\gamma\in(0,1)$. By applying \cref{lem:nd-gamma-F-subset} with $\Delta_d\defeq \unitvec$, $\f\in\Fnd(s)=\ND{\F(s)}$ iff $\f(\gamma)\in\ND{\F(s,\gamma)}$. By \cref{lem:nd-relation}, $\ND{\F(s,\gamma)}=\Fnd(s,\gamma)$. So all $\f\in F\cap \Fnd(s)$ induce $\f(\gamma)\in F(\gamma)\cap \Fnd(s,\gamma)$, and $\abs{F\cap \Fnd(s)}\geq\abs{F(\gamma)\cap \Fnd(s,\gamma)}$.

\Cref{lem:no-agree} implies that for all $\f,\f'\in\Fnd(s)$, $\f=\f'$ iff $\f(\gamma)=\f'(\gamma)$. Therefore, $\abs{F\cap \Fnd(s)}\leq\abs{F(\gamma)\cap \Fnd(s,\gamma)}$. So $\abs{F\cap \Fnd(s)}=\abs{F(\gamma)\cap \Fnd(s,\gamma)}$.
\end{proof}

\begin{restatable}[Optimality probability and state bottlenecks]{lem}{optProbBottle}\label{lem:opt-prob-bottleneck}
Suppose that $s$ can reach $\reach{s',a'}\cup\reach{s',a}$, but only by taking actions equivalent to $a'$ or $a$ at state $s'$. $F_{\text{nd},a'}\defeq \FndRestrictAction{s'}{a'}, F_a\defeq \FRestrictAction{s'}{a}$. Suppose $F_a$ contains a copy of $F_{\text{nd},a'}$ via $\phi$ which fixes all states not belonging to $\reach{s',a'}\cup\reach{s',a}$. 
Then $\forall \gamma\in[0,1]:\optprob[\Dany]{F_{\text{nd},a'},\gamma}\leqMost[][\DSetAny] \optprob[\Dany]{F_a,\gamma}$.

If $\Fnd(s)\cap \prn{F_a\setminus \phi\cdot F_{\text{nd},a'}}$ is non-empty, then for all $\gamma\in(0,1)$, the inequality is strict for all $\Diid\in\DSetBCiid$, and $\optprob[\Dany]{F_{\text{nd},a'},\gamma}\not\geqMost[][\DSetAny] \optprob[\Dany]{F_a,\gamma}$.
\end{restatable}
\begin{proof} Let $F_\text{sub}\defeq \phi\cdot F_{\text{nd},a'}$. Let $F^* \defeq \bigcup_{\substack{a''\in\A:\\ \prn{a'' \not\equiv_{s'} a} \land \prn{a'' \not \equiv_{s'}a'}}} \FRestrictAction{s'}{a''}\cup F_{\text{nd},a'} \cup F_\text{sub}$.
\begin{align}
    \phi\cdot F^*\defeq\,& \phi\cdot \prn{\bigcup_{\substack{a''\in\A:\\ \prn{a'' \not\equiv_{s'} a} \land \prn{a'' \not \equiv_{s'}a'}}} \FRestrictAction{s'}{a''}\cup F_{\text{nd},a'} \cup F_\text{sub}}\\
    =\,& \bigcup_{\substack{a''\in\A:\\ \prn{a'' \not\equiv_{s'} a} \land \prn{a'' \not \equiv_{s'}a'}}}\phi\cdot \FRestrictAction{s'}{a''}\cup \prn{\phi\cdot F_{\text{nd},a'}} \cup \prn{\phi\cdot F_\text{sub}}\\
    =\,& \bigcup_{\substack{a''\in\A:\\ \prn{a'' \not\equiv_{s'} a} \land \prn{a'' \not \equiv_{s'}a'}}}\phi\cdot \FRestrictAction{s'}{a''}\cup F_\text{sub} \cup F_{\text{nd},a'}\label{eq:invert-Fsub}\\
    =\,& \bigcup_{\substack{a''\in\A:\\ \prn{a'' \not\equiv_{s'} a} \land \prn{a'' \not \equiv_{s'}a'}}} \FRestrictAction{s'}{a''}\cup F_\text{sub}\cup F_{\text{nd},a'}\label{eq:phi-prime-bigcup}\\
    \eqdef\,& F^*.
\end{align}
\Cref{eq:invert-Fsub} follows because the involution $\phi$ ensures that $\phi\cdot F_\text{sub}=F_{\text{nd},a'}$. By assumption, $\phi$ fixes all $s'\not\in \reach{s',a'}\cup\reach{s',a}$. Suppose $\f\in\F (s)\setminus\prn{F_{\text{nd},a'}\cup F_{a}}$. By the bottleneck assumption, $\f$ does not visit states in $\reach{s',a'}\cup\reach{s',a}$. Therefore, $\permute[\phi]\f=\f$, and so \cref{eq:phi-prime-bigcup} follows.

Let $F_Z \defeq \prn{\F(s)\setminus (\FRestrictAction{s}{a'}\cup F_a)} \cup F_{\text{nd},a'} \cup F_a$. By definition, $F_Z\subseteq \F(s)$. Furthermore, $\Fnd(s)=\bigcup_{\substack{a''\in\A}} \FndRestrictAction{s'}{a''}\subseteq \prn{\F(s)\setminus (\FRestrictAction{s}{a'}\cup F_a)} \cup \FndRestrictAction{s}{a'} \cup F_a\eqdef F_Z$, and so $\Fnd(s)\subseteq F_Z$. Note that $F^*=F_Z\setminus (F_a\setminus F_\text{sub})$.

\paragraph*{Case: $\gamma\in(0,1)$.}
\begin{align}
    \optprob[\Dany]{F_{\text{nd},a'},\gamma} &= \phelper{F_{\text{nd},a'}(\gamma)\geq \F(s,\gamma)}[\Dany]\label{eq:opt-prb-id-1}\\
    &\leqMost[][\DSetAny] \phelper{F_{a}(\gamma)\geq \F(s,\gamma)}[\Dany]\label{eq:leq-most-apply}\\
    &= \optprob[\Dany]{F_{\text{nd},a'},\gamma}.\label{eq:opt-prb-id-2}
\end{align} 

\Cref{eq:opt-prb-id-1} and \cref{eq:opt-prb-id-2} follow from \cref{lem:opt-prob-id}. \Cref{eq:leq-most-apply} follows by applying \cref{lem:opt-prob-superior} with $A\defeq F_{\text{nd},a'}(\gamma),B'\defeq F_\text{sub}(\gamma),B\defeq F_{a}(\gamma),C\defeq\F(s,\gamma),Z\defeq F_Z(\gamma)$ which satisfies $\ND{C}=\Fnd(s,\gamma)\subseteq F_Z(\gamma)\subseteq \F(s,\gamma)=C$, and involution $\phi$ which satisfies $\phi\cdot F^*(\gamma)=\phi\cdot \prn{Z\setminus\prn{B\setminus B'}}=Z\setminus\prn{B\setminus B'}=F^*(\gamma)$. 

Suppose $\Fnd(s)\cap \prn{F_a\setminus F_\text{sub}}$ is non-empty. $0<\abs{\Fnd(s)\cap \prn{F_a\setminus F_\text{sub}}}=\abs{\Fnd(s,\gamma)\cap\prn{F_a(\gamma)\setminus F_\text{sub}(\gamma)}}\eqdef\abs{\ND{C}\cap\prn{B\setminus B'}}$ (with the first equality holding by \cref{cor:card-nd-visit}), and so $\ND{C}\cap\prn{B\setminus B'}$ is non-empty. We also have $B\defeq F_{a}(\gamma)\subseteq \F(s,\gamma)\eqdef C$. Then reapplying \cref{lem:opt-prob-superior}, \cref{eq:leq-most-apply} is strict for all $\Diid\in\DSetBCiid$, and $\optprob[\Dany]{F_{\text{nd},a'},\gamma}\not\geqMost[][\DSetAny] \optprob[\Dany]{F_a,\gamma}$.

\paragraph*{Case: $\gamma=1$, $\gamma=0$.} 
\begin{align}
    \optprob[\Dany]{F_{\text{nd},a'},1} &=\lim_{\gamma^*\to 1} \optprob[\Dany]{F_{\text{nd},a'},\gamma^*}\label{eq:lim-1-prob}\\ 
    &=\lim_{\gamma^*\to 1} \phelper{F_{\text{nd},a'}(\gamma^*)\geq \F(s,\gamma^*)}[\Dany]\label{eq:lim-1-opt-prb-id-1}\\
    &\leqMost[][\DSetAny] \lim_{\gamma^*\to 1} \phelper{F_{a}(\gamma^*)\geq \F(s,\gamma^*)}[\Dany]\label{eq:apply-lim-prob}\\
    &=\lim_{\gamma^*\to 1} \optprob[\Dany]{F_{a},\gamma^*}\label{eq:lim-1-opt-prb-id-2}\\ 
    &=\optprob[\Dany]{F_{a},1}.\label{eq:lim-1-prob-b}
\end{align}
\Cref{eq:lim-1-prob} and \cref{eq:lim-1-prob-b} hold by \cref{prop:opt-prob-converge}. \Cref{eq:lim-1-opt-prb-id-1} and \cref{eq:lim-1-opt-prb-id-2} follow by \cref{lem:opt-prob-id}. Applying \cref{lem:lim-prob-most} with $\gamma\defeq 1,I\defeq (0,1), F_A\defeq F_{\text{nd},a'}, F_B\defeq F_a, F_C\defeq \F(s)$, $F_Z$ as defined above, and involution $\phi$ (for which $\phi\cdot \prn{F_Z\setminus \prn{F_B\setminus \phi\cdot F_A}}=F_Z\setminus \prn{F_B\setminus \phi\cdot F_A}$),  we conclude that \cref{eq:apply-lim-prob} follows.

The $\gamma=0$ case proceeds similarly to $\gamma=1$.
\end{proof}

\begin{restatable}[Action optimality probability is a special case of visit distribution optimality probability]{lem}{agreeOptProb}\label{lem:agree-opt-prob} $\optprob[\Dany]{s,a,\gamma}=\optprob[\Dany]{\FRestrictAction{s}{a},\gamma}$.
\end{restatable}
\begin{proof}
Let $F_a\defeq \FRestrictAction{s}{a}$. For $\gamma \in (0,1)$,
\begin{align}
    \prob[\Dany]{s,a,\gamma}&\defeq\prob[R \sim \Dany]{\exists \pi^* \in \optPi: \pi^*(s)=a}\\
    &= \prob[\rf \sim \Dany]{\exists \fpi[\pi^*]{s} \in F_a: \fpi[\pi^*]{s}(\gamma)^\top \rf=\max_{\f \in \F(s)}\f(\gamma)^\top \rf}\label{eq:f-action-equiv}\\
    &=\optprob[\Dany]{F_a,\gamma}.\label{eq:f-action-equiv-final}
\end{align}
By \cref{lem:opt-pol-visit-iff}, if $\exists\pi^*\in\optPi:\pi^*(s)=a$, then it induces some optimal $\fpi[\pi^*]{s}\in F_a$. Conversely, if $\fpi[\pi^*]{s}\in F_a$ is optimal at $\gamma\in(0,1)$, then $\pi^*$ chooses optimal actions on the support of $\fpi[\pi^*]{s}(\gamma)$. Let $\pi'$ agree with $\pi^*$ on that support and let $\pi'$ take optimal actions at all other states. Then $\pi'\in\optPi$ and $\pi'(s)=a$. So \cref{eq:f-action-equiv} follows.

Suppose $\gamma=0$ or $\gamma=1$. Consider any sequence $\prn{\gamma_n}_{n=1}^\infty$ converging to $\gamma$, and let $\Dany$ induce probability measure $F$.
\begin{align}
    \optprob[\Dany]{F_a,\gamma}&\defeq \lim_{\gamma^*\to \gamma} \optprob[\Dany]{F_a,\gamma^*}\\
    &=\lim_{\gamma^*\to \gamma}\prob[R \sim \Dany]{\exists \pi^* \in \optPi[R,\gamma^*]: \pi^*(s)=a}\label{eq:inner-eq-act}\\
    &=\lim_{n\to\infty}\prob[R \sim \Dany]{\exists \pi^* \in \optPi[R,\gamma_n]: \pi^*(s)=a}\\
    &=\lim_{n\to\infty}\int_{\rewardSpace} \indic{\exists \pi^* \in \optPi[R,\gamma_n]: \pi^*(s)=a} \dF[R]\\
    &=\int_{\rewardSpace}\lim_{n\to\infty} \indic{\exists \pi^* \in \optPi[R,\gamma_n]: \pi^*(s)=a} \dF[R]\label{eq:dominated-convergence}\\
    &=\int_{\rewardSpace} \indic{\exists \pi^* \in \optPi[R,\gamma]: \pi^*(s)=a} \dF[R]\\
    &\eqdef \prob[\Dany]{s,a,\gamma}.
\end{align}

\Cref{eq:inner-eq-act} follows by \cref{eq:f-action-equiv-final}. for $\gamma^*\in[0,1]$, let $f_{\gamma^*}(R)\defeq \indic{\exists \pi^* \in \optPi[R,\gamma^*]: \pi^*(s)=a}$. For each $R\in\rewardSpace$, \cref{lem:opt-pol-shift-bound} exists $\gamma_x\approx \gamma$ such that for all intermediate $\gamma_x'$ between $\gamma_x$ and $\gamma$, $ \optPi[R,\gamma_x']=\optPi[R,\gamma]$. Since $\gamma_n \to \gamma$, this means that $\prn{f_{\gamma_n}}_{n=1}^\infty$ converges pointwise to $f_\gamma$. Furthermore, $\forall n \in \mathbb{N}, R\in\rewardSpace: \abs{f_{\gamma_n}(R)}\leq 1$ by definition. Therefore, \cref{eq:dominated-convergence} follows by Lebesgue's dominated convergence theorem.
\end{proof}

\graphOptions*
\begin{proof}
Note that by \cref{def:visit}, $F_{a'}(0)=\set{\unitvec}=F_a(0)$. Since $\phi\cdot F_{a'}\subseteq F_a$, in particular we have $\phi\cdot F_{a'}(0)=\set{\permute\unitvec}\subseteq \set{\unitvec}=F_a(0)$, and so $\phi(s)=s$.

\textbf{\Cref{item:power-options}.}
For state probability distribution $\Delta_s\in \Delta(\St)$, let $F^*_{\Delta_s}\defeq \set{\E{s'\sim \Delta_s}{\fpi{s'}}\mid \pi\in\Pi}$. Unless otherwise stated, we treat $\gamma$ as a variable in this item; we apply element-wise vector addition, constant multiplication, and variable multiplication via the conventions outlined in \cref{def:aff-transf-short}.
\begin{align}
    F_{a'} &= \set{\unitvec + \gamma \E{s_{a'}\sim T(s,a')}{\fpi{s_{a'}}}\mid \pi\in\Pi: \pi(s)=a'}\label{eq:visit-defn}\\
    &= \set{\unitvec + \gamma \E{s_{a'}\sim T(s,a')}{\fpi{s_{a'}}}\mid \pi\in\Pi}\label{eq:remove-action-constraint}\\
    &= \unitvec+\gamma F^*_{T(s,a')}.\label{eq:Fstar-contain}
\end{align}

\Cref{eq:visit-defn} follows by \cref{def:visit}, since each $\f\in\F(s)$ has an initial term of $\unitvec$. \Cref{eq:remove-action-constraint} follows because $s\not\in \reach{s,a'}$, and so for all $s_{a'}\in \supp[T(s,a')]$, $\fpi{s_{a'}}$ is unaffected by the choice of action $\pi(s)$. Note that similar reasoning implies that $F_a\subseteq \unitvec+\gamma F^*_{T(s,a)}$ (because \cref{eq:remove-action-constraint} is a containment relation in general).

Since $F_{a'}=\unitvec+\gamma F^*_{T(s,a')}$, if $F_a$ contains a copy of $F_{a'}$ via $\phi$, then $F^*_{T(s,a)}$ contains a copy of $F^*_{T(s,a')}$ via $\phi$. Then $\phi\cdot \ND{F^*_{T(s,a')}}\subseteq \phi\cdot F^*_{T(s,a')}\subseteq F^*_{T(s,a)}$, and so $F^*_{T(s,a)}$ contains a copy of $\ND{F^*_{T(s,a')}}$. Then apply \cref{lem:more-power-prefix} with $\Delta_1\defeq T(s,a')$ and $\Delta_2\defeq T(s,a)$ to conclude that $\forall\gamma\in[0,1]:\E{s_{a'}\sim T(s,a')}{\pwr[s_{a'},\gamma][\Dbd]}\leqMost[][\DSetBd] \E{s_{a}\sim T(s,a)}{\pwr[s_a,\gamma][\Dbd]}$. 

Suppose $\Fnd(s)\cap \prn{F_a\setminus \phi\cdot F_{a'}}$ is non-empty. To apply the second condition of \cref{lem:more-power-prefix}, we want to demonstrate that $\ND{F^*_{T(s,a)}}\setminus \phi\cdot \ND{F^*_{T(s,a')}}$ is also non-empty.

First consider $\f\in\Fnd(s)\cap F_a$. Because $F_a\subseteq \unitvec+\gamma F^*_{T(s,a)}$, we have that $\gamma\inv(\f - \unitvec)\in F^*_{T(s,a)}$. Because $\f \in \Fnd(s)$, by \cref{def:nd}, $\exists \rf \in \rewardVS, \gamma_x\in (0,1)$ such that
\begin{align}
    \f(\gamma_x)^\top\rf  &> \max_{\f'\in\F(s)\setminus \set{\f}}\f'(\gamma_x)^\top \rf.\label{eq:f-nd-def}
\end{align}
Then since $\gamma_x\in (0,1)$,
\begin{align}
    \gamma_x\inv(\f(\gamma_x)-\unitvec)^\top\rf  &> \max_{\f'\in\F(s)\setminus \set{\f}}\gamma_x\inv(\f'(\gamma_x)-\unitvec)^\top \rf\\
    &=\max_{\f'\in\gamma_x\inv\prn{(\F(s)\setminus \set{\f})-\unitvec}}\f'(\gamma_x)^\top \rf\\
    &\geq \max_{\f'\in \gamma_x\inv\prn{(F_a\setminus \set{\f})-\unitvec}}\f'(\gamma_x)^\top \rf\label{eq:Fact-contain}\\
    &= \max_{\f'\in F^*_{T(s,a)}\setminus \set{\gamma_x\inv(\f-\unitvec)}}\f'(\gamma_x)^\top \rf.\label{eq:FTsa}
\end{align}

\Cref{eq:Fact-contain} holds because $F_a\subseteq \F(s)$. By assumption, action $a$ is optimal for $\rf$ at state $s$ and at discount rate $\gamma_x$. \Cref{eq:remove-action-constraint} shows that $F^*_{T(s,a)}$ potentially allows the agent a non-stationary policy choice at $s$, but  non-stationary policies cannot increase optimal value \citep{puterman_markov_2014}. Therefore, \cref{eq:FTsa} holds.

We assumed that $\gamma\inv (\f-\unitvec)\in \gamma\inv(\Fnd(s)-\unitvec)$. Furthermore, since we just showed that $\gamma\inv (\f-\unitvec)\in F^*_{T(s,a)}$ is strictly optimal over the other elements of $F^*_{T(s,a)}$ for reward function $\rf$ at discount rate $\gamma_x\in(0,1)$, we conclude that it is an element of $\ND{F^*_{T(s,a)}}$ by \cref{def:nd-vec-func}. Then we conclude that $\gamma\inv(\Fnd(s)-\unitvec)\cap F^*_{T(s,a)}\subseteq \ND{F^*_{T(s,a)}}$. 

We now show that $\ND{F^*_{T(s,a)}}\setminus \phi\cdot \ND{F^*_{T(s,a')}}$ is non-empty. 
\begin{align}
    0&<\abs{\Fnd(s)\cap \prn{F_a\setminus \phi\cdot F_{a'}}}\label{eq:nonempty-fnd-cap}\\
    &=\abs{\gamma\inv\prn{\Fnd(s)\cap \prn{F_a\setminus \phi\cdot F_{a'}}-\unitvec}}\label{eq:card-preserve}\\
    &\leq\abs{\gamma\inv\prn{\Fnd(s)-\unitvec}\cap \prn{F^*_{T(s,a)}\setminus\phi\cdot F^*_{T(s,a')}}}\label{eq:Fstar-convert}\\
    &=\abs{\prn{\gamma\inv\prn{\Fnd(s)-\unitvec}\cap F^*_{T(s,a)}}\setminus\phi\cdot F^*_{T(s,a')}}\\
    &\leq\abs{\ND{F^*_{T(s,a)}}\setminus\phi\cdot F^*_{T(s,a')}}\label{eq:contain-nd-fstar}\\
    &\leq\abs{\ND{F^*_{T(s,a)}}\setminus \phi\cdot \ND{F^*_{T(s,a')}}}.\label{eq:nd-containment-trivial}
\end{align}
\Cref{eq:nonempty-fnd-cap} follows by the assumption that $\Fnd(s)\cap \prn{F_a\setminus \phi\cdot F_{a'}}$ is non-empty. Let $\f,\f'\in \Fnd(s)\cap \prn{F_a\setminus \phi\cdot F_{a'}}$ be distinct. Then we must have that for some $\gamma_x\in (0,1)$, $\f(\gamma_x)\neq \f'(\gamma_x)$. This holds iff $\gamma_x\inv(\f(\gamma_x)-\unitvec)\neq \gamma_x\inv(\f'(\gamma_x)-\unitvec)$, and so \cref{eq:card-preserve} holds.

\Cref{eq:Fstar-convert} holds because $F_a\subseteq \unitvec+\gamma F^*_{T(s,a)}$ and $F_a'= \unitvec+\gamma F^*_{T(s,a')}$ by \cref{eq:Fstar-contain}. \Cref{eq:contain-nd-fstar} holds because we showed above that $\gamma\inv(\Fnd(s)-\unitvec)\cap F^*_{T(s,a)}\subseteq \ND{F^*_{T(s,a)}}$. \Cref{eq:nd-containment-trivial} holds because $\ND{F^*_{T(s,a')}}\subseteq F^*_{T(s,a')}$ by \cref{def:nd-vec-func}. 

Therefore, $\ND{F^*_{T(s,a)}}\setminus \phi\cdot \ND{F^*_{T(s,a')}}$ is non-empty, and so apply the second condition of \cref{lem:more-power-prefix} to conclude that for all $\Diid\in\DSetBCiid$, $\forall\gamma\in(0,1):\E{s_{a'}\sim T(s,a')}{\pwr[s_{a'},\gamma][\Diid]}< \E{s_{a}\sim T(s,a)}{\pwr[s_a,\gamma][\Diid]}$, and that $\forall\gamma\in(0,1):\E{s_{a'}\sim T(s,a')}{\pwr[s_{a'},\gamma][\Dbd]}\not \geqMost[][\DSetBd] \E{s_{a}\sim T(s,a)}{\pwr[s_a,\gamma][\Dbd]}$.

\textbf{\Cref{item:opt-prob-options}.} 
Let $\phi'(s_x)\defeq \phi(s_x)$ when $s_x\in\reach{s,a'}\cup\reach{s,a}$, and equal $s_x$ otherwise. Since $\phi$ is an involution, so is $\phi'$. 
\begin{align}
    \phi'\cdot F_{a'}&\defeq \set{\permute[\phi']\prn{\unitvec + \gamma\E{s_{a'}\sim T(s,a')}{ \fpi{s_{a'}}}}\mid \pi\in \Pi, \pi(s)=a'}\\
    &= \set{\unitvec+\gamma\E{s_{a'}\sim T(s,a')}{\permute[\phi'] \fpi{s_{a'}}}\mid \pi\in \Pi, \pi(s)=a'}\label{eq:fix-unitvec}\\
    &= \set{\permute\unitvec+ \gamma\E{s_{a'}\sim T(s,a')}{\permute[\phi] \fpi{s_{a'}}}\mid \pi\in \Pi, \pi(s)=a'}\label{eq:phi-prime-eq}\\
    &\eqdef \phi\cdot F_{a'}\\
    &\subseteq F_a.\label{eq:sim-subtracted}
\end{align}

\Cref{eq:fix-unitvec} follows because if $s\in\reach{s,a'}\cup\reach{s,a}$, then we already showed that $\phi$ fixes $s$. Otherwise, $\phi'(s)=s$ by definition. \Cref{eq:phi-prime-eq} follows by the definition of $\phi'$ on $\reach{s,a'}\cup\reach{s,a}$ and because $\unitvec=\permute\unitvec$. Next, we assumed that $\phi\cdot F_{a'}\subseteq F_a$, and so \cref{eq:sim-subtracted} holds.

Therefore, $F_a$ contains a copy of $F_{a'}$ via $\phi'$ fixing all $s_x\not\in\reach{s,a'}\cup\reach{s,a}$. Therefore, $F_a$ contains a copy of $F_{\text{nd},a'}\defeq \Fnd(s)\cap F_{a'}$ via the same $\phi'$. Then apply \cref{lem:opt-prob-bottleneck} with $s'\defeq s$ to conclude that $\forall \gamma\in[0,1]:\optprob[\Dany]{F_{a'},\gamma}\leqMost[][\DSetAny] \optprob[\Dany]{F_a,\gamma}$. By \cref{lem:agree-opt-prob}, $\optprob[\Dany]{s,a',\gamma}=\optprob[\Dany]{F_{a'},\gamma}$ and $\optprob[\Dany]{s,a,\gamma}=\optprob[\Dany]{F_a,\gamma}$. Therefore, $\forall \gamma\in[0,1]:\optprob[\Dany]{s,a',\gamma}\leqMost[][\DSetAny] \optprob[\Dany]{s,a,\gamma}$.

If $\Fnd(s)\cap \prn{F_a\setminus \phi\cdot F_{a'}}$ is non-empty, then apply the second condition of \cref{lem:opt-prob-bottleneck} to conclude that for all $\gamma\in(0,1)$, the inequality is strict for all $\Diid\in\DSetBCiid$, and $\optprob[\Dany]{s,a',\gamma}\not\geqMost[][\DSetAny] \optprob[\Dany]{s,a,\gamma}$.
\end{proof}

\subsubsection{When \texorpdfstring{$\gamma=1$}{reward is undiscounted}, optimal policies tend to navigate towards ``larger'' sets of cycles} \label{sec:preservation}

\begin{restatable}[$\pwrNoDist$ identity when $\gamma=1$]{lem}{gammaOnePower}\label{lem:gamma-1-power}
\begin{equation}
\pwr[s,1][\Dbd]=\E{\rf \sim \Dbd}{\max_{\dbf \in \RSD} \dbf^\top \rf}=\E{\rf \sim \Dbd}{\max_{\dbf \in \RSDnd} \dbf^\top \rf}.
\end{equation}
\end{restatable}
\begin{proof}
\begin{align}
    \pwr[s,1][\Dbd]&= \E{\rf\sim\Dbd}{\max_{\fpi{s}\in \F(s)} \lim_{\gamma\to 1} \frac{1-\gamma}{\gamma}\prn{\fpi{s}(\gamma)-\unitvec}^\top \rf} \label{eq:pwr-lim-rsd}\\
 &=\E{\rf \sim \Dbd}{\max_{\dbf \in \RSD} \dbf^\top \rf}\label{eq:def-rsd}\\
 &=\E{\rf \sim \Dbd}{\max_{\dbf \in \RSDnd} \dbf^\top \rf}.\label{eq:nd-restrict-rsd}
\end{align}
\Cref{eq:pwr-lim-rsd} follows by \cref{lem:pwr-limit}. \Cref{eq:def-rsd} follows by the definition of $\RSD$ (\cref{def:rsd}). \Cref{eq:nd-restrict-rsd} follows because for all $\rf \in \rewardVS$, \cref{cor:nd-func-indif} shows that $\max_{\dbf \in \RSD} \dbf^\top \rf=\max_{\dbf \in \ND{\RSD}} \dbf^\top \rf \eqdef\max_{\dbf \in \RSDnd} \dbf^\top \rf$.
\end{proof}

\RSDSimPower* 
\begin{proof} Suppose $\RSDnd[s']$ is similar to $D\subseteq\RSD$ via involution $\phi$. 
\begin{align}
    \pwr[s',1]&=\E{\rf \sim \Dbd}{\max_{\dbf \in \RSDnd[s']} \dbf^\top \rf}\label{eq:pwr-id-rsd}\\
    &\leqMost[][\DSetBd]\E{\rf \sim \Dbd}{\max_{\dbf \in \RSDnd} \dbf^\top \rf}\label{eq:rsd-func-incr}
    \\
    &=\pwr[s,1][\Dbd]\label{eq:pwr-id-rsd-2}
\end{align} 
\Cref{eq:pwr-id-rsd} and \cref{eq:pwr-id-rsd-2} follow from \cref{lem:gamma-1-power}. By applying \cref{lem:expect-superior} with $A\defeq \RSD[s'],B'\defeq D, B\defeq\RSD$ and $g$ the identity function, \cref{eq:rsd-func-incr} follows. 

Suppose $\RSDnd\setminus D$ is non-empty. By the same result, \cref{eq:rsd-func-incr} is a strict inequality for all $\Diid\in\DSetBCiid$, and we conclude that $\pwr[s',1][\Dbd]\not \geqMost[][\DSetBd] \pwr[s,1][\Dbd]$. 
\end{proof}

\rsdIC*
\begin{proof}
Let $D_\text{sub}\defeq\phi\cdot D'$, where $D_\text{sub}\subseteq D$ by assumption. Let $X\defeq \set{s_i\in \St \mid \max_{\dbf \in D'\cup D} \dbf^\top \unitvec[s_i]>0}$. Define
\begin{equation}
    \phi'(s_i)\defeq 
    \begin{cases}
    \phi(s_i) & \text{ if } s_i\in X\label{eq:phi-prime-def}\\
    s_i &\text{ else}.
    \end{cases}
\end{equation}

Since $\phi$ is an involution, $\phi'$ is also an involution. Furthermore, by the definition of $X$, $\phi'\cdot D'=D_\text{sub}$ and $\phi'\cdot D_\text{sub}=D'$ (because we assumed that both equalities hold for $\phi$). 

Let $D^*\defeq D'\cup D_\text{sub} \cup \prn{\RSDnd\setminus (D' \cup D)}$.
\begin{align}
    \phi'\cdot D^*&\defeq \phi'\cdot \prn{D'\cup D_\text{sub} \cup \prn{\RSDnd\setminus (D' \cup D)}} \\
    &=\prn{\phi'\cdot D'}\cup \prn{\phi'\cdot D_\text{sub}} \cup \phi'\cdot\prn{\RSDnd\setminus (D' \cup D)} \\
    &= D_\text{sub} \cup D' \cup \prn{\RSDnd\setminus (D' \cup D)}\label{eq:sim-phi-prime-rsd}\\
    &\eqdef D^*.\label{eq:fixing-rsd-perm}
\end{align}

In \cref{eq:sim-phi-prime-rsd}, we know that $\phi'\cdot D'=D_\text{sub}$ and $\phi'\cdot D_\text{sub}=D'$. We just need to show that $\phi'\cdot \prn{\RSDnd\setminus (D' \cup D)}=\RSDnd\setminus (D' \cup D)$. 

Suppose $\exists s_i\in X, \dbf' \in \RSDnd\setminus (D' \cup D):\dbf'^\top \unitvec[s_i]>0$. By the definition of $X$, $\exists \dbf \in D' \cup D:\dbf^\top \unitvec[s_i]>0$. Then 
\begin{align}
    \dbf^\top \dbf'&=\sum_{j=1}^{\abs{\St}} \dbf^\top (\dbf'\odot\unitvec[s_j])\label{eq:dot-had}\\
    &\geq \dbf^\top (\dbf'\odot\unitvec[s_i])\label{eq:non-neg-rsd-entry}\\
    &=\dbf^\top \prn{(\dbf'^\top \unitvec[s_i])\unitvec[s_i]}\\
    &= (\dbf'^\top \unitvec[s_i]) \cdot (\dbf^\top\unitvec[s_i])\\
    &>0.\label{eq:pos-rsd-agree}
\end{align}

\Cref{eq:dot-had} follows from the definitions of the dot and Hadamard products. \Cref{eq:non-neg-rsd-entry} follows because $\dbf$ and $\dbf'$ have non-negative entries. \Cref{eq:pos-rsd-agree} follows because $\dbf^\top \unitvec[s_i]$ and $\dbf'^\top \unitvec[s_i]$ are both positive. But \cref{eq:pos-rsd-agree} shows that $\dbf^\top\dbf'>0$, contradicting our assumption that $\dbf$ and $\dbf'$ are orthogonal. 

Therefore, such an $s_i$ cannot exist, and $X'\defeq\set{s_i' \in \St \mid \max_{\dbf' \in \RSDnd\setminus (D' \cup D)} \dbf'^\top \unitvec[s_i]>0}\subseteq (\St \setminus X)$. By \cref{eq:phi-prime-def}, $\forall s_i'\in X':\phi'(s_i')=s_i'$. Thus, $\phi'\cdot \prn{\RSDnd\setminus (D' \cup D)}=\RSDnd\setminus (D' \cup D)$, and \cref{eq:sim-phi-prime-rsd} follows. We conclude that $\phi'\cdot D^*=D^*$.

Consider $Z\defeq \prn{\RSDnd\setminus (D' \cup D)} \cup D \cup D'$. First, $Z\subseteq \RSD$ by definition. Second, $\RSDnd = \RSDnd\setminus (D' \cup D) \cup (\RSDnd\cap D') \cup (\RSDnd \cap D) \subseteq Z$. Note that $D^*=Z\setminus (D \setminus D_\text{sub})$.
\begin{align} 
\avgprob[\Dany]{D'}&= \phelper{D'\geq \RSD}[\Dany]\\ 
&\leqMost[][\DSetAny] \phelper{D\geq \RSD}[\Dany]\label{eq:leq-most-1}\\
&= \avgprob[\Dany]{D}.
\end{align} 

Since $\phi\cdot D'\subseteq D$ and $\ND{D'}\subseteq D'$, $\phi\cdot \ND{D'}\subseteq D$. Then \cref{eq:leq-most-1} holds by applying \cref{lem:opt-prob-superior} with $A\defeq D', B'\defeq D_\text{sub}, B\defeq D, C\defeq \RSD$, and the previously defined $Z$ which we showed satisfies $\ND{C}\subseteq Z\subseteq C$. Furthermore, involution $\phi'$ satisfies $\phi'\cdot  B^* = \phi'\cdot \prn{Z\setminus (B \setminus B')}=Z\setminus (B \setminus B') = B^*$ by \cref{eq:fixing-rsd-perm}.

When $\RSDnd\cap\prn{D\setminus D_\text{sub}}$ is non-empty, since $B'\subseteq C$ by assumption, \cref{lem:opt-prob-superior} also shows that \cref{eq:leq-most-1} is strict for all $\Diid\in\DSetBCiid$, and that $\avgprob[\Dany]{D'}\not\geqMost[][\DSetAny] \avgprob[\Dany]{D}$.
\end{proof} 

\begin{restatable}[{\rsd[R]} properties]{prop}{rsdProp}\label{prop:rsd-properties}
Let $\dbf\in\RSD$. $\dbf$ is element-wise non-negative and $\lone{\dbf}=1$.
\end{restatable}
\begin{proof}
$\dbf$ has non-negative elements because it equals the limit of $\lim_{\gamma\to1}(1-\gamma)\f(\gamma)$, whose elements are non-negative by \cref{prop:visit-dist-prop} \cref{item:mono-increase}.
\begin{align}
    \lone{\dbf}&=\lone{\lim_{\gamma\to1}(1-\gamma)\f(\gamma)}\label{eq:lone-rsd}\\
    &=\lim_{\gamma\to 1} (1-\gamma)\lone{\f(\gamma)}\label{eq:lone-norm-rsd}\\
    &= 1.\label{eq:lone-norm-rsd-1}
\end{align}
\Cref{eq:lone-rsd} follows because the definition of {\rsd}s (\cref{def:rsd}) ensures that $\exists \f\in\F(s):\lim_{\gamma\to1}(1-\gamma)\f(\gamma)=\dbf$. \Cref{eq:lone-norm-rsd} follows because $\lone{\cdot}$ is a continuous function. \Cref{eq:lone-norm-rsd-1} follows because $\lone{\f(\gamma)}=\geom$ by \cref{prop:visit-dist-prop} \cref{item:lone-visit}.
\end{proof}

\begin{restatable}[When reachable with probability 1, {\stateEnd}s induce non-dominated {\rsd}s]{lem}{termNDrsd}\label{lem:term-nd-rsd}
If $\unitvec[s']\in\RSD$, then $\unitvec[s']\in\RSDnd$.
\end{restatable}
\begin{proof}
If $\dbf\in\RSD$ is distinct from $\unitvec[s']$, then $\lone{\dbf}=1$ and $\dbf$ has non-negative entries by \cref{prop:rsd-properties}. Since $\dbf$ is distinct from $\unitvec[s']$, then its entry for index $s'$ must be strictly less than 1: $\dbf^\top\unitvec[s']<1=\unitvec[s']^\top\unitvec[s']$. Therefore, $\unitvec[s']\in\RSD$ is strictly optimal for the \emph{reward function} $\rf\defeq \unitvec[s']$, and so $\unitvec[s']\in\RSDnd$.
\end{proof}

\avgAvoidTerminal*
\begin{proof}
Suppose $\unitvec[s_x],\unitvec[s']\in\RSD$ are distinct. Let $\phi\defeq (s_x \,\,\, s'),D'\defeq \set{\unitvec[s_x]},D\defeq \RSD\setminus\set{\unitvec[s_x]}$. $\phi\cdot D'=\set{\unitvec[s']}\subseteq \RSD\setminus\set{\unitvec[s_x]}\eqdef D$ since $s_x\neq s'$. $D'\cup D=\RSD$ and $\RSDnd\setminus(D'\cup D)=\RSDnd\setminus\RSD=\emptyset$ trivially have pairwise orthogonal vector elements. Then apply \cref{rsdIC} to conclude that $\avgprob[\Dany]{\{\unitvec[s_x]\}}\leqMost[][\DSetAny] \avgprob[\Dany]{\RSD\setminus\{\unitvec[s_x]\}}$.

Suppose there exists another $\unitvec[s'']\in\RSD$. By \cref{lem:term-nd-rsd}, $\unitvec[s'']\in\RSDnd$. Furthermore, since $s''\not \in \set{s',s_x}$ , $\unitvec[s'']\in \prn{\RSD\setminus\set{\unitvec[s_x]}}\setminus \set{\unitvec[s']}=D\setminus \phi\cdot D'$. Therefore, $\unitvec[s'']\in\RSDnd\cap\prn{D\setminus \phi \cdot D'}$. Then apply the second condition of \cref{rsdIC} to conclude that $\avgprob[\Dany]{\{\unitvec[s_x]\}}\not\geqMost[][\DSetBd] \avgprob[\Dany]{\RSD\setminus\{\unitvec[s_x]\}}$. 
\end{proof}
\end{appendices}
\end{document}